\newtheorem{assumption}{Assumption}
\newtheorem{theorem}{Theorem}
\newtheorem{definition}{Definition}
\NewDocumentCommand{\LT}{O{t_1} O{x_1}}{%
  \ensuremath{\Psi_\theta^{\scriptstyle (#1, #2)}}%
}
\NewDocumentCommand{\LTN}{O{t_1} O{x_1}}{%
  \ensuremath{\Psi_{\theta,N}^{\scriptstyle (#1, #2)}}%
}
\NewDocumentCommand{\tildeLTN}{O{t_1} O{x_1}}{%
  \ensuremath{\widetilde\Psi_{\theta,N}^{\scriptstyle (#1, #2)}}%
}
\newcommand{\R}{\mathbb{R}}
\newcommand{\e}{{\text{e}}}
\definecolor{matlab1}{RGB}{0,    114,  189} % blue Matlab
\definecolor{matlab2}{RGB}{217,   83,   25}
\definecolor{matlab3}{RGB}{237,  177,   32}
\definecolor{matlab4}{RGB}{126,   47,  142}
\definecolor{matlab5}{RGB}{119,  172,   48}
\definecolor{matlab6}{RGB}{77,   190,  238}
\definecolor{matlab7}{RGB}{162,   20,   47}
\definecolor{python1}{HTML}{1f77b4}
\definecolor{python2}{HTML}{ff7f0e}
\definecolor{python3}{HTML}{2ca02c}
\definecolor{python4}{HTML}{d62728}
\definecolor{python5}{HTML}{9467bd}
\definecolor{python6}{HTML}{8c564b}
\definecolor{python7}{HTML}{e377c2}
\definecolor{python8}{HTML}{7f7f7f}
\definecolor{python9}{HTML}{bcbd22}
\definecolor{python10}{HTML}{17becf}
\definecolor{designcolor}{HTML}{1f77b4}
\DeclareMathOperator*{\argmin}{\arg\min}
\newcommand{\keywords}[1]{\vspace{2mm}\noindent\textbf{Keywords: } #1}
\title{Latent Twins}
\author{
  Matthias Chung\thanks{Department of Mathematics, Emory University; \texttt{\{matthias.chung, max.collins2\}@emory.edu}}%
  \and Deepanshu Verma\thanks{School of Mathematical and Statistical Sciences, Clemson University; \texttt{dverma@clemson.edu}}%
  \and Max Collins\footnotemark[1]
  %\thanks{Department of Mathematics, Emory University; \texttt{max.collins2@emory.edu}}%
  \and Amit N. Subrahmanya \thanks{Argonne National Laboratory \texttt{\{vsastry, vhebbur, famitnagesh\}@anl.gov}}  %
  \and Varuni Katti Sastry \footnotemark[3] %
  \and Vishwas Rao  \footnotemark[3]
  %\thanks{Argonne National Laboratory, Advanced Computing; \texttt{vhebbur@anl.gov}}
}
\date{}
\begin{document}

\maketitle

\epigraph{``Nature uses only the longest threads to weave her patterns, so each small piece of her fabric reveals the organization of the entire tapestry.''}{\textit{Richard P. Feynman}}

\begin{abstract}
Over the past decade, scientific machine learning has transformed the development of mathematical and computational frameworks for analyzing, modeling, and predicting complex systems. From inverse problems to numerical PDEs, dynamical systems, and model reduction, these advances have pushed the boundaries of what can be simulated. Yet they have often progressed in parallel, with representation learning and algorithmic solution methods evolving largely as separate pipelines. With \emph{Latent Twins}, we propose a unifying mathematical framework that creates a hidden surrogate in latent space for the underlying equations. Whereas digital twins mirror physical systems in the digital world, Latent Twins mirror mathematical systems in a learned latent space governed by operators. Through this lens, classical modeling, inversion, model reduction, and operator approximation all emerge as special cases of a single principle. We establish the fundamental approximation properties of Latent Twins for both ODEs and PDEs and demonstrate the framework across three representative settings: (i) canonical ODEs, capturing diverse dynamical regimes; (ii) a PDE benchmark using the shallow-water equations, contrasting Latent Twin simulations with DeepONet and forecasts with a 4D-Var baseline; and (iii) a challenging real-data geopotential reanalysis dataset, reconstructing and forecasting from sparse, noisy observations. Latent Twins provide a compact, interpretable surrogate for solution operators that evaluate across arbitrary time gaps in a single-shot, while remaining compatible with scientific pipelines such as assimilation, control, and uncertainty quantification. Looking forward, this framework offers scalable, theory-grounded surrogates that bridge data-driven representation learning and classical scientific modeling across disciplines.
\end{abstract}

\keywords{Latent Twins; operator learning; scientific machine learning; 
autoencoders; reduced-order modeling;  data assimilation;  ODEs and PDEs;}

\section{Introduction}

Neural networks and generative AI are transforming a broad spectrum of domains, including science and engineering research, by introducing data-driven methodologies that redefine traditional approaches including model-based simulation, inference, prediction and forecasting. This integration is opening new pathways for scientific approaches to complex systems, advancing computationally tractable and scientifically rigorous methodologies.

A key observation in scientific computing is that intrinsic low-dimensional structures are present in high-dimensional scientific data \cite{chung_good_2025}. For instance, acoustic and image data with their compact representation in the Fourier domain or within their image spectrum gave rise to compression techniques such as JPEG and MP3 \cite{wallace1991jpeg, brandenburg1999mp3,oppenheim1989dtsp}. In the same manner as data, mathematical models inherently have low-dimensional representation by construction. For instance, the Poisson equation $-\nabla \cdot (\mu \nabla u) = f$ with some given boundary conditions may result in complex solution, while its resulting dynamics are fully encoded by the single parameter $\mu$. In general, the characteristic of low-dimensionality is shared by both mathematical models and scientific data, and is often expressed through a few dominant modes or features, enabling significant dimensionality reduction and giving rise to more efficient data processing, analysis, and computation. Classical techniques like Principal Component Analysis (PCA) and Proper Orthogonal Decomposition (POD) have long exploited this property to simplify complex data or systems, particularly in dynamical systems \cite{brunton2022data}. However, these methods are inherently linear and may struggle to capture the nonlinear intricacies of modern scientific data. 

The emergence of autoencoders provides a powerful nonlinear alternative to these classical approaches. Autoencoder architectures aim to learn low-dimensional representations from data or from model simulations. By training a neural network that is forced to exhibit a small dimensional layer, typically referred to as the latent space, the network learns to compress the input into a lower-dimensional latent representation. This process is akin to identifying the underlying structure of the model or data, allowing for efficient storage and manipulation. Accordingly, autoencoders may be seen as data-driven variants of manifold learning in mathematics, uncovering the intrinsic low-dimensional geometry hidden in complex datasets.

This work introduces a general approach that leverages learnable compact representations of low-dimensional mathematical structures. By operating within dual hidden latent space, we can perform trainable manipulations while still recovering the behavior of the original system. We demonstrate the versatility of this framework beyond standard methods, extending its utility across scientific domains where data and models are intrinsically linked. In doing so, we provide a common foundation that connects data-driven findings with the interpretability and rigor of model-based science.

\begin{definition}[Latent twins]\label{def:latenttwin}
Let $(X,\|\cdot\|_X)$ and $(Y,\|\cdot\|_Y)$ be normed vector spaces, and let $F^{\to}\!:X\to Y$ and $F^{\gets}\!:Y\to X$ denote the (possibly unknown) forward and inverse operators of interest. We define the \emph{Latent Twins} of $F^{\to}$ and $F^{\gets}$ as parameterized surrogate operators
\[
f^{\to} := d_y \circ m^{\to} \circ e_x, 
\qquad \text{and} \qquad
f^{\gets} := d_x \circ m^{\gets} \circ e_y,
\]
where
\begin{itemize}
    \item $Z_x$ and $Z_y$ are latent spaces of dimensions $n_x \ll \dim(X)$ and $n_y \ll \dim(Y)$ respectively,
  \item $e_x\!:X\to Z_x$ and $e_y\!:Y\to Z_y$ are encoders into these
  latent spaces 
  ,
  \item $d_x\!:Z_x\to X$ and $d_y\!:Z_y\to Y$ are the corresponding decoders,
  \item $m^{\to}\!:Z_x\to Z_y$ and $m^{\gets}\!:Z_y\to Z_x$ are trainable latent mappings.
\end{itemize}
While the autoencoders are approximately identity mappings on $x$ and $y$, $
d_x\!\circ e_x \approx \operatorname{id}_{X}$ and $d_y\!\circ e_y \approx \operatorname{id}_{Y}$ the Latent Twins approximate $F^{\to}$ and $F^{\gets}$ on data manifolds in $X$ and $Y$.
\end{definition}

We emphasize that the encoders $e_x, e_y$, decoders $d_x, d_y$, and latent mappings $m^{\to}, m^{\gets}$ are all parameterized by neural network weights, collectively denoted by $\theta$. Consequently, the Latent Twin $f^{\to}$ and $f^{\gets}$ depend on $\theta$. For clarity of presentation, we suppress this dependence in the notation and only make it explicit when required for understanding. In contrast to many common data-driven surrogates, the specific autoencoder design of the Latent Twin ensures that the latent spaces serve only to represent the corresponding variables $x$ and $y$, while the latent mappings $m^{\to}$ and $m^{\gets}$ can be directly interpreted as latent-space counterparts of the original operators $F^{\to}$ and $F^{\gets}$.

An instructive analogy can be drawn with the concept of \emph{digital twins}—virtual representations of physical systems that integrate physics-based models and data-driven updates to enable predictive maintenance, optimization, and decision-making under uncertainty \cite{tao_digital_2018}. In a similar vein, we introduce the notion of Latent Twins (schematically illustrated in \Cref{fig:latentTwin}), where complex physical or mathematical processes are mirrored in an embedded latent space. By projecting high-dimensional models into such latent domains, operations can be performed more efficiently while retaining essential structure, thereby providing interpretable and predictive surrogates with significant computational and analytical advantages.

\begin{figure}
    \centering
      \begin{tikzpicture}

	\node[fill=python1!50, minimum width=2.5cm, minimum height=0.5cm] (x) at (0,0) {$x$};
    \draw[fill=python9!50,draw=none]
             ([yshift=-0.2cm,xshift=0.0cm]x.south west) -- ([yshift=-1.5cm,xshift=0.5cm]x.south west) -- ([yshift=-1.5cm,xshift=-0.5cm]x.south east) -- ([yshift=-0.2cm,xshift=0.0cm]x.south east) -- cycle node (e) at ([yshift=-0.75cm,xshift=0.0cm]x.south)
             {$\begin{matrix}\text{encoder}\\[-0.2ex] e_x\end{matrix}$};              
               
    \node[fill=python3!50, minimum width=1.5cm, minimum height=0.5cm] (zx) at (0,-2.3) {\footnotesize{latent $\tilde{z}_x$}};    
    \draw[fill=python4!50,draw=none]
             ([yshift=-0.2cm,xshift=0.0cm]zx.south west) -- ([yshift=-1.5cm,xshift=-0.5cm]zx.south west) -- ([yshift=-1.5cm,xshift=0.5cm]zx.south east) -- ([yshift=-0.2cm,xshift=0.0cm]zx.south east) -- cycle node (d) at ([yshift=-0.75cm,xshift=0.0cm]zx.south) 
             {$\begin{matrix}\text{decoder}\\[-0.2ex] d_x\end{matrix}$};               
    \node[fill=python1!50, minimum width=2.5cm, minimum height=0.5cm] (xt) at (0,-4.5) {$\tilde x$};      

    \node[fill=python2!50, minimum width=2.5cm, minimum height=0.5cm] (b) at (6,0) {$ y$};
    \draw[fill=python5!50,draw=none]
             ([yshift=-0.2cm,xshift=0.0cm]b.south west) -- ([yshift=-1.5cm,xshift=0.5cm]b.south west) -- ([yshift=-1.5cm,xshift=-0.5cm]b.south east) -- ([yshift=-0.2cm,xshift=0.0cm]b.south east) -- cycle node (e) at ([yshift=-0.75cm,xshift=0.0cm]b.south) 
             {$\begin{matrix}\text{encoder}\\[-0.2ex] e_y\end{matrix}$}; 
               
    \node[fill=python6!50, minimum width=1.5cm, minimum height=0.5cm] (zb) at (6,-2.3) {\footnotesize{latent $\tilde{z}_y$}};    
    \draw[fill=python9!50,draw=none]
             ([yshift=-0.2cm,xshift=0.0cm]zb.south west) -- ([yshift=-1.5cm,xshift=-0.5cm]zb.south west) -- ([yshift=-1.5cm,xshift=0.5cm]zb.south east) -- ([yshift=-0.2cm,xshift=0.0cm]zb.south east) -- cycle node (d) at ([yshift=-0.75cm,xshift=0.0cm]zb.south) 
             {$\begin{matrix}\text{decoder}\\[-0.2ex] d_y\end{matrix}$};  
    \node[fill=python2!50, minimum width=2.5cm, minimum height=0.5cm] (bt) at (6,-4.5) {$\tilde y$}; 

% mapping x -> b and b-> x
\begin{scope}[-latex,shorten >=9pt,shorten <=9pt,line width=5pt]

    \draw[matlab2!50!matlab3]  ([yshift=-0.2cm]zb.west) to ([yshift=-0.2cm]zx.east);
    \draw[matlab1!50!matlab2] ([yshift=0.2cm]zx.east) to ([yshift=0.2cm]zb.west);    
\end{scope}
\node[matlab1!50!matlab2] (m_forward) at (3.0,-1.7) {forward latent map $m^{\to}$};
\node[matlab2!50!matlab3] (m_inverse) at (3.0,-2.9) {inverse latent map $m^{\gets}$};

\begin{scope}[-latex,shorten >=9pt,shorten <=9pt,line width=5pt]
    \draw[matlab1!50!matlab2] ([yshift=-0.0cm]x.east) to ([yshift=-0.0cm]b.west);
    \draw[matlab2!50!matlab3] ([yshift=0.0cm]bt.west) to ([yshift=0.0cm]xt.east);
\end{scope}

\node[matlab1!50!matlab2] (f_forward) at (3.0,0.3) {forward $F^{\to}$};
\node[matlab2!50!matlab3] (f_inverse) at (3.5,-4.8) {inverse $F^{\gets}$};
\node[rotate=90,black] at (-2,-2.3) {$x$-autoencoder};
\node[rotate=270,black] at (7.7,-2.3) {$y$-autoencoder};
      
\end{tikzpicture}
    \caption{Schematic of the Latent Twin framework. Two autoencoders (for the $x$- and $y$-spaces) provide compact latent representations $\tilde z_x$ and $\tilde z_y$. Within this latent space, the forward mapping $F^{\to}$ and inverse mapping $F^{\gets}$ are approximated by surrogate models constructed as compositions of encoders, latent-to-latent mappings, and decoders. Concretely, the forward surrogate is defined as $f^{\to} = d_y \circ m^{\to} \circ e_x$, mapping inputs $x$ to outputs $\tilde y$ through the forward latent map $m^{\to}$, while the inverse surrogate is given by $f^{\gets} = d_x \circ m^{\gets} \circ e_y$, recovering inputs from outputs via the inverse latent map $m^{\gets}$. This structure highlights how Latent Twins emulate bidirectional operators in a low-dimensional space, combining the efficiency of latent representations with the expressiveness of trainable mappings.}
    \label{fig:latentTwin}
\end{figure}
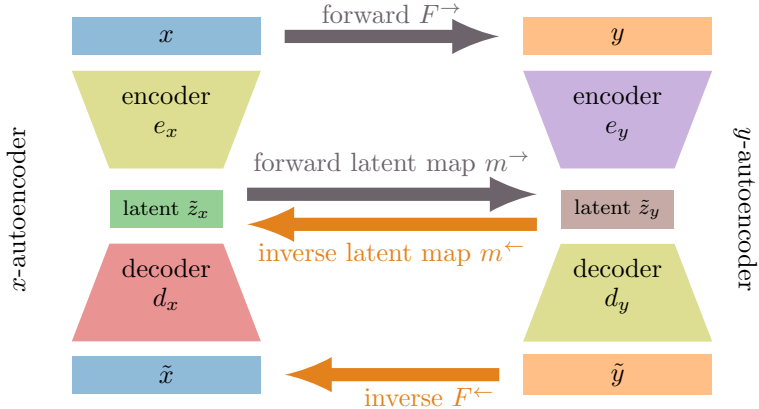

\paragraph{Our Contributions.} We introduce the Latent Twin framework, a unifying approach that bridges representation learning, dynamical systems, inverse problems, and operator learning. On the theoretical side, we establish approximation guarantees for Latent Twins in both ODE and PDE settings, carefully analyzing reconstruction and flow-map errors. This positions Latent Twins not as yet another specialized method, but as a general principle for learning solution operators in latent space. On the numerical side, we demonstrate the framework’s versatility across a range of settings: (i) ODE benchmarks (harmonic oscillator, SIR, Lotka--Volterra, Lorenz--63), where Latent Twins are compared directly against established sequence models such as LSTMs; (ii) PDE-based prediction tasks, where Latent Twins are evaluated on the shallow-water equations and contrasted with standard data assimilation methods such as 4D-Var; and (iii) real-world geopotential height data, where the framework shows promising predictive capabilities. Importantly, Latent Twins unify ideas that have emerged in distinct communities---reduced-order modeling, Koopman operators, neural operators, data assimilation, and paired autoencoders---into a single coherent formulation. While related methods exist in each of these domains, our overarching contribution lies in articulating and analyzing the Latent Twin principle as the common thread that connects them. To acknowledge prior work and avoid overstating novelty, we provide an extended review of the relevant literature, situating Latent Twins within and across these fields. Although our focus here is on dynamical systems, the framework naturally extends to operator inference and inverse problems; related work on paired autoencoders for inversion illustrates this broader scope, and further extensions to operator learning will be addressed in forthcoming work.

\paragraph{Structure of our Work.} \Cref{sec:background} reviews related ideas across model reduction, operator learning, Koopman operators, optimization, inverse problems, and data assimilation. \Cref{sec:latenttwins} formalizes Latent Twins for dynamical systems and develops approximation guarantees in \Cref{sub:theory} for ODEs (\Cref{sub:odes}) and PDEs (\Cref{sub:pdes}). \Cref{sec:numerics} reports numerical studies: (i) canonical ODEs, illustrating performance across different dynamical regimes; (ii) a PDE prediction study on the shallow water equations with a 4D-Var comparisons; and (iii) challenging forecasting geopotential height from  noisy observations. We conclude with an outlook on promising extensions of the Latent Twin approach in \Cref{sec:conclusion}. Additional material is provided in \Cref{app:swe-details}, while \Cref{sec:notation} summarizes the notation used throughout this work.

We conclude with an outlook on extensions in \Cref{sec:conclusion}, integration with scientific pipelines, and open questions, and provide various details in \Cref{app:swe-details}. For a quick reference \Cref{sec:notation} displays the notation used in this work.

\section{Background}\label{sec:background}

A recurring theme across scientific computing and data-driven modeling is the observation that complex physical and mathematical systems often admit effective low-dimensional structure. Classical methods such as Principal Component Analysis (PCA) \cite{jolliffe2002pca} have long exploited this property to project high-dimensional dynamics onto reduced subspaces, providing tractable surrogates while retaining interpretability. 

Proper Orthogonal Decomposition (POD) \cite{benner2017model,benner_survey_2015} plays a central role here, systematically extracting dominant modes from solution data to form reduced-order models (ROMs). Building on this philosophy, projection-based ROM formulations—including Galerkin, Petrov--Galerkin, and Reduced Basis methods \cite{carlberg2017galerkin,quarteroni2015reduced}—extend these ideas to PDEs and parametric systems, enabling substantial computational savings. While these linear approaches have been remarkably successful, they struggle when the underlying dynamics lie on nonlinear manifolds that cannot be well-approximated by a fixed linear subspace. This limitation has motivated nonlinear, data-driven approaches.

The advent of autoencoders provides a nonlinear and adaptive alternative to classical projection methods. By learning expressive encoders and decoders, autoencoders construct compact latent spaces tailored to the underlying structure of data or model outputs. This process not only supports efficient storage and reconstruction but also enables the discovery of hidden variables or features that may be inaccessible to handcrafted linear reductions. As a result, data-driven approaches based on autoencoder frameworks have increasingly been adopted across diverse fields, including model reduction, optimization, prior learning, forecasting and prediction, and inverse problems. In particular, the model reduction community has embraced autoencoders as efficient nonlinear extensions of classical reduction techniques, offering a fresh perspective on reduced-order data-driven modeling and opening new opportunities for scientific computing \cite{hesthaven2018non,lee2020model}.

\emph{Projection Approaches.} Projection methods also play a central role in optimization, where constraints or separable structures must be exploited to reduce complexity. Classical approaches such as projected gradient descent enforce feasibility by mapping iterates back onto the constraint set $\mathcal{C}$, $x_{k+1} = \Pi_{\mathcal{C}}(x_k - \eta \nabla f(x_k))$, while variable projection eliminates a set of variables analytically to focus computation on the remaining ones \cite{golub1973differentiation, chung2023variable}. More recently, data-driven methods have extended this projection philosophy: learned proximal operators approximate constraint projections \cite{amos2017optnet, meinhardt2017learning}, and algorithm unrolling recasts optimization routines as trainable neural networks \cite{gregor2010learning, mardani2018neural}. These ideas highlight a recurring pattern—optimization efficiency is achieved by alternating between compression into a reduced space and structured mappings within that space. Traditional projections are hand-crafted and problem-specific, whereas modern data-driven approaches learn both the representation and the mappings jointly. This same principle underlies the Latent Twin framework, where autoencoders establish the reduced latent space and the trainable maps $m^{\to}$ and $m^{\gets}$ serve as interpretable surrogates of the original operators.

\emph{Inverse Problems.} Modern inverse problems increasingly utilize learned priors or data-driven end-to-end approaches to solve ill-posed mappings from measurements to states \cite{arridge2019solving, jin2017deep, stuart_inverseproblems}. Among these, \emph{paired autoencoders} have emerged as a simple and effective paradigm: one trains \emph{parallel} encoders/decoders $(e_x,d_x)$ for the state space and $(e_y,d_y)$ for the measurement space, together with a learnable mapping between their latent variables \cite{chung2024paired, hart2025paired, piening2024paired, chung2025good}. This design separates \emph{representation} from \emph{inference}: the autoencoders learn low-dimensional manifolds that capture the geometry of states and observations, while a latent-to-latent map handles the cross-space correspondence. This perspective naturally extends to the latent-twin framework: parallel encoders and decoders anchor the latent manifolds, while trainable latent mappings act as interpretable surrogates of the forward and inverse operators. The result is a unified, end-to-end approach that not only supports inversion but also accommodates temporal evolution, thus broadening the scope beyond the standard inverse problem setting.

\emph{Operator Learning.} Operator learning has recently emerged as a powerful paradigm in scientific machine learning, focusing on approximating solution operators that map between infinite-dimensional function spaces. Landmark contributions include the Deep Operator Network (DeepONet) \cite{lu2021learning} and the Fourier Neural Operator (FNO) \cite{li2020fourier}, both of which demonstrated that neural networks can efficiently approximate nonlinear operators arising in PDEs. The framework has since been extended to physics-informed, multiscale, and probabilistic settings \cite{kovachki2023neural}. 

A closely related idea is the recently proposed Latent DeepONet (L-DeepONet)~\cite{Kontolati2024}, which combines autoencoders with DeepONet to perform operator learning in a reduced latent space. In L-DeepONet, an autoencoder is first trained to compress high-dimensional solution fields, after which a DeepONet maps inputs (such as parameters, initial, or boundary conditions) to latent codes that are then decoded back to the physical solution. This two-stage design has demonstrated improved efficiency and accuracy over vanilla DeepONet and FNO on large-scale PDE problems. Building on this approach, PI-Latent-NO \cite{karumuri2025physicsinformedlatentneuraloperator} addresses the data-hungry nature of L-DeepONet by incorporating physics-informed constraints directly into the training process. This framework employs two coupled DeepONets—a Latent-DeepONet for dimensionality reduction and a Reconstruction-DeepONet for mapping back to physical space—trained end-to-end with governing physics integrated throughout. While this represents a significant advance in reducing data requirements for latent operator learning, it remains focused on parametric PDE problems rather than temporal evolution.

Latent Twins naturally embed into this operator-learning perspective while offering distinctive features. In contrast to L-DeepONet's two-stage design and PI-Latent-NO's parametric focus, Latent Twins integrate representation learning and operator approximation into a single framework targeting temporal evolution. They (i) use autoencoders tailored to underlying dynamics, (ii) incorporate paired forward and inverse mappings, and (iii) approximate global solution operators directly, enabling transitions between arbitrary time points without recursive integration. This provides a structured, representation-driven approach to learning dynamical systems with theoretical guarantees and integration into scientific pipelines.

\emph{Data Assimilation.} Data assimilation (DA) fuses prior knowledge, model dynamics, and noisy, partial observations to estimate the evolving state of a physical system—classically through variational (3D/4D-Var) and sequential (Kalman/ensemble) methods \cite{kalman1960new, ghil1991data, kalnay2003atmospheric}. In large physical models, however, the curse of dimensionality forces compromises on data-driven approaches such as Ensemble Kalman Filters \cite{evensen2009data}. A recent trend mitigates this by learning low-dimensional representations in which assimilation and forecasting are cheaper and often more stable. Variational schemes have been adapted to operate in learned latent spaces—e.g., ``bi-reduced'' 3D-Var with attention-based CAEs \cite{mack2020attention} or VAE-based differentiable 3D-Var objectives whose static latent covariances induce flow-dependent structure in state space \cite{melinc_3d-var_2024}. Sequential approaches have similarly been adapted to latent spaces: Latent Assimilation executes filtering directly in the latent domain but relies on a separate surrogate (e.g., LSTM/analogs) to propagate dynamics \cite{paszynski_data_2021, amendola2021data, grooms2021analog}, with extensions to EnKF/ensemble smoothers using VAE pairs \cite{pasmans_ensemble_2025, canchumuni2019towards} and score-based latent priors for trajectory inference \cite{rozet2023score, si_latent-ensf_2024}. Our latent-twin viewpoint is compatible with—and complementary to—this line of work: autoencoders furnish task-adapted latent manifolds for the observation and state spaces, while the learned Latent Twin $f^{\to}$ serves as a lightweight, end-to-end forecast operator that can be embedded in either variational objectives (latent-space 4D-Var) or sequential updates (latent-space filters).

\emph{Koopman Operators.} The Latent Twin approach carries connections to the rich literature on approximating the Koopman operator. The Koopman operator provides a linear framework for analyzing nonlinear dynamical systems. Given a discrete-time dynamical system $x_{t+1} = \Phi(x_t)$, the Koopman operator $\mathscr{K}$ acts on scalar-valued observables $g\colon \mathbb{R}^n \to \mathbb{C}$ by composition with the dynamics, $\mathscr{K} g(x) = (g \circ \Phi)(x)$. While the state-space dynamics may be nonlinear, the Koopman operator is linear (though infinite-dimensional) in function space, enabling the use of spectral analysis and linear decomposition techniques for nonlinear systems~\cite{koopman1931hamiltonian, mezic2005spectral}. Practical methods approximate this operator in finite dimensions, with Dynamic Mode Decomposition (DMD) among the most widely used approaches \cite{tu2014dynamic, williams2015data, proctor2016dynamic}. Extensions of DMD have been successful in fluid dynamics, control, and signal processing, though their expressiveness is limited by fixed dictionaries of observables.

Recent work combines Koopman theory with deep learning. \emph{Koopman autoencoders} employ an encoder $e$, a linear latent evolution operator $\mathfrak{K}$, and a decoder $d$. The encoder maps system states $x_t$ to latent coordinates $z_t = e(x_t)$, which evolve approximately linearly as $z_{t+1} \approx \mathfrak{K} z_t$, and are decoded as $\hat{x}_{t+1} = d(z_{t+1})$. These architectures enforce both accurate reconstruction and approximate linearity in the latent space~\cite{lusch2018deep, takeishi2017learning, morton2018deep, si_latent-ensf_2024}. While Koopman autoencoders have demonstrated strong performance for nonlinear dynamics, they remain tied to discrete time-evolution, requiring recursive applications for long-term predictions. Latent Twins provide a complementary perspective: instead of approximating one-step dynamics, they directly learn global solution operators, mapping states between arbitrary time points $t_1$ and $t_2$. This formulation naturally accommodates continuous-time dynamics and PDE-governed systems, avoiding the need for recursive integration. In this sense, Latent Twins can be viewed as lightweight neural operators that retain the interpretability of paired latent spaces while providing a flexible surrogate for temporal evolution.

\emph{Latent-Space Dynamics.} Complementing these nonlinear reductions, another line of work learns the governing dynamics directly in a latent space. The Latent Space Dynamics Identification (LaSDI) framework exemplifies this idea: an autoencoder compresses high-dimensional states, a system-identification stage discovers latent ODEs, and the latent model is used for prediction~\cite{bonneville2024comprehensive}. Variants adapt this pipeline for robustness to noise, thermodynamic consistency, or active learning strategies to reduce data costs \cite{tran2024weak, park2024tlasdi, he2023glasdi, bonneville2024gplasdi}. These contributions sit naturally between classical ROM and operator learning: they retain the efficiency of reduced coordinates while shifting the modeling burden from full state space to latent dynamics.

\medskip
\noindent
The literature on autoencoder-based scientific machine learning is extensive. The references provided here are representative rather than exhaustive, chosen to highlight key threads across model reduction, optimization, inverse problems, operator learning, data assimilation, and latent-space dynamics, and to illustrate the broad range of connections to the Latent Twin framework.

\section{Latent Twins for Dynamical Systems} \label{sec:latenttwins}

As discussed earlier, the premise behind the Latent Twin framework is that latent spaces and their corresponding latent mappings can serve as efficient surrogates—or ``Twins''—of complex operators. These include forward processes, inverse problems, but also time–evolution solution operators (flow-maps) governed by ODEs or PDEs as we will consider here. By operating within suitably constructed latent spaces, the Latent Twin framework may seek to emulate the behavior of high-dimensional, unstable, or ill-posed evolution equations in a tractable form. This strategy is particularly valuable when direct computation in the original state dynamics is infeasible due to high dimensionality, computational complexity, or ill-posedness \cite{stuart_inverseproblems, arridge2019solving, jin2017deep}.

To be more precise, here we consider a state trajectory $x:\mathcal{T}\to X$ governed by an operator equation  $S(x,t)=0$. This abstract viewpoint encompasses a wide class of evolution models, including continuous-time ODEs, PDEs, and integro-differential equations \cite{hairer1993solving1, hairer1996solving2, evans2022partial, robinson2012introduction}. In many applications—such as weather prediction, fluid dynamics, or time-series analysis of audio and video—the governing dynamics may be available only through data or simulations rather than an explicit model, making surrogate representations particularly valuable.

For ODEs and PDEs, a Latent Twin as defined in \Cref{def:latenttwin} takes a simplified form: it consists of a \emph{single} autoencoder $\Lambda_\theta = d \circ e$ with encoder $e\colon X \to Z$ and decoder $d\colon Z \to X$, together with a latent evolution map $m\colon Z \times \mathcal{T} \times \mathcal{T} \to Z$. 
In this setting, we identify $Y=X$, since the operator maps the state space into itself and the dynamics evolve entirely within $X$. In this setting, separate forward and inverse Latent Twins are unnecessary, as the latent map itself accommodates bi-directional time-evolution. \Cref{fig:latenttwinDS} illustrates the Latent Twin construction in the setting of time-dependent systems. Although we do not incorporate noisy inputs into the autoencoders in the present work, we note this as a natural extension and particularly interesting for inverse problems.  We stress that for time points $t_1, t_2$, no assumptions on ordering ($t_2 > t_1$) or equidistant spacing ($t_2 = t_1 + h$) are required.

\begin{figure}
    \centering
    \begin{tikzpicture}

	\node[fill=python1!50, minimum width=3.0cm, minimum height=0.5cm] (x) at (0,0) {$x(t_1)+\varepsilon(t_1)$};
    
    \draw[fill=python9!50,draw=none]
             ([yshift=-0.2cm,xshift=0.0cm]x.south west) -- ([yshift=-1.5cm,xshift=0.75cm]x.south west) -- ([yshift=-1.5cm,xshift=-0.75cm]x.south east) -- ([yshift=-0.2cm,xshift=0.0cm]x.south east) -- cycle node (e) at ([yshift=-0.75cm,xshift=0.0cm]x.south) {$\begin{matrix}\mbox{encoder} \\[-0.0ex] e \end{matrix}$}; 
               
    \node[fill=python3!50, minimum width=1.5cm, minimum height=0.5cm] (zx) at (0,-2.3) {$z(t_1)$};    
    \draw[fill=python4!50,draw=none]
             ([yshift=-0.2cm,xshift=0.0cm]zx.south west) -- ([yshift=-1.5cm,xshift=-0.75cm]zx.south west) -- ([yshift=-1.5cm,xshift=0.75cm]zx.south east) -- ([yshift=-0.2cm,xshift=0.0cm]zx.south east) -- cycle node (d) at ([yshift=-0.75cm,xshift=0.0cm]zx.south) {$\begin{matrix}\mbox{decoder} \\[-0.0ex] d \end{matrix}$};  
    \node[fill=python1!50, minimum width=3.0cm, minimum height=0.5cm] (xt) at (0,-4.5) {$\tilde x(t_1)$};

    \node[fill=python2!50, minimum width=3.0cm, minimum height=0.5cm] (b) at (8.5,0) {$x(t_2)+\varepsilon(t_2)$};
    \draw[fill=python9!50,draw=none]
             ([yshift=-0.2cm,xshift=0.0cm]b.south west) -- ([yshift=-1.5cm,xshift=0.75cm]b.south west) -- ([yshift=-1.5cm,xshift=-0.75cm]b.south east) -- ([yshift=-0.2cm,xshift=0.0cm]b.south east) -- cycle node (e) at ([yshift=-0.75cm,xshift=0.0cm]b.south) {$\begin{matrix}\mbox{encoder} \\[-0.0ex] e \end{matrix}$}; 
               
    \node[fill=python6!50, minimum width=1.5cm, minimum height=0.5cm] (zb) at (8.5,-2.3) {$z(t_2)$};    
    \draw[fill=python4!50,draw=none]
             ([yshift=-0.2cm,xshift=0.0cm]zb.south west) -- ([yshift=-1.5cm,xshift=-0.75cm]zb.south west) -- ([yshift=-1.5cm,xshift=0.75cm]zb.south east) -- ([yshift=-0.2cm,xshift=0.0cm]zb.south east) -- cycle node (d) at ([yshift=-0.75cm,xshift=0.0cm]zb.south) {$\begin{matrix}\mbox{decoder} \\[-0.0ex] d \end{matrix}$};  
    \node[fill=python2!50, minimum width=3.0cm, minimum height=0.5cm] (bt) at (8.5,-4.5) {$\tilde x(t_2)$};

    \node[fill=python1!50, minimum width=1.0cm, minimum height=0.5cm] (t1) at (3.5,-1.3) {$t_1$};
    \node[fill=python3!50, minimum width=1.0cm, minimum height=1.cm] (zt1) at (3.5,-2.3) {$z(t_1)$};
    \node[fill=python2!50, minimum width=1.0cm, minimum height=0.5cm] (t2) at (3.5,-3.3) {$t_2$};

    \draw[fill=python8!50,draw=none] ([xshift=0.5cm]t1.north east) -- ([xshift=-0.5cm,yshift=0.3cm]zb.west) -- ([xshift=-0.5cm,yshift=-0.3cm]zb.west) -- ([xshift=0.5cm]t2.south east) -- cycle; 
    	\node at (5.8,-2.25) {$\begin{matrix}\mbox{latent evolution} \\ m \end{matrix}$};

    \begin{scope}[-latex,shorten >=9pt,shorten <=9pt,line width=2pt]
    \draw[black!75!white] ([yshift=-0.0cm]zx.east) to ([yshift=-0.0cm]zt1.west);

    \end{scope}        

\end{tikzpicture}
    \caption{\emph{Schematic of a Latent Twin for time-evolution systems.} The autoencoder (on the left) provides a latent representation of the system state. Given a potentially noisy observation $x(t_1) + \varepsilon(t_1)$, the goal is to recover the underlying state $x(t_1)$ by passing it through an encoder $e$ and decoder $d$, resulting in the latent representation $z(t_1)$. The autoencoder operates  independently of the specific time point and can be applied to any state $x(t)$, such as at time $t_2$ (on the right). The latent map $m$ (middle) takes the encoded state $z(t_1)$ along with the time pair $(t_1, t_2)$ and predicts the latent representation at the future time, $z(t_2)$. A  complete surrogate prediction--referred to as a Latent Twin--of the state at an arbitrary time $t_2$, given the state at $t_1$, is expressed as 
    $x(t_2) \approx (d \circ m(\,\cdot\,, t_1, t_2) \circ e)(x(t_1))$.}
    \label{fig:latenttwinDS}
\end{figure}
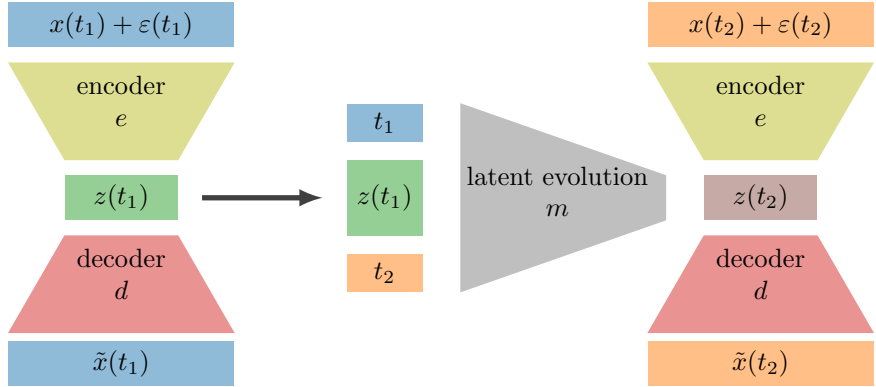

Given a state $x(t_1)$ at time $t_1$, the encoder produces a latent representation $z(t_1) = e(x(t_1))$. The latent evolution map then advances this state to any other time $t_2$,
\[
z(t_2) = m(z(t_1), t_1, t_2),
\]
and with the definition of the Latent Twin 
\begin{equation}\label{eq:latenttwin}
    \LT(t_2) = (d \circ m(\,\cdot\,,t_1,t_2)\circ e)(x(t_1))
\end{equation}
decoding yields the surrogate trajectory,
\[
x(t_2) \;\approx\; \LT(t_2).
\]
With this notation, we explicitly indicate the dependence on the network parameters $\theta$, highlighting the trainable nature of the Latent Twin.

The key distinction between the Latent Twin approach and purely pipeline-based architectures that sequentially process from state $x(t_1)$ to state $x(t_2)$ lies in their generality and structural design. Rather than simply passing data through a latent space while potentially performing operations within the latent space, the Latent Twin explicitly organizes the data so that the latent spaces are meaningfully aligned. Here, the objective is to ensure that the latent spaces represent corresponding encoded states $x$. This is achieved through autoencoders that are trained to accurately represent the underlying data distributions, ensuring that the latent spaces capture essential structure for downstream mappings. 

To train the Latent Twin, we optimize the autoencoder and the latent evolution map jointly, so that the state representation and its temporal dynamics are learned simultaneously. The training objective can be written as a Bayes risk minimization problem
\begin{equation}\label{eq:bayesrisk}
    \min_\theta \; \mathbb{E}\; \left(
         \mathcal{L}\Big(\Lambda_\theta(x_1),\, x_1\Big) 
       + \mathcal{L}\Big(\LT(t_2),\, x_2\Big)\right),
\end{equation}
where the expectation is taken over pairs of states $x_1, x_2$ sampled from the system dynamics at random time points $t_1$ and $t_2$, with both the states and time points regarded as random variables. The loss function $\mathcal{L}$ is typically chosen as a mean squared error, though additional regularization terms may be incorporated depending on the application. For a given representative dataset $\{(t_1^j, x_1^j), (t_2^j, x_2^j)\}_{j = 1}^J$ we may rephrase \Cref{eq:bayesrisk} in its empirical form
\begin{equation}\label{eq:empirical}
    \min_\theta \; \tfrac{1}{J} \sum_{j = 1}^J \left( 
         \mathcal{L}\Big(\Lambda_\theta(x_1^j),\, x_1^j\Big) 
       + \mathcal{L}\Big(\LT[t_1^j][x_1^j](t_2^j),\, x_2^j\Big) \right).
\end{equation}

\subsection{Approximation Properties of Latent Twins}\label{sub:theory}

A central theoretical question is whether Latent Twins can act as universal approximators of solution operators for dynamical systems. In particular: given an initial state and time, under what conditions can the Latent Twin $\LT$ reliably approximate the underlying true flow-map $\Phi$, and how do its error characteristics compare with those of traditional recursive timemarching schemes? Addressing this requires a closer look at the theoretical properties of solution operators and the approximation capabilities of neural networks.

\subsubsection{ODEs}\label{sub:odes}
Consider the ODE
\[
x' = G(x,t), \qquad x(t_0) = x_0, \quad t\in[0,T],
\]
with solution operator $\Phi(t_2,t_1,x_1) = x(t_2)$ where $x(t_1)=x_1$. 
As a simple example, for the linear ODE $x' = Mx$ with $M \in \mathbb{R}^{n\times n}$, the exact flow is given by the matrix exponential, $\Phi(\,\cdot\,, t_1,x_1) = \e^{M (\,\cdot\, - t_1)}x_1$.
The Latent Twin $\LT(\,\cdot\,)$ is trained to approximate $\Phi$ by learning a mapping
\[
 \LT[t_1][x_1](t_2) \approx \Phi(t_2,t_1,x_1),
\]
from a data set $\{(t_1^j,x_1^j), (t_2^j,x_2^j)\}_{j=1}^J$.

To establish that the Latent Twin $\LT$ is a valid approximator of the true solution operator, two ingredients are required: (i) the underlying ODE must admit a well-posed flow-map (ensuring existence, uniqueness, and continuity of solutions), and (ii) the neural network architecture realizing $\LT$ must be expressive enough to approximate such flows on compact domains.  The Latent Twin  $\LT = d \circ m \circ e$ is realized as the composition of three components: 
(a) an \emph{encoder} $e(\,\cdot\,)$ that maps states to a latent representation, 
(b) a \emph{latent map} $m(\,\cdot\,,t_1,t_2)$ that evolves this representation between times $t_1$ and $t_2$, 
and (c) a \emph{decoder} $d(\,\cdot\,)$ that reconstructs the state from latent space.

\begin{enumerate}
    \item[(i)] \textbf{Well-posedness of the ODE.} 
    We adopt the standard Picard–Lindelöf framework \cite{hairer1993solving1} to guarantee well-posedness, formulated in a two-sided form to ensure existence and uniqueness both forward and backward in time.

    \begin{assumption}[ODE]\label{ass:twosided}
    Let $G:\mathbb{R}^{n_x}\times[0,T]\to\mathbb{R}^{n_x}$ be the governing vector field of
    \[
        x'(t)=G(x(t),t).
    \]
    Assume that for each $t\in[0,T]$, the map $G(\cdot,t)$ is locally Lipschitz in $x$, uniformly in $t$, and that $G$ satisfies an \emph{at-most linear growth bound}, i.e.,
    \[
        \|G(x,t)\| \;\le\; a(t) + b\|x\|, 
    \qquad \text{for all } (x,t)\in\mathbb{R}^{n_x}\times[0,T],
    \]
    for some Lebesgue measurable function $a:[0,T]\to\mathbb{R}_+$ and constant $b\ge 0$.

    Then for every $(t_0,x_0)\in[0,T]\times\mathbb{R}^{n_x}$ there exists a unique solution $x(\cdot)\in C([0,T];\mathbb{R}^{n_x})$, which is in fact absolutely continuous and satisfies $x'(t)=G(x(t),t)$ for almost every $t\in[0,T]$.  Consequently, the associated solution operator
    \[
    \Phi(t_2,t_1,x_1) = x(t_2)\quad\text{with }x(t_1)=x_1
    \]
    is well defined and jointly continuous in $(t_1,t_2,x_1)$ on compact subsets of $[0,T]\times[0,T]\times\mathbb{R}^{n_x}$.
    \end{assumption}

    \item[(ii)] \textbf{Approximation capacity of the Latent Twin.} 
    Since trajectories remain bounded on $[0,T]$ by \Cref{ass:twosided}, all states lie in some compact $X \subset \mathbb{R}^{n_x}$. Restricting to such compact sets is natural and provides the domains where universal approximation theorems apply.
    
    We take
    \[
    \mathcal{E}\subset C(X,Z), \quad
    \mathcal{D}\subset C(Z,X), \quad
    \mathcal{M}\subset C(Z\times[0,T]^2,Z),
    \]
    to be network classes dense in $C(K,\mathbb{R}^m)$ for compact $K$. Because $\Phi$ is continuous on compact sets by \Cref{ass:twosided}, these results apply directly. Examples include MLPs \cite{cybenko1989approximation,funahashi1989approximate,hornik1989multilayer,hornik1991approximation,leshno1993multilayer,pinkus1999approximation}, CNNs \cite{zhou2020universality,yarotsky2018invariant}, ResNets \cite{lu2017expressive,lin2017resnet,kidger2020universal}, and Transformers \cite{yun2020transformers,perez2019turing}.

\begin{enumerate}
    \item[(a)] \emph{Encoder.} 
    The encoder translates physical states into latent codes. 
    \begin{assumption}[Encoder]\label{ass:encoder}
    The encoder $e\in\mathcal{E}$ is Lipschitz continuous with constant $L_e>0$, i.e.,
    \[
    \|e(x)-e(y)\|\;\le\; L_e \|x-y\| \qquad \text{for all } x,y\in X.
    \]
    \end{assumption}
    
    \item[(b)] \emph{Latent map.} 
    The latent map governs how encoded states evolve between times $t_1$ and $t_2$. 
    \begin{assumption}[Latent map]\label{ass:latentmap} 
    The latent evolution $m\in\mathcal{M}$ is continuous and approximates the exact latent flow
    \[
    m^\ast(z_1,t_1,t_2) := e\!\left(\Phi(t_2,t_1,d(z_1))\right),
    \]
    where $\Phi$ is the solution operator of the ODE and $e,d$ are the encoder and decoder. 
    That is, for every $\varepsilon_{\mathrm{map}}>0$ there exists $m\in\mathcal{M}$ such that
    \[
    \sup_{(z_1,t_1,t_2)\in Z\times[0,T]^2}\;
    \|m(z_1,t_1,t_2)-m^\ast(z_1,t_1,t_2)\|\;\le\;\varepsilon_{\mathrm{map}}.
    \]
    In other words, $m^\ast$ represents the exact latent dynamics induced by the ODE flow and the encoder–decoder pair.
    \end{assumption}
    
    \item[(c)] \emph{Decoder.} 
    The decoder reconstructs physical states from latent codes. 
    \begin{assumption}[Decoder]\label{ass:decoder}
    The decoder $d\in\mathcal{D}$ is Lipschitz continuous with constant $L_d>0$, i.e.,
    \[
    \|d(z_1)-d(z_2)\|\;\le\; L_d \|z_1-z_2\| \qquad \text{for all } z_1,z_2\in Z,
    \]
    and the reconstruction error satisfies
    \[
    \sup_{x\in X}\;\|d(e(x))-x\|\;\le\;\varepsilon_{\mathrm{AE}},
    \]
    where $\varepsilon_{\mathrm{AE}}>0$ can be made arbitrarily small.
    \end{assumption}
\end{enumerate}
\end{enumerate}

Note, when the latent space dimension matches the state space ($n_z = n_x$),  the encoder and decoder need not introduce any representation error. In this case one may take $e=\operatorname{id}_X$ and $d=\operatorname{id}_Z$, or networks that approximate them arbitrarily well. Consequently, the autoencoder error $\varepsilon_{\mathrm{AE}}$ vanishes (or can be made negligible), and the approximation task reduces entirely to learning the latent dynamics $m$. This is especially relevant for low-dimensional ODE systems, where no benefit is gained from overparameterizing $e$ or $d$, and the expressive power can be focused exclusively on modeling temporal evolution.

Having laid out the necessary assumptions on the dynamical system, encoder, latent map, and decoder, we can now state our main result: the Latent Twin furnishes a controlled approximation of the true solution operator.

\begin{theorem}[Latent Twin Uniform Approximation]\label{thm:end-to-end}

Suppose \Cref{ass:twosided}--\Cref{ass:decoder} hold. In particular, the encoder–decoder pair introduces at most a reconstruction error $\varepsilon_{\mathrm{AE}}$, the latent map approximates $m^\ast$ with error $\varepsilon_{\mathrm{map}}$, and the ODE flow $\Phi$ is Lipschitz in its initial state with constant $L_G>0$. Then $\LT$ satisfies
\[
\sup_{(t_1,t_2,x_1)\in[0,T]^2\times X}
\big\|\LT[t_1][x_1](t_2)-\Phi(t_2,t_1,x_1)\big\|
\;\le\;
(1+\operatorname{e}^{\,L_G T})\,\varepsilon_{\mathrm{AE}}
\;+\;
L_d\,\varepsilon_{\mathrm{map}}.
\]
\end{theorem}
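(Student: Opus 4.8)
The plan is to decompose the end-to-end error $\|\LT[t_1][x_1](t_2) - \Phi(t_2,t_1,x_1)\|$ by inserting intermediate quantities and then controlling each piece using the Lipschitz bounds and the two approximation errors $\varepsilon_{\mathrm{AE}}$ and $\varepsilon_{\mathrm{map}}$. Write $z_1 = e(x_1)$, so that $\LT[t_1][x_1](t_2) = d\bigl(m(z_1,t_1,t_2)\bigr)$, and recall $m^\ast(z_1,t_1,t_2) = e\bigl(\Phi(t_2,t_1,d(z_1))\bigr)$. The natural comparison point is $d\bigl(m^\ast(z_1,t_1,t_2)\bigr) = d\bigl(e\bigl(\Phi(t_2,t_1,d(e(x_1)))\bigr)\bigr)$, which is a reconstruction of the true flow applied to the reconstructed initial state $d(e(x_1))$. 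So I would write
\[
\LT[t_1][x_1](t_2) - \Phi(t_2,t_1,x_1)
= \underbrace{\bigl[d(m(z_1,t_1,t_2)) - d(m^\ast(z_1,t_1,t_2))\bigr]}_{\text{(I): latent-map error}}
+ \underbrace{\bigl[d(e(\Phi(t_2,t_1,d(e(x_1))))) - \Phi(t_2,t_1,d(e(x_1)))\bigr]}_{\text{(II): decode–encode reconstruction error}}
+ \underbrace{\bigl[\Phi(t_2,t_1,d(e(x_1))) - \Phi(t_2,t_1,x_1)\bigr]}_{\text{(III): flow sensitivity to the initial state}}.
\]

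For term (I), apply the decoder Lipschitz bound (\Cref{ass:decoder}) and then the uniform latent-map bound (\Cref{ass:latentmap}) to get $\|(\mathrm{I})\| \le L_d\,\|m(z_1,t_1,t_2) - m^\ast(z_1,t_1,t_2)\| \le L_d\,\varepsilon_{\mathrm{map}}$. For term (II), this is exactly a reconstruction error $\|d(e(w)) - w\|$ evaluated at $w = \Phi(t_2,t_1,d(e(x_1)))$; I need to know that $w$ lies in the compact set $X$ on which \Cref{ass:decoder} guarantees $\sup_{x\in X}\|d(e(x)) - x\| \le \varepsilon_{\mathrm{AE}}$. By \Cref{ass:twosided} all trajectories starting in $X$ stay in $X$ (or at least in the compact invariant region where the bound holds), so $\|(\mathrm{II})\| \le \varepsilon_{\mathrm{AE}}$. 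For term (III), use that $\Phi$ is Lipschitz in its initial state with constant $L_G$, but here is the subtlety: the Picard–Lindelöf/Gr\"onwall bound gives the sharper time-dependent factor, namely $\|\Phi(t_2,t_1,a) - \Phi(t_2,t_1,b)\| \le \operatorname{e}^{L_G|t_2 - t_1|}\|a - b\| \le \operatorname{e}^{L_G T}\|a-b\|$, and with $a = d(e(x_1))$, $b = x_1$ we have $\|a-b\| = \|d(e(x_1)) - x_1\| \le \varepsilon_{\mathrm{AE}}$, so $\|(\mathrm{III})\| \le \operatorname{e}^{L_G T}\varepsilon_{\mathrm{AE}}$. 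Summing, $\|\LT[t_1][x_1](t_2) - \Phi(t_2,t_1,x_1)\| \le \varepsilon_{\mathrm{AE}} + \operatorname{e}^{L_G T}\varepsilon_{\mathrm{AE}} + L_d\,\varepsilon_{\mathrm{map}} = (1 + \operatorname{e}^{L_G T})\varepsilon_{\mathrm{AE}} + L_d\,\varepsilon_{\mathrm{map}}$, and taking the supremum over $(t_1,t_2,x_1)$ (legitimate since every bound was uniform) gives the claim. The encoder Lipschitz constant $L_e$ and the latent-space Lipschitz structure are not actually needed for this particular chain — they would matter only if one wanted to propagate input noise or to bound $\|m^\ast\|$ quantitatively.

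The step I expect to require the most care is term (II)/(III): one must argue that the intermediate state $\Phi(t_2,t_1,d(e(x_1)))$ — the true flow applied to a \emph{perturbed} initial condition — remains inside the compact set $X$ on which the reconstruction bound $\varepsilon_{\mathrm{AE}}$ and the Lipschitz constant $L_G$ are asserted. A clean way to handle this is to first enlarge $X$ to a slightly bigger compact set that is forward- and backward-invariant under the flow (possible by the linear-growth bound in \Cref{ass:twosided}, which prevents finite-time blow-up) and contains all points within distance $\varepsilon_{\mathrm{AE}}$ of the original trajectory bundle; then restate \Cref{ass:encoder}–\Cref{ass:decoder} on this enlarged set. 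Alternatively, one absorbs this into the hypothesis that $d(e(\cdot))$ maps $X$ into $X$ and that $X$ is flow-invariant, which is the reading the theorem statement seems to intend. Everything else is a routine triangle-inequality-plus-Gr\"onwall argument.
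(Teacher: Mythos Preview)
Your proposal is correct and follows essentially the same route as the paper: your three-term split (I)/(II)/(III) is exactly the paper's decomposition into (a), (b.i), (b.ii), and each term is bounded identically (decoder Lipschitz plus $\varepsilon_{\mathrm{map}}$; autoencoder reconstruction; Gr\"onwall on the flow). Your additional remarks---that $L_e$ is never actually used, and that one must check the perturbed trajectory $\Phi(t_2,t_1,d(e(x_1)))$ stays in $X$---are valid observations that the paper's proof glosses over.
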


\begin{proof}
    Fix any $x_1\in X$. The Latent Twin acts as 
    \[
        \LT[t_1][x_1](t_2) \;=\; (d \circ m(\cdot,t_1,t_2)\circ e)(x_1),
    \]
    while the \emph{exact} latent evolution from \Cref{ass:latentmap} is
    \[
        m^\ast(z_1,t_1,t_2) \;=\; e\!\left(\Phi(t_2,t_1,d(z_1))\right).
    \]
    By the triangle inequality,
    \begin{multline}\label{eq:proofineq}
        \big\|\LT[t_1][x_1](t_2)-\Phi(t_2,t_1,x_1)\big\| \\
        \leq 
        \underbrace{\big\|d(m(e(x_1),t_1,t_2))-d(m^\ast(e(x_1),t_1,t_2))\big\|}_{\text{(a)}}
        + \underbrace{\big\|d(m^\ast(e(x_1),t_1,t_2))-\Phi(t_2,t_1,x_1)\big\|}_{\text{(b)}}.
    \end{multline}

    \begin{enumerate}
        \item[(a)] The only discrepancy in the first term is that $m$ replaces the exact latent evolution $m^\ast$. By Lipschitz continuity of $d$ (\Cref{ass:decoder}) and the uniform approximation property of $m$ (\Cref{ass:latentmap}), we obtain
        \[
            \big\|d(m(e(x_1),t_1,t_2))-d(m^\ast(e(x_1),t_1,t_2))\big\|
            \;\le\; L_d\,\varepsilon_{\mathrm{map}}.
        \]

        \item[(b)] The second term involves both the autoencoder and the flow. Using the definition of $m^\ast$, we write
        \[
            d(m^\ast(e(x_1),t_1,t_2))
            = d\!\left(e(\Phi(t_2,t_1,d(e(x_1))))\right).
        \]
        To compare with $\Phi(t_2,t_1,x_1)$, we insert the intermediate state $\Phi(t_2,t_1,d(e(x_1)))$. This separates the error into two contributions:
        \begin{align*}
            &\big\|d(m^\ast(e(x_1),t_1,t_2))-\Phi(t_2,t_1,x_1)\big\| \\
            &\quad\leq 
            \underbrace{\big\|d(e(\Phi(t_2,t_1,d(e(x_1))))) - \Phi(t_2,t_1,d(e(x_1)))\big\|}_{\text{(b.i)}}
            + \underbrace{\big\|\Phi(t_2,t_1,d(e(x_1)))-\Phi(t_2,t_1,x_1)\big\|}_{\text{(b.ii)}}.
        \end{align*}

        \begin{enumerate}
            \item[(b.i)] Since the autoencoder reconstructs any state with error at most $\varepsilon_{\mathrm{AE}}$, 
            \[
                \big\|d(e(\Phi(t_2,t_1,d(e(x_1))))) - \Phi(t_2,t_1,d(e(x_1)))\big\|
                \;\leq\; \varepsilon_{\mathrm{AE}}.
            \]

            \item[(b.ii)] By \Cref{ass:twosided}, the flow $\Phi$ is Lipschitz in its initial condition. Grönwall’s inequality 
            \cite{gronwall1919note} gives
            \[
                \|\Phi(t_2,t_1,x)-\Phi(t_2,t_1,y)\| 
                \;\le\; \operatorname{e}^{\,L_G|t_2-t_1|}\,\|x-y\|,
                \qquad \text{for all } x,y\in X.
            \]
            Since $\|d(e(x_1))-x_1\|\le \varepsilon_{\mathrm{AE}}$ and $|t_2-t_1|\leq T$, this implies
            \[
                \big\|\Phi(t_2,t_1,d(e(x_1)))-\Phi(t_2,t_1,x_1)\big\|
                \;\le\; \operatorname{e}^{\,L_G T}\,\varepsilon_{\mathrm{AE}}.
            \]
        \end{enumerate}

        Combining (b.i) and (b.ii), we conclude
        \[
            \big\|d(m^\ast(e(x_1),t_1,t_2))-\Phi(t_2,t_1,x_1)\big\|
            \;\le\; (1+\operatorname{e}^{\,L_G T})\,\varepsilon_{\mathrm{AE}}.
        \]
    \end{enumerate}

    Finally, combining (a) and (b) yields
    \[
        \big\|\LT[t_1][x_1](t_2)-\Phi(t_2,t_1,x_1)\big\|
        \;\le\; L_d\,\varepsilon_{\mathrm{map}} + (1+\operatorname{e}^{\,L_G T})\,\varepsilon_{\mathrm{AE}}.
    \]
    Taking the supremum over $(t_1,t_2,x_1)\in[0,T]^2\times X$ yields the stated bound.
\end{proof}

This result shows that the Latent Twin’s overall error is controlled by two 
structurally interpretable terms: the autoencoder error $\varepsilon_{\mathrm{AE}}$ 
and the latent flow approximation error $\varepsilon_{\mathrm{map}}$. We want to add a couple of remarks to highlight properties and consequences for the Latent Twin.

\paragraph{Remarks.} 
\begin{enumerate}
    \item The assumptions for \Cref{thm:end-to-end} align naturally with the architecture: the encoder–decoder pair must approximate the identity on $X$ with finite Lipschitz constants, while the latent map $m$ approximates the exact latent evolution $m^\ast$. If the latent space coincides with the state space ($Z=X$), one may choose the encoder and decoder to be the identity maps. In this case the reconstruction error vanishes ($\varepsilon_{\mathrm{AE}}=0$) and the bound in \Cref{thm:end-to-end} reduces to
    \[
    \sup_{(t_1,t_2,x_1)\in[0,T]^2\times X}
    \big\|\LT[t_1][x_1](t_2)-\Phi(t_2,t_1,x_1)\big\|
    \;\le\; L_d\,\varepsilon_{\mathrm{map}}.
    \]
    Thus, trivially, in low-dimensional problems the approximation error is determined entirely by the latent map, since the state representation introduces no additional error.

    \item Even when each time step introduces only a small local defect $\delta(\Delta t)$, the global error of time-marching schemes, such as Runge-Kutta, multistep or RNNs, LSTMs, typically grows \emph{exponentially} with the horizon $T$, 
    \[
    \|x_{\mathrm{num}}(T)-\Phi(T,t_0,x_0)\| \;\approx \; \operatorname{e}^{\,L_G T}\,\delta(\Delta t),
    \]
    where $L_G$ is a Lipschitz constant of the flow and $x_{\mathrm{num}}$ being the numerical solution of the marching scheme.
    
    \emph{By contrast}, the Latent Twin evaluates the state at $t_2$ in a single step, so this type exponential growth mechanism does not occur: its error remains uniformly bounded and depends only on approximation capacity, not on the number of steps taken.

    \item Restricting the Latent Twin to short time gaps $|t_2-t_1|$ yield sharper short-term accuracy: the exponential Lipschitz factor $\operatorname{e}^{L_G|t_2-t_1|}$ in \Cref{thm:end-to-end} remains small, 
    and the latent map $m$ only needs to approximate dynamics over limited horizons. Training on longer-range pairs instead emphasizes global dynamical structure, but at the expense of reduced pointwise fidelity. This trade-off is particularly visible in chaotic systems such as Lorenz--63, where local 
    training better reproduces short-term trajectories, while global training captures only coarse trends, see \Cref{subsec:odeexpt} below. 
\end{enumerate}

\paragraph{Linear ODEs.} \label{sec:linearODEtheory}
The linear ODE setting provides a transparent testbed for Latent Twins, showing how latent mappings $m$ can be designed to mirror true dynamics and clarifying their connection to classical projection--based model reduction. Consider the linear system
\[
x'(t)=Mx(t),
\]
whose solution is given by the flow operator
\[
\Phi(t_2,t_1,x_1)=\operatorname{e}^{(t_2-t_1)M}x_1.
\] 

To embed this system into the latent--twin framework, we introduce an encoder--decoder pair defined by an orthogonal matrix 
$U\in\mathbb{R}^{n\times r}$ with $U^\top U=I_r$:
\[
e(x)=U^\top x,\qquad d(z)=Uz .
\]
A natural and structurally faithful choice for the latent map is
\[
m(z_1,t_1, t_2)=\operatorname{e}^{(t_2-t_1)W}z_1,
\]
so that the Latent Twin evolves as
\begin{equation}\label{eq:ltODE}
\LT[t_1][x_1](t_2) = U\,\operatorname{e}^{(t_2-t_1)W}\,U^\top x_1,
\end{equation}
with trainable generator $W\in\mathbb{R}^{r\times r}$.  
This choice deliberately mimics the exact flow $\exp((t_2-t_1)M)$, endowing the Latent Twin with the correct semigroup structure and a physics--inspired, interpretable parameterization of dynamics.  
If $U=I$, this recovers the full--state case; if $r<n$, $U$ acts as a projection, directly 
linking the Latent Twin to POD reduction.  

Indeed, if $U$ contains the leading $r$ POD modes, then $W=U^\top M U$ is the Galerkin--projected operator, and the Latent Twin flow
\[
\LT(t_2,t_1,x_1)=U \operatorname{e}^{(t_2-t_1)W}U^\top x_1
\]
coincides exactly with the POD reduced flow.  Beyond this equivalence, Latent Twins also provide explicit error control: if training succeeds in driving $W$ close to $M$, then a perturbation bound for the matrix exponential \cite[Thm.~10.18]{higham2008functions} shows that, uniformly for $|h|\le H$,
\[
\sup_{\substack{|h|\le H \\ x_1\in X}}
\big\|\LT[t_1][x_1](t_1{+}h)-\Phi(t_1{+}h,t_1,x_1)\big\|
\;\le\; 
H\,\operatorname{e}^{\,H(\|M\|+\|W-M\|)}\,\|W-M\|\cdot\sup_{x_1\in X}\|x_1\|.
\]
Thus the operator error is governed directly by the generator mismatch $\|W-M\|$, with exact recovery when $W=M$.  

The key distinction from POD is that while POD fixes $U$ by snapshot SVD and projects the known operator $M$, Latent Twins may learn both the encoder--decoder and the latent dynamics directly from data.  In the linear case this recovers POD, but the architecture generalizes seamlessly: choices like $m(z_1,t_1,t_2)=\exp((t_2-t_1)W)z_1$ illustrate how embedding physical structure directly into $m$ yields interpretable and accurate models, providing a natural template for nonlinear extensions. An illustrative example of this setup is presented in the ODE experiments of \Cref{subsec:odeexpt}.

\subsubsection{PDEs}\label{sub:pdes}

For PDEs, the theoretical framework naturally extends the ODE analysis by introducing spatial discretization as an additional layer. We consider a PDE defined on the function space $\mathcal{X}$ and show that Latent Twins can approximate the continuous solution operator through a careful three-step process: spatial discretization, Latent Twin approximation on the discrete space, and reconstruction to the continuous space.

\begin{assumption}[PDE well-posedness \cite{robinson2001infinite}]\label{ass:pde-wellposed}
Let $(\mathcal{X}, \|\,\cdot\,\|_{\mathcal{X}})$ be a separable Banach space of functions. Consider the PDE written as an abstract evolution equation:
\[
\frac{\operatorname{\partial}u}{\operatorname{\partial}t} = \mathcal{L}(u, t), \quad u(0) = u_0 \in \mathcal{X}, \quad t \in [0,T],
\]
where the PDE operator $\mathcal{L}: \mathcal{X} \times [0,T] \to \mathcal{X}$ satisfies:
\begin{enumerate}
\item[(i)] For each $t \in [0,T]$, the map $\mathcal{L}(\cdot, t): \mathcal{X} \to \mathcal{X}$ is locally Lipschitz continuous uniformly in $t$.
\item[(ii)] $\mathcal{L}$ satisfies the linear growth bound
\[
\|\mathcal{L}(u, t)\|_{\mathcal{X}} \leq a(t) + b \|u\|_{\mathcal{X}}
\]
for some measurable $a: [0,T] \to \mathbb{R}_+$ and constant $b \geq 0$.
\end{enumerate}
Then there exists a compact invariant set $\mathcal{K} \subset \mathcal{X}$ (invariant under both forward and backward evolution) such that the solution operator $\Phi(t_2, t_1, \cdot): \mathcal{K} \to \mathcal{K}$ is well defined, unique, and jointly continuous in $(t_1,t_2,u_1)$ on $[0,T]^2 \times \mathcal{K}$.
Moreover, $\Phi$ is Lipschitz continuous with constant $L_{\mathcal{K}}>0$ in its initial condition:
\[
\|\Phi(t_2, t_1, u) - \Phi(t_2, t_1, v)\|_{\mathcal{X}} \leq \operatorname{e}^{L_{\mathcal{K}} |t_2 - t_1|} \|u - v\|_{\mathcal{X}}, \quad \text{for all } u, v \in \mathcal{K}.
\]
\end{assumption}

\begin{assumption}[Spatial discretization]\label{ass:spatial-disc}
Let $\{P_N: \mathcal{X} \to \mathbb{R}^{N}\}_{N \geq 1}$ and $\{R_N: \mathbb{R}^{N} \to \mathcal{X}\}_{N \geq 1}$ be spatial discretization and reconstruction operators (e.g., finite element projections \cite{ciarlet1978finite}) such that:
\begin{enumerate}
\item[(i)] \textbf{(Consistency)} $R_N \circ P_N \to I$ on $\mathcal{K}$ as $N \to \infty$, with discretization error
\[
\varepsilon_{\text{disc}}(N) := \sup_{u \in \mathcal{K}} \|R_N(P_N(u)) - u\|_{\mathcal{X}} \to 0.
\]
\item[(ii)] \textbf{(Stability)} The operators $P_N$ and $R_N$ are uniformly bounded: $\|R_N\|, \|P_N\| \leq C$.
\item[(iii)] \textbf{(Discrete flow consistency)} The discrete flow $\Phi_N(t_2, t_1, \cdot): \mathbb{R}^N \to \mathbb{R}^N$ satisfies
\[
\sup_{u \in \mathcal{K}, t_1, t_2 \in [0,T]} \|R_N(\Phi_N(t_2, t_1, P_N(u))) - \Phi(t_2, t_1, u)\|_{\mathcal{X}} \leq C \varepsilon_{\text{disc}}(N).
\]
\end{enumerate}
\end{assumption}

Given this setup, we construct a discrete Latent Twin $\LTN = d_N \circ m_N \circ e_N$ operating on the finite-dimensional space $\mathbb{R}^N$. Crucially, this discrete Latent Twin satisfies exactly the same assumptions as in the ODE case (\Cref{ass:encoder}--\Cref{ass:decoder}), but now applied to the compact domain $X_N := P_N(\mathcal{K}) \subset \mathbb{R}^N$:

\begin{enumerate}
\item The encoder $e_N: \mathbb{R}^N \to \mathbb{R}^{n_z}$ is Lipschitz on $X_N$ with constant $L_{e,N}$.
\item The decoder $d_N: \mathbb{R}^{n_z} \to \mathbb{R}^N$ is Lipschitz with constant $L_{d,N}$ and satisfies $\sup_{x \in X_N} \|d_N(e_N(x)) - x\| \leq \varepsilon_{\mathrm{AE}}^{(N)}$.
\item The latent map $m_N: \mathbb{R}^{n_z} \times [0,T]^2 \to \mathbb{R}^{n_z}$ approximates the exact discrete latent flow with error $\varepsilon_{\mathrm{map}}^{(N)}$.
\end{enumerate}
The continuous Latent Twin is then defined by lifting the discrete result back to the function space:
\[
\tildeLTN(t_2, t_1, u_1) := R_N(\LTN(t_2, t_1, P_N(u_1))).
\]
\begin{theorem}[Latent Twin Approximation for PDEs]\label{thm:pde}
Suppose \Cref{ass:pde-wellposed} and \Cref{ass:spatial-disc} hold and the discrete Latent Twin $\LTN$ satisfies the ODE assumptions (\Cref{ass:encoder}--\Cref{ass:decoder}) on $X_N = P_N(\mathcal{K})$ with errors $\varepsilon_{\mathrm{AE}}^{(N)}$ and $\varepsilon_{\mathrm{map}}^{(N)}$. Then the continuous Latent Twin satisfies
\[
\sup_{(t_1,t_2,u_1) \in [0,T]^2 \times \mathcal{K}} \|\tildeLTN(t_2, t_1, u_1) - \Phi(t_2, t_1, u_1)\|_{\mathcal{X}}
\leq \underbrace{C \varepsilon_{\text{disc}}(N)}_{\text{spatial discretization}} + \underbrace{C(1 + \operatorname{e}^{L_{\mathcal{K}} T}) \varepsilon_{\mathrm{AE}}^{(N)} + C L_{d,N} \varepsilon_{\mathrm{map}}^{(N)}}_{\text{discrete Latent Twin (ODE bound)}}.
\]
\end{theorem}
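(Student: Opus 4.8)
The strategy is a clean triangle-inequality decomposition that splits the continuous-to-continuous error into a spatial-discretization part and a part that is exactly the discrete ODE-type Latent Twin error, to which Theorem~\ref{thm:end-to-end} applies verbatim. Concretely, I would fix $(t_1,t_2,u_1)\in[0,T]^2\times\mathcal{K}$ and insert the intermediate quantity $R_N\!\big(\Phi_N(t_2,t_1,P_N(u_1))\big)$, the lifted exact discrete flow. Then
\[
\|\widetilde\Psi_{\theta,N}^{(t_1,u_1)}(t_2)-\Phi(t_2,t_1,u_1)\|_{\mathcal{X}}
\le
\underbrace{\|R_N(\Psi_{\theta,N}^{(t_1,x_1^N)}(t_2))-R_N(\Phi_N(t_2,t_1,x_1^N))\|_{\mathcal{X}}}_{\text{(I)}}
+\underbrace{\|R_N(\Phi_N(t_2,t_1,P_N(u_1)))-\Phi(t_2,t_1,u_1)\|_{\mathcal{X}}}_{\text{(II)}},
\]
where $x_1^N:=P_N(u_1)\in X_N$. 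Term (II) is bounded by $C\,\varepsilon_{\text{disc}}(N)$ directly by the discrete-flow-consistency hypothesis (\Cref{ass:spatial-disc}(iii)), giving the first term in the stated bound.

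For term (I), I would use stability of the reconstruction operator, $\|R_N\|\le C$ (\Cref{ass:spatial-disc}(ii)), to pull out the constant:
\[
\text{(I)}\;\le\; C\,\big\|\Psi_{\theta,N}^{(t_1,x_1^N)}(t_2)-\Phi_N(t_2,t_1,x_1^N)\big\|.
\]
The point is that the remaining factor is \emph{exactly} the object controlled by Theorem~\ref{thm:end-to-end}, applied to the finite-dimensional system on the compact set $X_N=P_N(\mathcal{K})$: the discrete Latent Twin $\Psi_{\theta,N}=d_N\circ m_N\circ e_N$ satisfies Assumptions~\ref{ass:encoder}--\ref{ass:decoder} on $X_N$ with constants $L_{e,N},L_{d,N}$ and errors $\varepsilon_{\mathrm{AE}}^{(N)}$, $\varepsilon_{\mathrm{map}}^{(N)}$, and the discrete flow $\Phi_N$ inherits a Lipschitz-in-initial-condition bound with a Grönwall factor $e^{L_{\mathcal{K}}|t_2-t_1|}\le e^{L_{\mathcal{K}}T}$ (this is the natural discrete analogue of the Lipschitz constant $L_G$; I would note it follows from well-posedness of $\Phi_N$, consistent with \Cref{ass:pde-wellposed}). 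Invoking Theorem~\ref{thm:end-to-end} on $X_N$ then yields
\[
\big\|\Psi_{\theta,N}^{(t_1,x_1^N)}(t_2)-\Phi_N(t_2,t_1,x_1^N)\big\|
\le (1+\operatorname{e}^{L_{\mathcal{K}}T})\,\varepsilon_{\mathrm{AE}}^{(N)}+L_{d,N}\,\varepsilon_{\mathrm{map}}^{(N)},
\]
so (I) $\le C(1+\operatorname{e}^{L_{\mathcal{K}}T})\,\varepsilon_{\mathrm{AE}}^{(N)}+C\,L_{d,N}\,\varepsilon_{\mathrm{map}}^{(N)}$. Adding the two bounds and taking the supremum over $(t_1,t_2,u_1)\in[0,T]^2\times\mathcal{K}$ gives the claimed estimate.

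The main obstacle is a modeling/bookkeeping subtlety rather than a hard estimate: one must ensure that the hypothesis ``$\Psi_{\theta,N}$ satisfies the ODE assumptions on $X_N$'' genuinely supplies everything Theorem~\ref{thm:end-to-end} needs — in particular, a Lipschitz-in-initial-state bound for the \emph{discrete} flow $\Phi_N$ with an explicit exponential constant. The cleanest route is to let $\Phi_N$ be the exact flow of the semi-discretized ODE on $X_N$, whose vector field inherits local Lipschitz and linear-growth bounds from $\mathcal{L}$ via $P_N,R_N$ (using the uniform bounds in \Cref{ass:spatial-disc}(ii)), so that \Cref{ass:twosided} holds for the discrete system and the Grönwall constant can be taken uniformly in $N$ (identifying it with $L_{\mathcal{K}}$, or absorbing any $N$-dependent slack into the constant $C$). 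A secondary point worth stating explicitly is that $X_N=P_N(\mathcal{K})$ is compact (continuous image of a compact set), so the universal-approximation/compactness hypotheses underlying Theorem~\ref{thm:end-to-end} apply; I would remark this briefly rather than belabor it.
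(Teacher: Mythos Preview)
Your proposal is correct and follows essentially the same route as the paper: insert $R_N(\Phi_N(t_2,t_1,P_N(u_1)))$, bound the discretization term by \Cref{ass:spatial-disc}(iii), pull out $\|R_N\|\le C$ on the other term, and invoke \Cref{thm:end-to-end} on $X_N$. Your extra remarks on why $\Phi_N$ inherits a Gr\"onwall constant and why $X_N$ is compact are reasonable clarifications that the paper leaves implicit.
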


\begin{proof}
The proof follows a two-step decomposition. For any $(t_1, t_2, u_1) \in [0,T]^2 \times \mathcal{K}$:

\begin{equation}
\begin{aligned}
\|\tildeLTN(t_2, t_1, u_1) - \Phi(t_2, t_1, u_1)\|_{\mathcal{X}} 
&\leq \underbrace{\|R_N(\LTN(t_2, t_1, P_N(u_1))) - R_N(\Phi_N(t_2, t_1, P_N(u_1)))\|_{\mathcal{X}}}_{\text{discrete Latent Twin error}} \\
&\quad + \underbrace{\|R_N(\Phi_N(t_2, t_1, P_N(u_1))) - \Phi(t_2, t_1, u_1)\|_{\mathcal{X}}}_{\text{spatial discretization error}}.
\end{aligned}
\end{equation}

The second term is bounded by $C \varepsilon_{\text{disc}}(N)$ from \Cref{ass:spatial-disc}(iii).  For the first term, by stability of $R_N$ and since $\LTN$ operates on the finite-dimensional space $\mathbb{R}^N$, we obtain
\[
\|R_N(\LTN(t_2, t_1, P_N(u_1))) - R_N(\Phi_N(t_2, t_1, P_N(u_1)))\|_{\mathcal{X}} 
\leq C \|\LTN - \Phi_N\|_{\mathbb{R}^N}.
\]
Applying \Cref{thm:end-to-end} to the discrete system then yields
\[
\|R_N(\LTN(t_2, t_1, P_N(u_1))) - R_N(\Phi_N(t_2, t_1, P_N(u_1)))\|_{\mathcal{X}} 
\leq C\left((1 + \operatorname{e}^{L_{\mathcal{K}} T}) \varepsilon_{\mathrm{AE}}^{(N)} + L_{d,N} \varepsilon_{\mathrm{map}}^{(N)}\right).
\]
Combining the two terms gives the pointwise estimate above; taking the supremum over 
$(t_1,t_2,u_1)\in[0,T]^2\times \mathcal{K}$ then yields the stated bound.
\end{proof}

The above theorem shows that as $N \to \infty$ and the network capacity increases, the errors vanish, i.e., $\varepsilon_{\text{disc}}(N)\to 0$, $\varepsilon_{\mathrm{AE}}^{(N)}\to 0$, and $\varepsilon_{\mathrm{map}}^{(N)}\to 0$, so that $\tildeLTN \to \Phi$ uniformly on $[0,T]^2 \times \mathcal{K}$.

\paragraph{Remarks.}
\begin{enumerate}
    \item The PDE error bound extends the ODE result (\Cref{thm:end-to-end}) by incorporating an additional spatial discretization error term $\varepsilon_{\text{disc}}(N)$. Once the PDE is discretized in space, the resulting finite-dimensional system falls within the ODE setting where the Latent Twin theory applies directly. In particular, when the latent dimension equals the discretized state dimension ($n_z = N$), the autoencoder reconstruction error vanishes ($\varepsilon_{\mathrm{AE}}^{(N)}=0$), so that the bound reduces to spatial discretization plus latent dynamics errors, exactly mirroring the ODE case.

    \item Traditional time-marching schemes for PDEs suffer from error amplification due to spatial discretization that is then propagated exponentially in time, while temporal truncation errors also accumulate exponentially. \emph{By contrast}, Latent Twins avoid temporal error accumulation entirely, with the total error bounded by the sum of spatial discretization and network approximation errors.

    \item The invariance requirement in \Cref{ass:pde-wellposed}, which demands a compact set to be invariant under both forward and backward evolution, can be relaxed in settings where only one temporal direction is of interest. In such cases, invariance under the forward (or backward) evolution semigroup suffices. Moreover, in applications where global invariance is too restrictive, one may instead consider local invariance over finite time horizons. Such relaxations preserve the applicability of the Latent Twin framework while broadening its relevance to practical scenarios where only one-sided or local well-posedness can be guaranteed.
\end{enumerate}

\section{Numerical Experiments} \label{sec:numerics}
A GitHub repository containing illustrative code, example scripts, and instructions for reproducing numerical experiments presented in this work and is available\footnote{The repository will be made fully available upon acceptance of the manuscript.}:
\url{https://github.com/matthiaschung/latent-twins}.

\subsection{ODE Experiments}\label{subsec:odeexpt}

To illustrate the flexibility of the Latent Twin framework, we apply it to four canonical low-dimensional ODE systems spanning periodic, dissipative, and chaotic dynamics: a harmonic oscillator, the SIR epidemiological model, the predator--prey (Lotka--Volterra) equations, and the Lorenz--63 system. These provide a representative benchmark set for testing performance across qualitatively different dynamical regimes. Using standard notation, we consider the following benchmark systems:

\begin{description}
    \item[Harmonic Oscillator:] $x' = v$, $v' = -\omega_0^2 x$ with $\omega_0 = 2$, initial state $(x, v) = (1, 0)$, simulated over $t \in [0, 10]$.
    \item[SIR Model:] $S' = -\beta S I$, $I' = \beta S I - \gamma I$, $R' = \gamma I$ with $\beta = 3/10$, $\gamma = 1/10$, initial state $(S, I, R) = (99/100, 1/100, 0)$, simulated over $t \in [0, 60]$.
    \item[Lotka--Volterra:] $u' = \alpha u - \beta u v$, $v' = \delta u v - \gamma v$ with $\alpha = 1$, $\beta = 1/10$, $\delta = 75/1000$, $\gamma = 3/2$, initial state $(u, v) = (10, 5)$, simulated over $t \in [0, 30]$.
    \item[Lorenz--63:] $x' = \sigma (y - x)$, $y' = x (\rho - z) - y$, $z' = xy - \beta z$ with $\sigma = 10$, $\rho = 28$, $\beta = 8/3$, initial state $(x, y, z) = (1, 1, 1)$, simulated over $t \in [0, 40]$.
\end{description}

All systems are integrated with the default Python \texttt{scipy} ODE solver in high-precision mode ($\mathtt{atol} = 10^{-12}$, $\mathtt{rtol} = 10^{-10}$). The solutions are shown in \Cref{fig:odes}.

In this section our aim is not to tune hyperparameters for peak accuracy, but to demonstrate that Latent Twins perform well under a simple, uniform setup. Since these systems are low-dimensional, we set $z(t) \equiv x(t)$ so that the only learned component is the latent mapping $m$.  All models share the same architecture: a three-layer MLP with input dimension $n_x + 2$ (state plus $t_1$ and $t_2$) and output dimension $n_x$. The encoder has layer widths $16 \to 8 \to n_z$ (with $n_z = 4$) using \texttt{softmax} activations at each layer; the decoder mirrors this structure with a linear output layer. This results in 246, 246, 267, and 267 trainable network parameters for the Harmonic Oscillator, Lotka–Volterra, SIR, and Lorenz–63 systems, respectively.

We generate $J = 2^{15}$ random pairs $\{(t_1^j, x_1^j), (t_2^j, x_2^j)\}_{j=1}^J$ with $t_1$ and $t_2$ drawn uniformly from simulation intervals to ensure diverse temporal separations. Data is normalized by subtracting the mean and dividing by standard deviation, then split into  training/testing with a 80--20 split. Training uses Adam optimizer with learning rate $10^{-3}$ for $\num{1000}$ epochs and batch size 256.

We randomly select a single $(t_1, x_1)$ to predict full trajectories using the Latent Twin $\LT$. As shown in \Cref{fig:odes}, predictions are highly accurate for the harmonic oscillator and SIR model, while the Lotka--Volterra system exhibits some divergence indicating incomplete convergence. The Lorenz--63 system, being chaotic, is inherently challenging but the Latent Twin successfully captures overall trends. Corresponding prediction errors are shown in \Cref{fig:odeerror}.

A key advantage is direct evaluation capability: $\LT(t)$ maps any $(t_1, x_1)$ directly to $x(t)$ for arbitrary $t$, avoiding sequential stepping entirely and enabling trivial parallelization. In contrast, classical schemes (Euler, Runge--Kutta \cite{butcher2008numerical}, multistep methods \cite{hairer1993solving1}) and learning-based predictors \cite{rowley2004model,hochreiter1997long} advance solutions by fixed recursion. Unlike traditional marching methods where errors accumulate exponentially over time, Latent Twin errors remain uniformly distributed across time horizons. To investigate error accumulation, we perform recursive evaluations of $\LT$, analogous to classical time-marching schemes. In this recursive mode, the Latent Twin propagates the state step by step according to
\[
x_{k+1} = \LT[t_k][x_k](t_k+h), \qquad  \text{with constant $h>0$ and } t_{k+1} = t_k+h \qquad \text{for } k = 0, \ldots .
\]
As with classical methods, this approach is inherently sequential and accumulates stepwise error. We observe that MSE increases slightly when using the recursive form compared to direct evaluation. Combined-state MSE comparisons (direct vs. recursive): Harmonic oscillator ($4.64 \times 10^{-4}$ vs. $5.13 \times 10^{-4}$), SIR ($5.76 \times 10^{-8}$ vs. $6.45 \times 10^{-8}$), Predator--prey ($7.02$ vs. $8.80$), and Lorenz--63 ($5.38 \times 10^{1}$ for both), reflects global rather than local dynamics capture.

\begin{figure}
    \centering
    \begin{tabular}{cc}
        \includegraphics[width=0.4\linewidth]{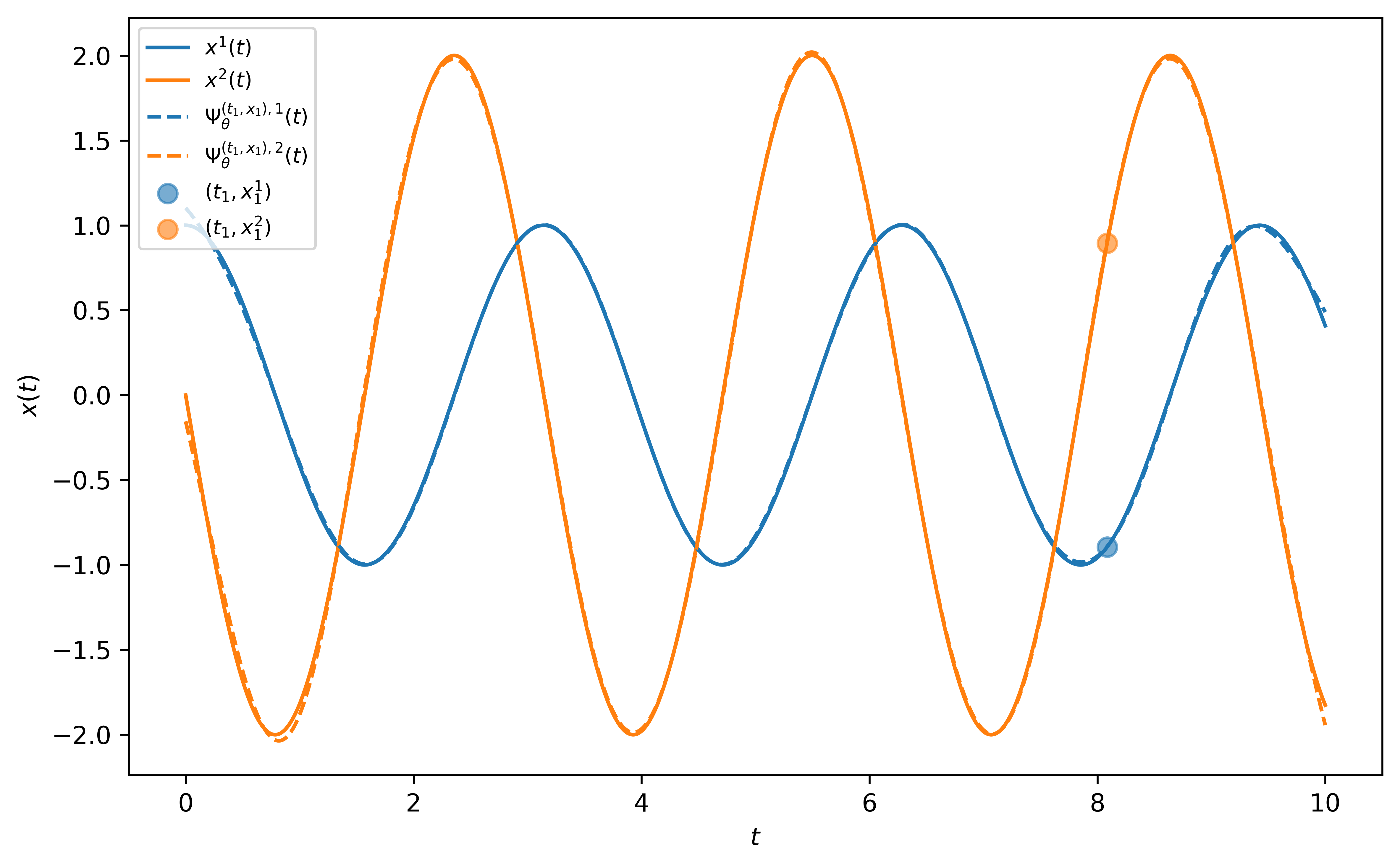} &
        \includegraphics[width=0.4\linewidth]{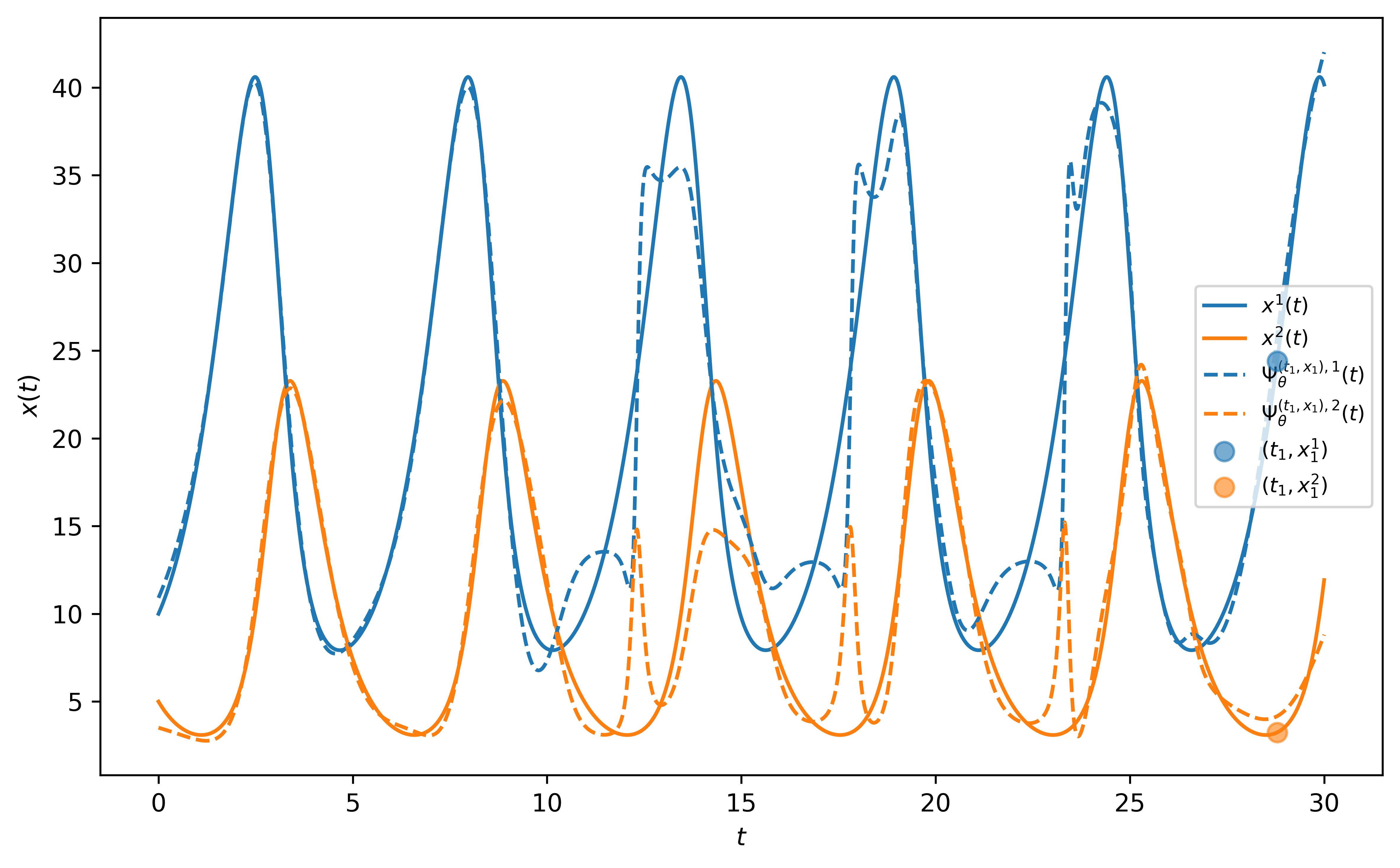}\\
        \includegraphics[width=0.4\linewidth]{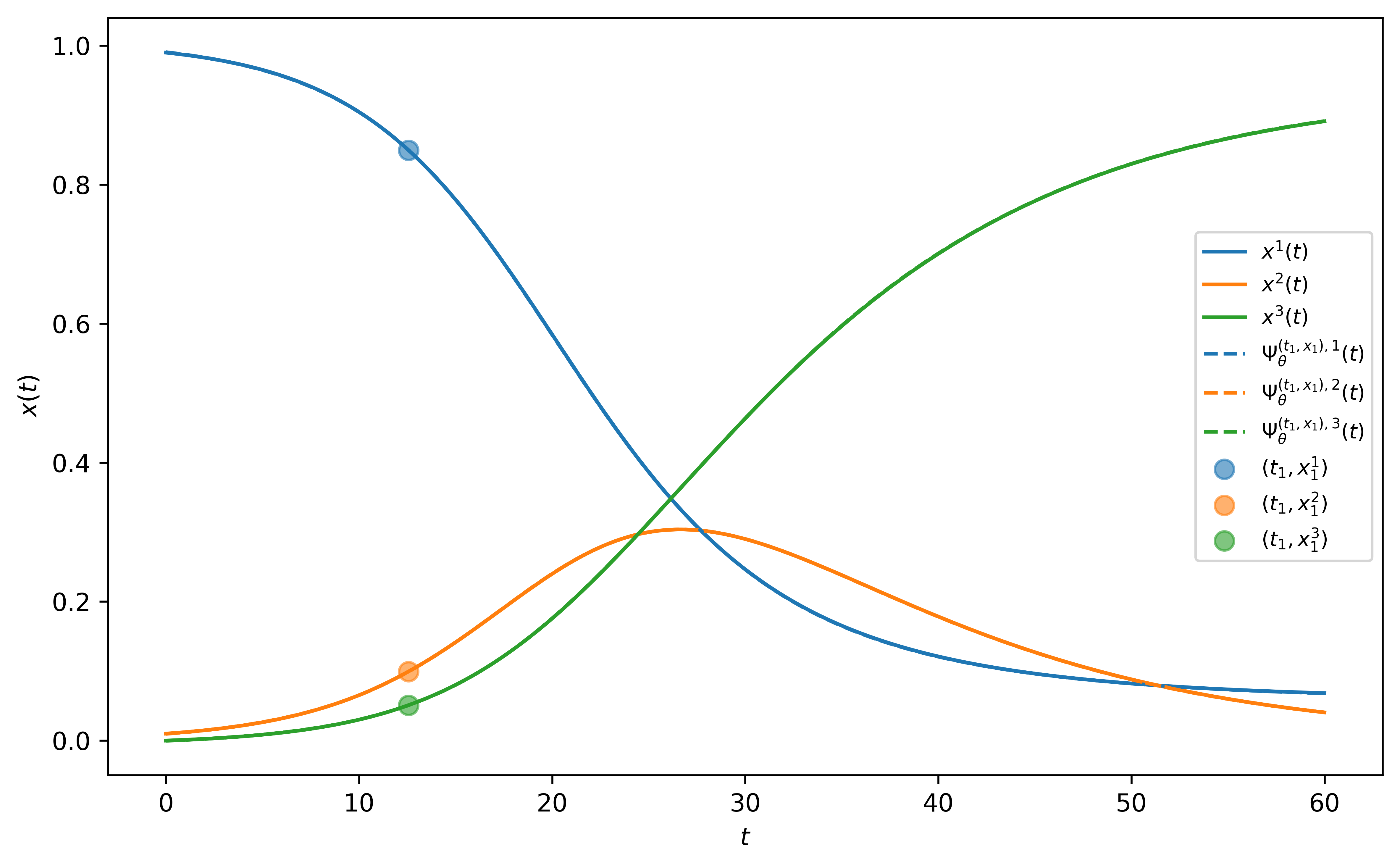} & 
        \includegraphics[width=0.4\linewidth]{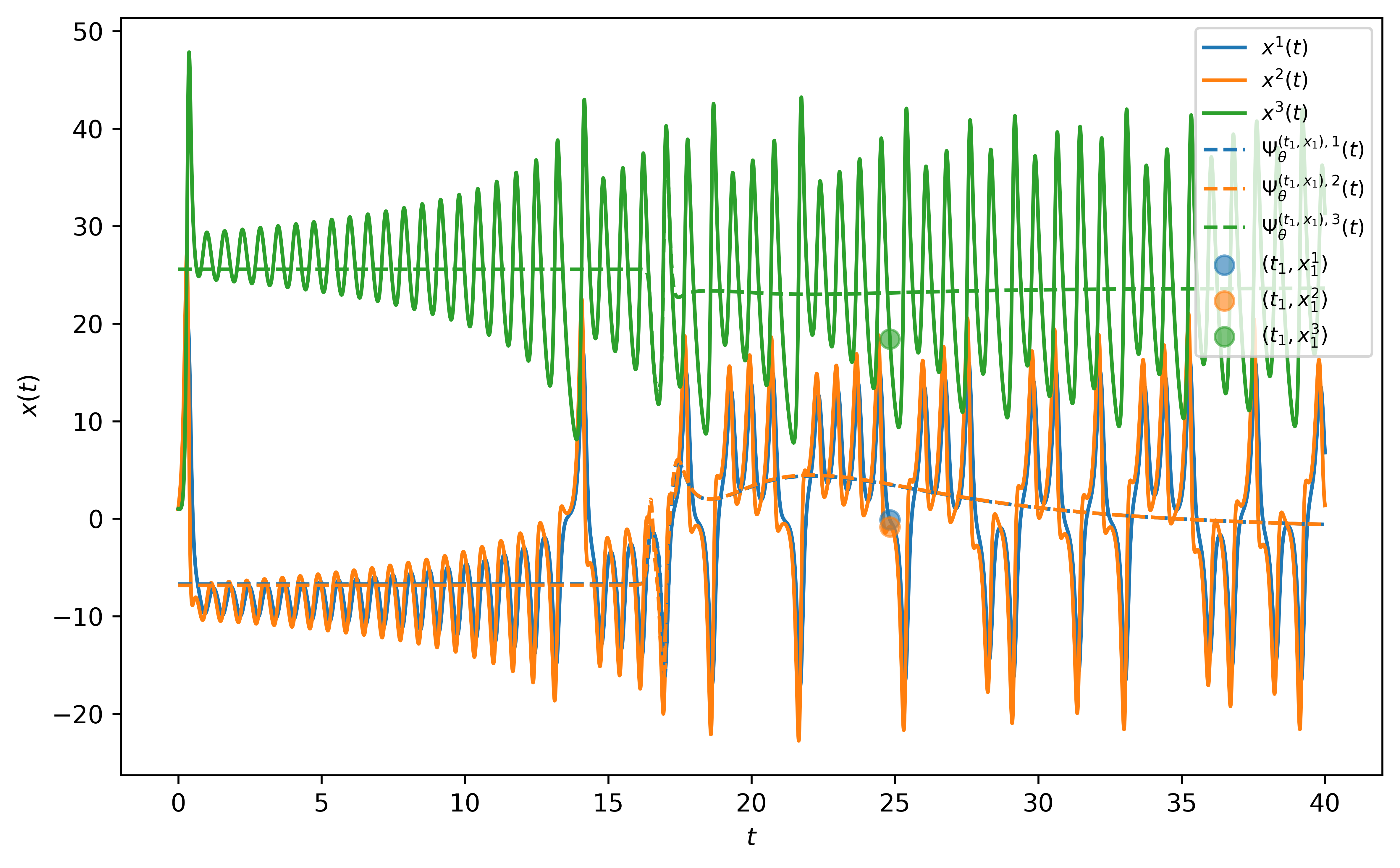}
    \end{tabular}
    \caption{The trajectories for the four ODE systems---harmonic oscillator, predator--prey, SIR, and Lorenz--63 (arranged top-left to bottom-right in reading order)---are shown, with superscript indices denoting the associated state variables.}
    \label{fig:odes}
\end{figure}

\begin{figure}
    \centering
    \begin{tabular}{cc}
        \includegraphics[width=0.5\linewidth]{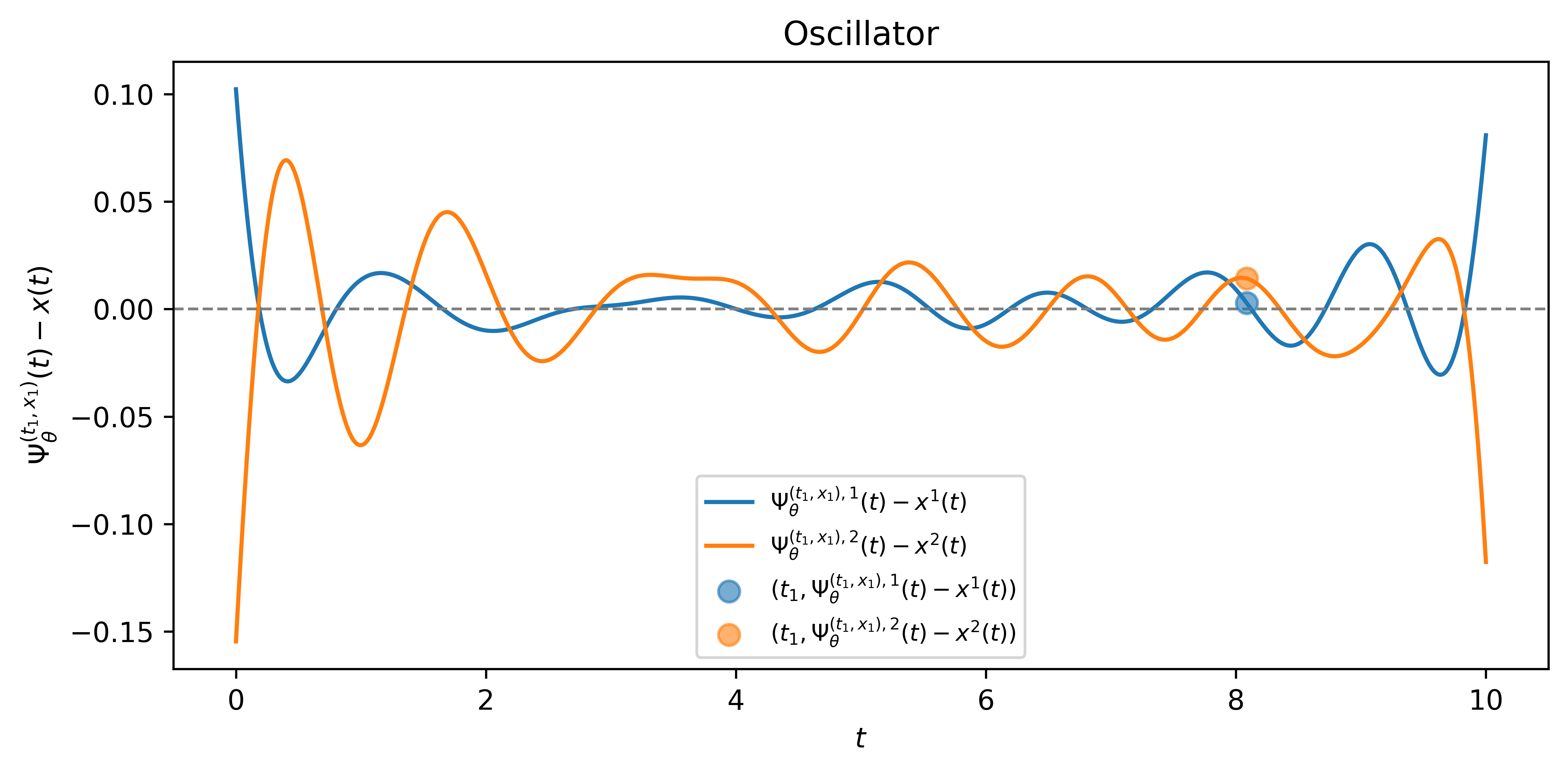} &
        \includegraphics[width=0.5\linewidth]{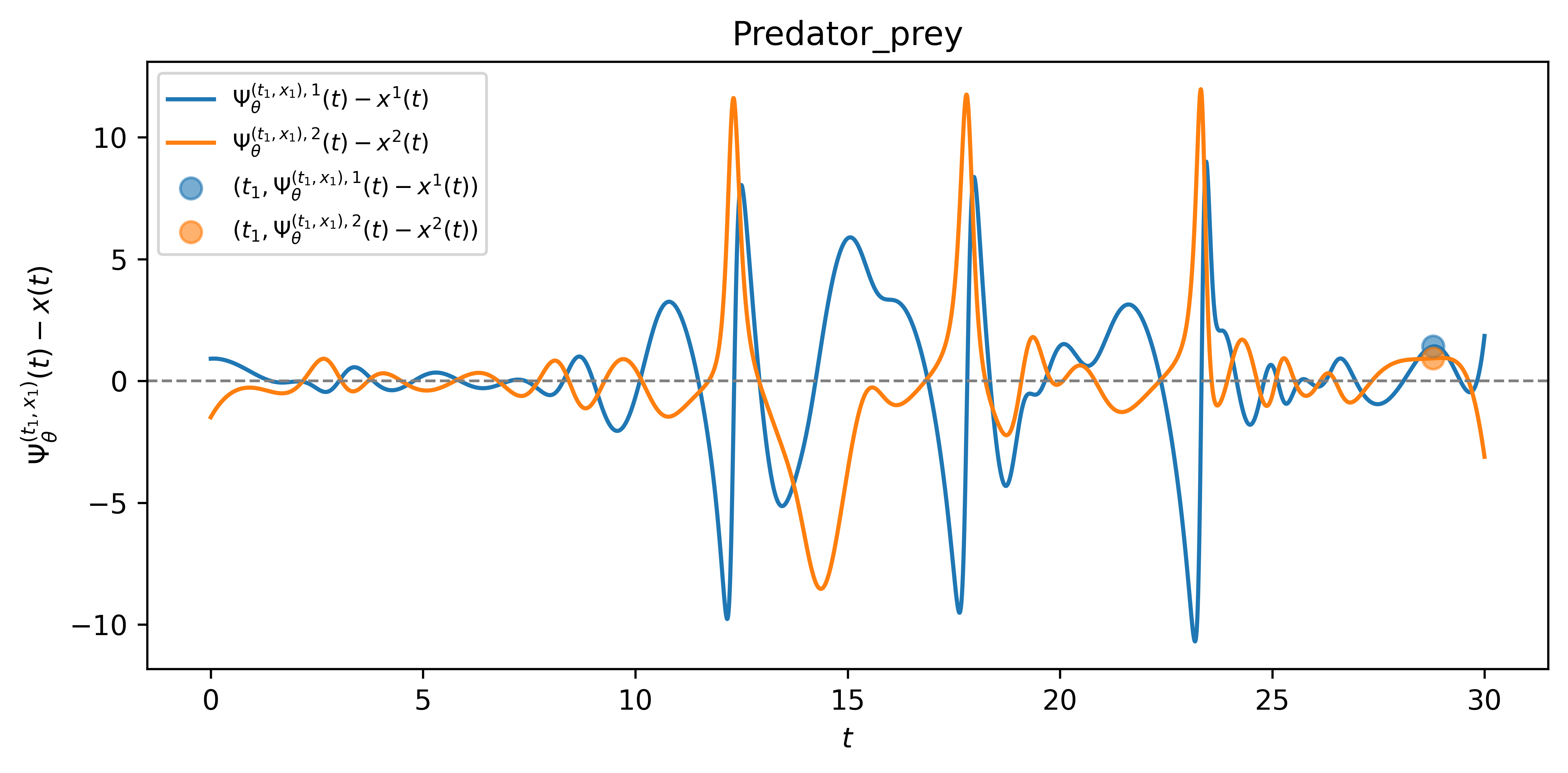}\\
        \includegraphics[width=0.5\linewidth]{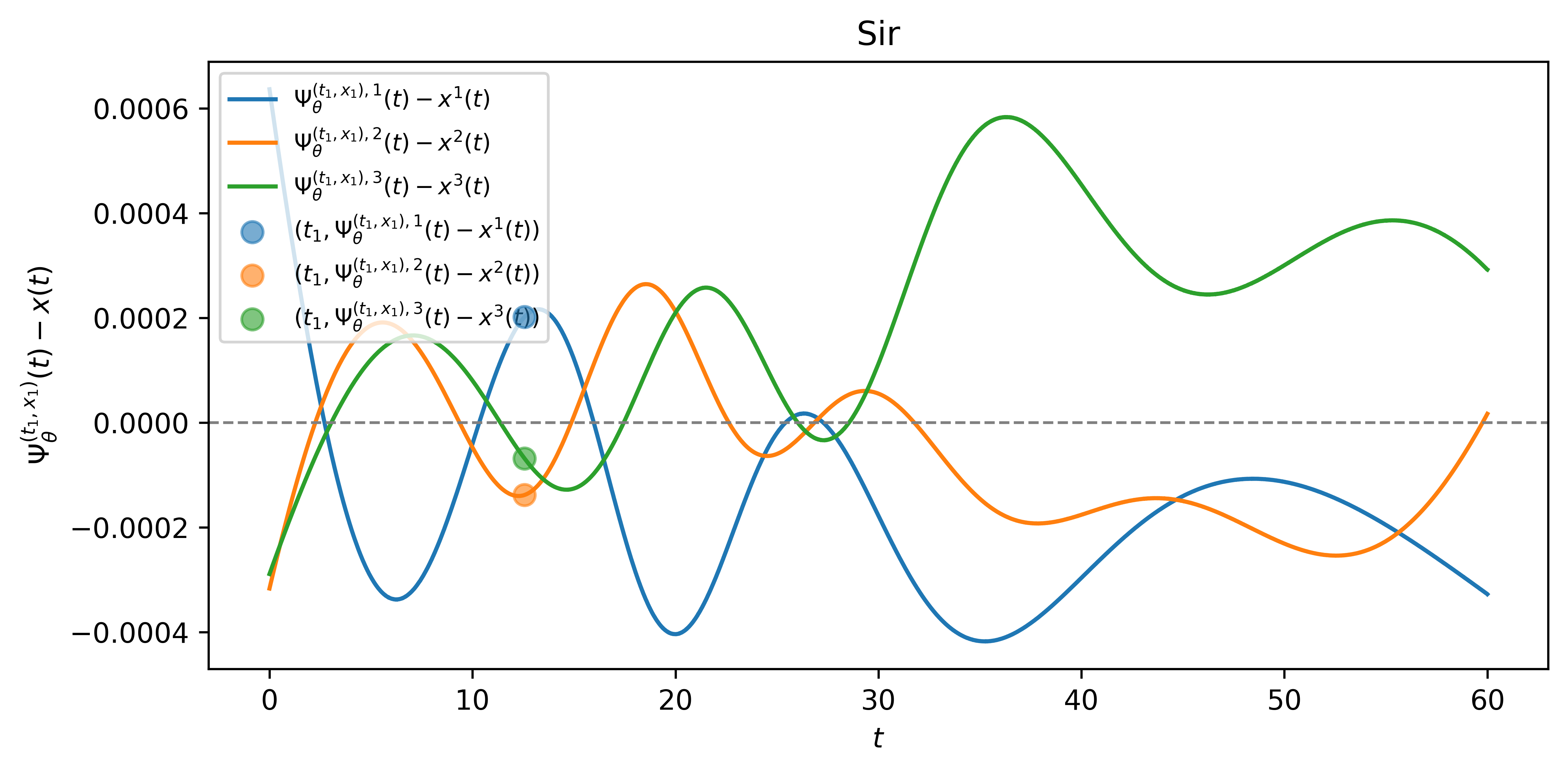} & 
        \includegraphics[width=0.5\linewidth]{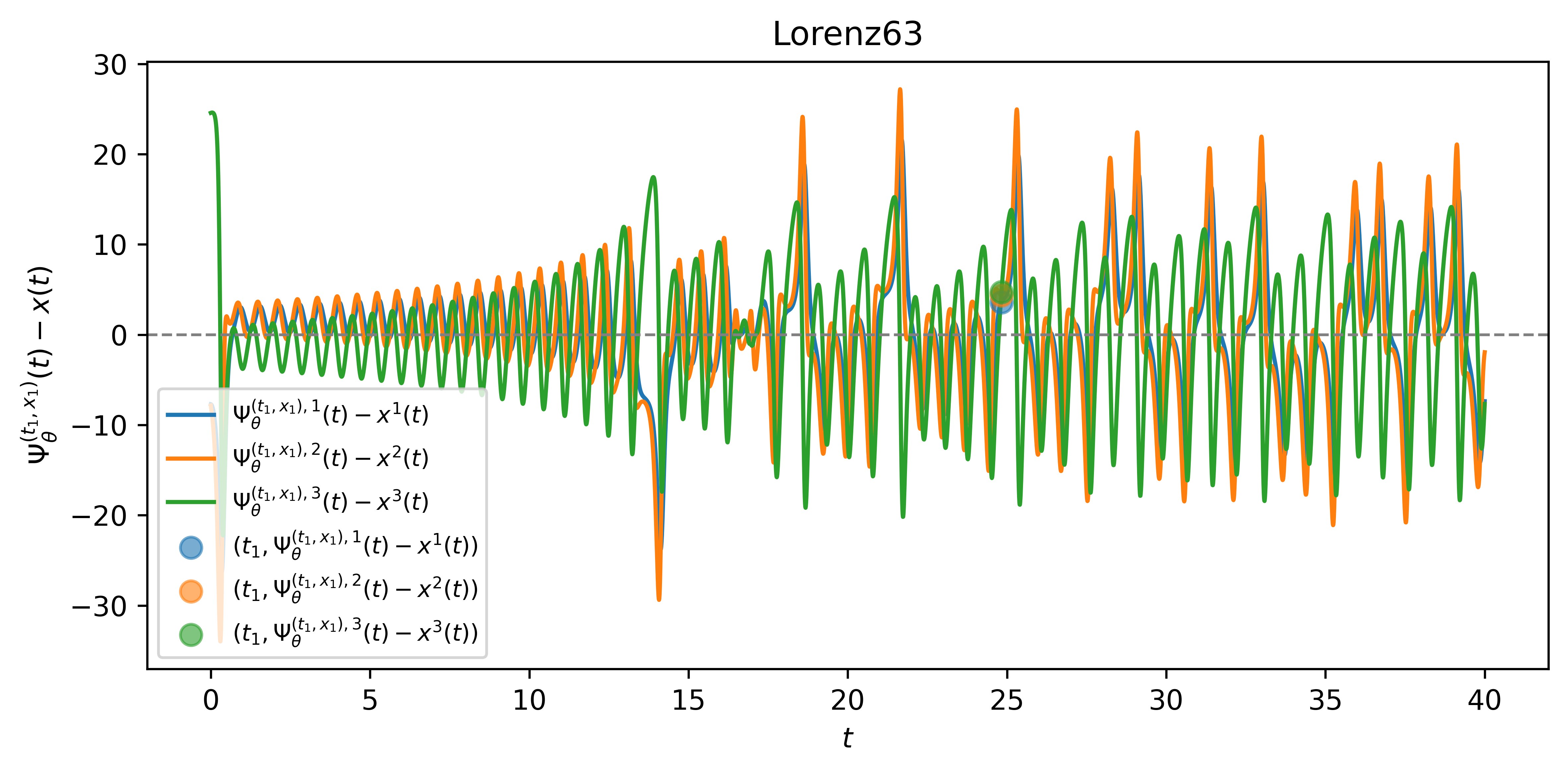}
    \end{tabular}
    \caption{
  Prediction errors for the Latent Twin model. Superscript indices label the state variables to which the errors correspond.
    }
    \label{fig:odeerror}
\end{figure}

Chaotic dynamics are fundamentally difficult to approximate with purely global models. For Lorenz-63, sensitive dependence causes trajectories to diverge quickly, so global Latent Twins trained on arbitrarily distant time pairs capture only coarse trends rather than locally accurate dynamics. To improve local fidelity, we restrict training data to temporally close pairs ($|t_2 - t_1| \leq 0.4$). This biases the Latent Twin toward short-term dynamics rather than global evolution. As shown in \Cref{fig:localLorenz}, reconstructions are considerably more accurate near the initial state with visibly reduced local error. However, this gain comes at the expense of global coverage: the network no longer reproduces full attractor geometry but focuses on faithful short-term evolution.

\begin{figure}
    \centering
    \includegraphics[width=0.48\linewidth]{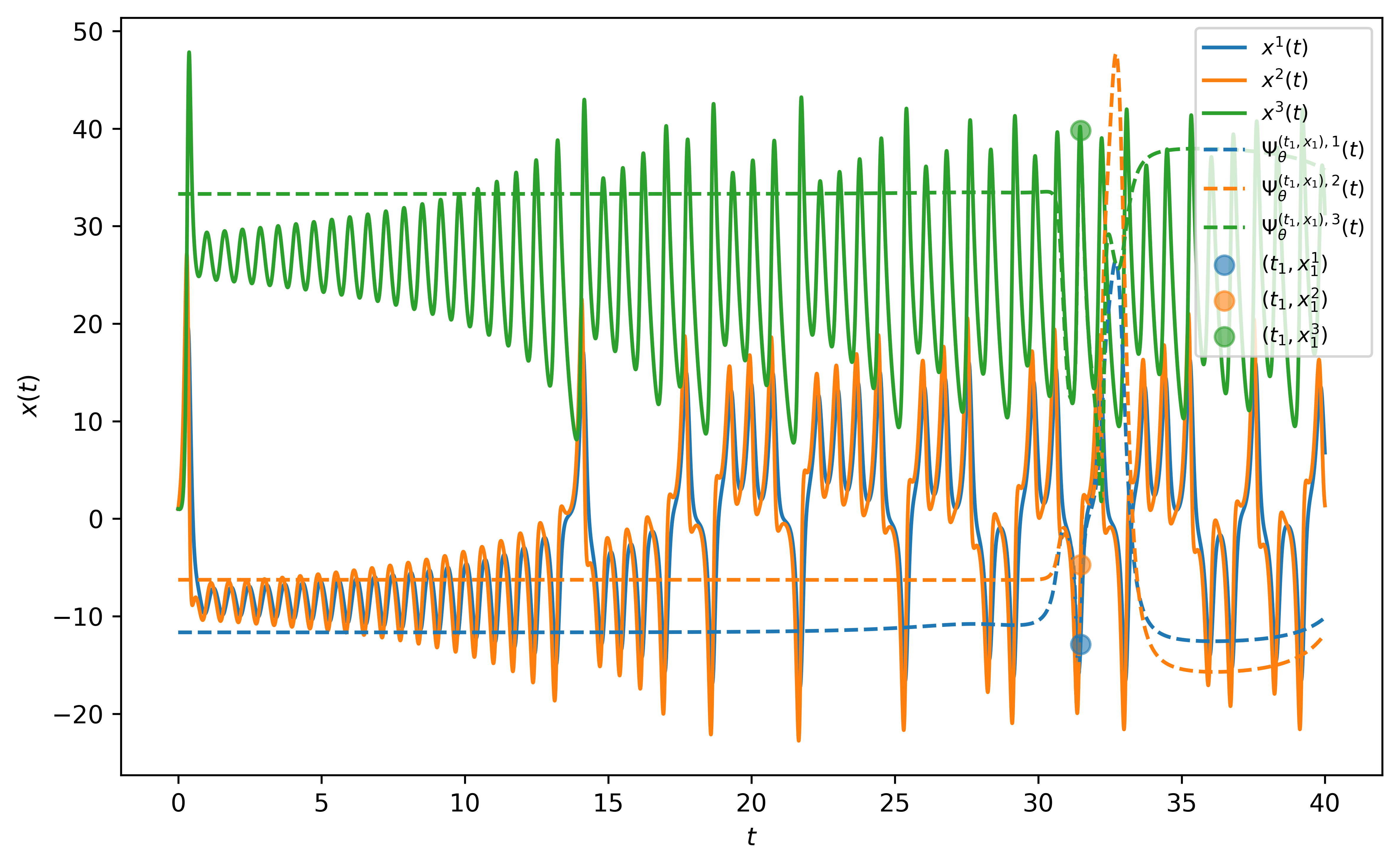}
    \includegraphics[width=0.48\linewidth]{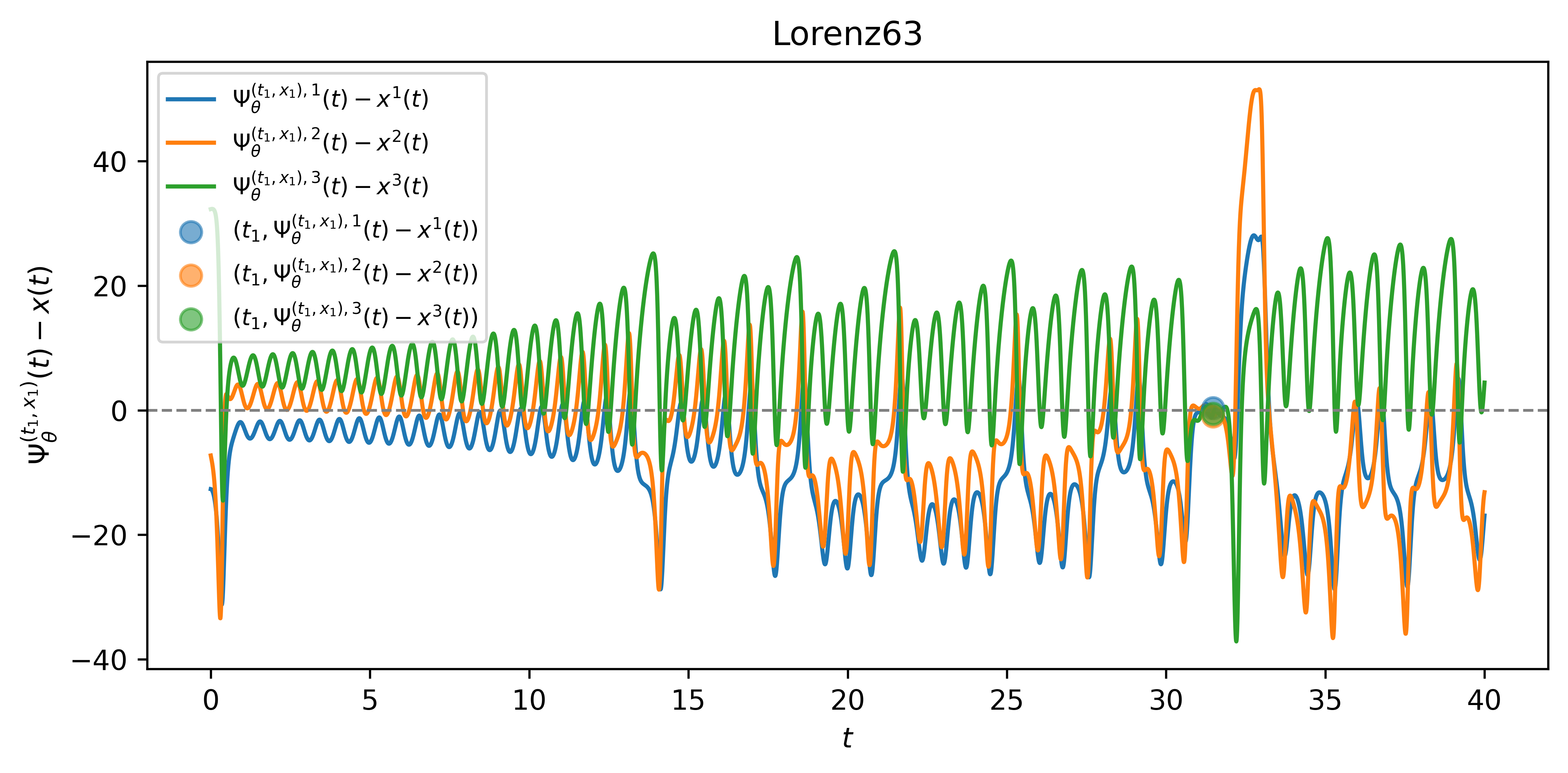}    
    \caption{
    Lorenz--63 predictions when training is restricted to temporally close pairs ($|t_2 - t_1| \leq 0.4$). 
    Left: reconstructed trajectory compared to ground truth. 
    Right: corresponding prediction errors, showing improved local accuracy near the initial state but reduced ability to capture global attractor dynamics.
    }
    \label{fig:localLorenz}
\end{figure}

\emph{Comparison with LSTMs.} Recurrent sequence models such as LSTMs \cite{hochreiter1997long} are widely used for learning temporal dynamics and can achieve strong predictive performance when provided with suitable capacity and data. We focus on LSTMs rather than simpler RNNs \cite{elman1990finding}, as they represent the stronger and more widely adopted standard for sequential modeling.  

In our experiments, we train compact LSTM baselines in a many-to-one (10-to-1) configuration, mapping $(x(t_{k-9}),\dots,x(t_k)) \mapsto x(t_{k+1})$. This setup enables the LSTM to infer dynamics directly from state transitions without explicit time encoding, and recursive application allows for forecasting over longer horizons. To ensure comparability, we restrict the models to the same low-complexity regime as the Latent Twin. Specifically, the LSTMs use 582 parameters for 2D systems and 855 parameters for 3D systems—slightly more than the 246 and 267 parameters of the corresponding Latent Twins. Training data is generated on the same time intervals as in the Latent Twin experiments, with a reduced $\delta t$ so that a $10$-step history captures sufficient dynamical variation. The resulting models achieve good short-term accuracy, particularly in oscillatory systems, the harmonic oscillator and Lotka–Volterra. Over longer horizons, recursive prediction introduces phase drift and error growth, which becomes most pronounced in chaotic systems, Lorenz--63. We present the details and visual comparisons in  \Cref{app:lstm,fig:lstm-dynamics,fig:lstm-errors}.

This comparison is illustrative rather than competitive. LSTMs and Latent Twins pursue different goals: LSTMs provide strong baselines for sequential prediction, while Latent Twins approximate temporal operators in a single evaluation. Together, these perspectives highlight complementary approaches to learning dynamics.  

\emph{Physics-Aware Network Design.}  The experiments above use uniform, uninformed network design across all systems. However, our theoretical analysis in \Cref{sub:odes} shows that one may embed structure directly into Latent Twins for linear ODEs. The construction in \Cref{eq:ltODE} mirrors the exact exponential flow, providing a physics-inspired latent map.

For the harmonic oscillator, we implement the Latent Twin with identity encoder/decoder and structured latent map $m(z,t_1,t_2) = \exp((t_2-t_1)W)z$, embedding the exponential solution operator directly. Importantly, for this experiment, we train the Latent Twin on the denormalized data. This is simply to allow the learned generator to match the true system matrix. The network parameters reduce to $W$, and converge faster than the setups above. 
The learned generator converges to the true system matrix with Frobenius error $\|W-M\|_{\mathrm{F}} \leq 9.5 \times 10^{-5}$, yielding trajectory errors below $4.3 \times 10^{-5}$ and visually indistinguishable results (\Cref{fig:oscillator-err}).

\begin{figure}[t]
  \centering
  \includegraphics[width=0.48\textwidth]{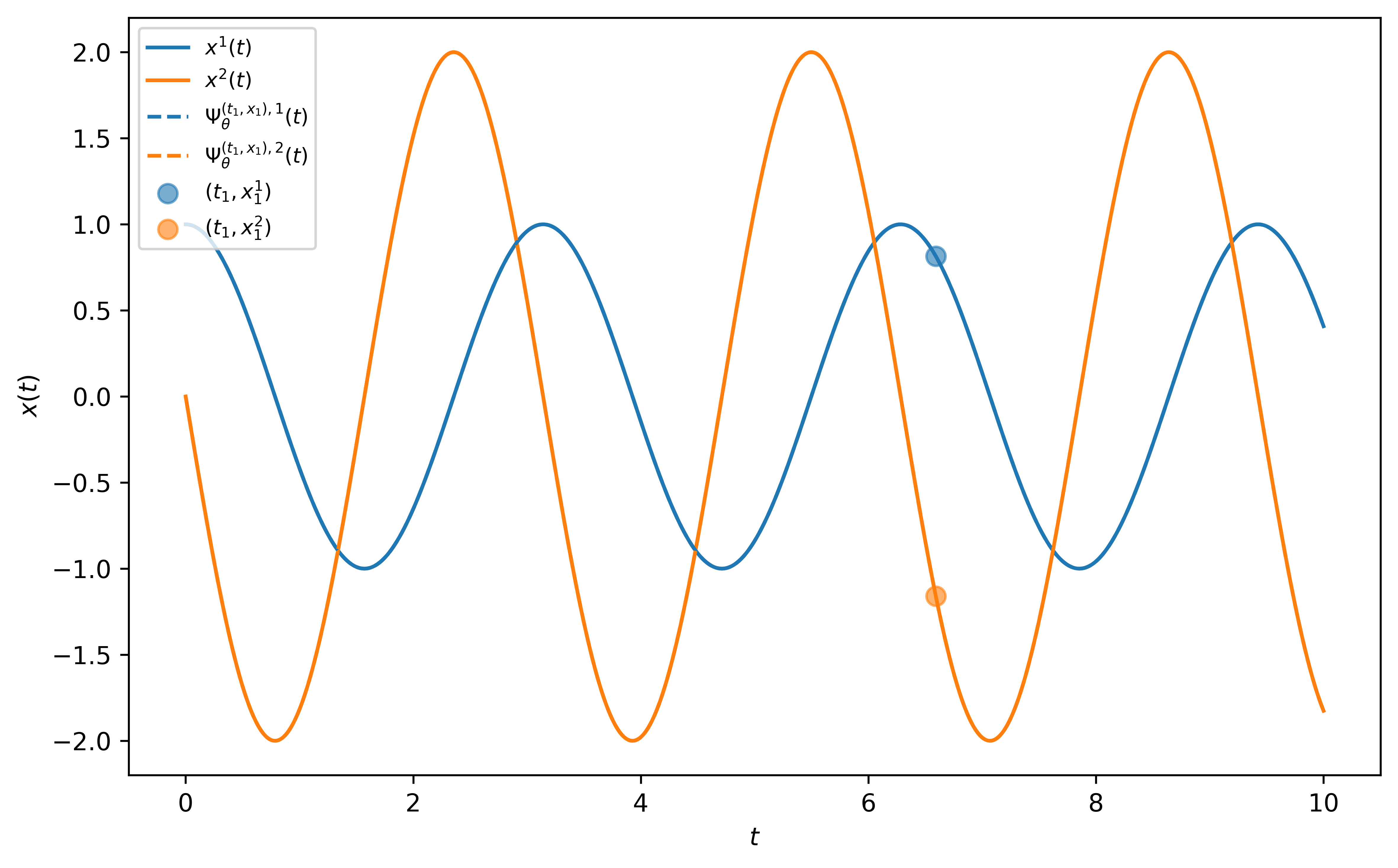}\hfill
  \includegraphics[width=0.48\textwidth]{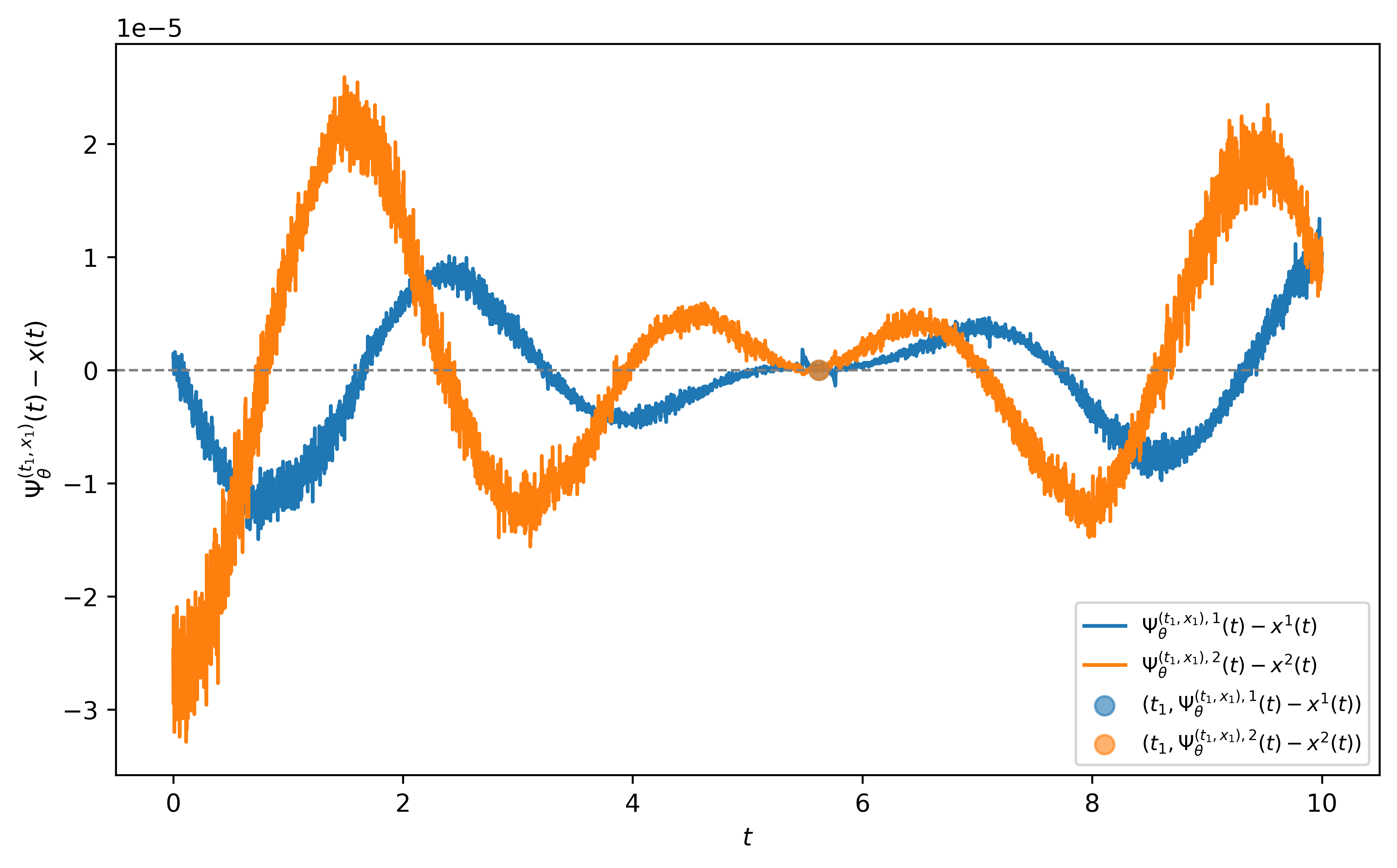}
  \caption{Harmonic oscillator with structured latent map $m(z,h)=\exp(hW)z$. Left: overlapping trajectories. Right: near machine-precision errors.}\label{fig:oscillator-err}
\end{figure}

For nonlinear Lorenz--63, we train a local Latent Twin with generator $W(t_1,x_1,t_2)$ such that $x_2 \approx \exp^{(t_2-t_1)W(t_1,x_1,t_2)}x_1$. Here $W$ is parameterized by a feedforward network with two hidden Tanh layers. As shown in \Cref{fig:lorenz-local}, this captures short-horizon dynamics well but errors grow with $|t_2-t_1|$, reflecting the impossibility of global linear surrogates for chaotic behavior. This construction can also be interpreted through the lens of \emph{hypernetworks}~\cite{Ha2017HyperNetworks}, in which the latent time-step operator is dynamically generated by a compact network rather than stored explicitly.

\begin{figure}[t]
  \centering
  \includegraphics[width=0.62\textwidth]{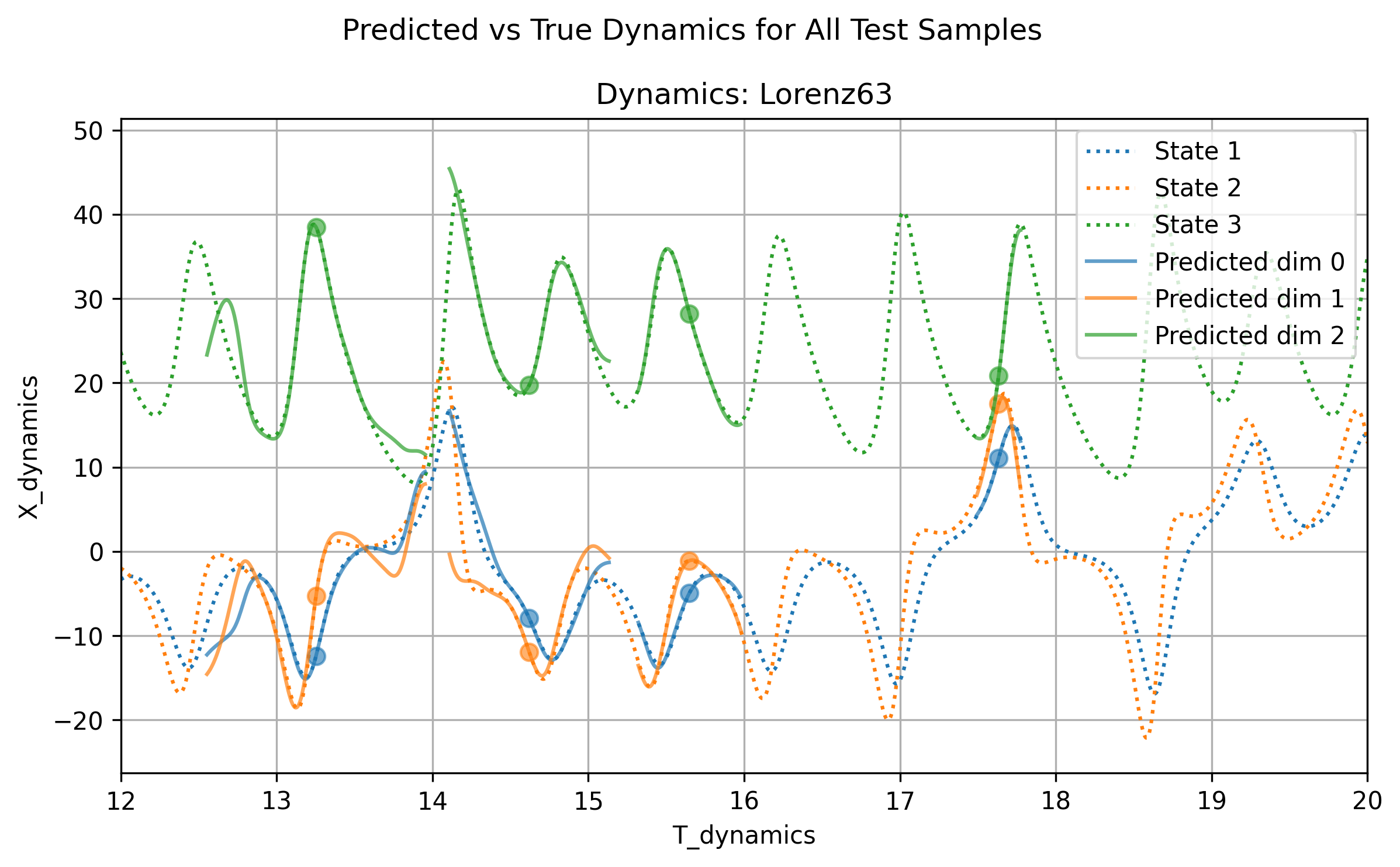}
  \caption{Lorenz--63 local Latent Twin $x_2 \approx \exp((t_2-t_1)W(t_1,x_1,t_2))x_1$. Accurate locally but errors grow with horizon, consistent with chaotic sensitivity.}
  \label{fig:lorenz-local}
\end{figure}

Together, these experiments demonstrate the power of theory-aware architectures: exact recovery for linear systems and effective local linearization for chaotic dynamics, showing how architectural design fundamentally shapes approximation quality in the Latent Twin framework.

\subsection{PDE Experiment: Shallow Water Equations}\label{sub:swe}

The shallow water equations (SWE) provide a classical model for geophysical fluid dynamics in regimes where horizontal scales dominate the vertical. They capture essential phenomena such as wave propagation, tides, and large-scale circulation, and thus serve as an indispensable testbed for surrogate modeling approaches \cite{vreugdenhil2013numerical,temam2024navierstokes,vallis2017atmospheric}. Despite their relative simplicity compared to the full Navier-Stokes equations, SWE simulations remain computationally intensive at high resolution, motivating the search for efficient data-driven surrogates.

In this experiment, we investigate the application of the Latent Twin framework to the SWE, with the goal of learning low-dimensional surrogates that capture the system’s temporal evolution. In this section, we present a streamlined, high-level overview of our numerical investigations to highlight the central story. Full details of the experimental setup—including PDE formulation, parameter choices, boundary conditions, discretization scheme, data generation protocol, network architecture, and learning regime—are provided in \Cref{app:swe-details} to ensure completeness and reproducibility.

The governing equations of the SWE describe the coupled dynamics of surface elevation $\eta$ and horizontal velocities $(u,v)$. We adopt a geophysical parameter regime, with mean depth $H=100$~m, gravitational acceleration $g=9.81$~m/s$^2$, and Coriolis forcing under a $\beta$-plane approximation. Reflective boundaries prevent inflow and outflow. To generate training and testing data, we initialize each simulation with a \emph{randomized Gaussian perturbation} of the free surface elevation, while setting the velocity field to zero. These stochastic perturbations, with centers drawn uniformly across the domain, produce a wide variety of wave patterns and flow regimes, ensuring that the resulting dataset captures a diverse range of dynamical behaviors.

To numerically solve the SWE, we build on a publicly available Python implementation \cite{braendshoi2019shallow}, which provides a structured finite-difference solver on Cartesian grids. This ensures that our experiments are reproducible and based on a widely accessible code base. The model is discretized on a $64 \times 64$ grid with a stable time step $\Delta t \approx 51$~s, and trajectories are simulated up to $T \approx 3 \times 10^4$~s. From these simulations, we extract random pairs of states $(x(t_1), x(t_2))$, resulting in a dataset of $J=32{,}768$ paired time-state samples. Each state is a tensor of size $3 \times 64 \times 64$ representing $(\eta,u,v)$ at the given time. \Cref{fig:swe_states} presents three representative testing states (not seen by the Latent Twin), where only the surface height is shown while the velocity field is omitted for compactness. For neural network learning, data are standardized to zero mean and unit variance.

\begin{figure}
    \centering
  \includegraphics[width=1\linewidth,
    trim=0 375 0 0,   % <left> <bottom> <right> <top>
    clip]{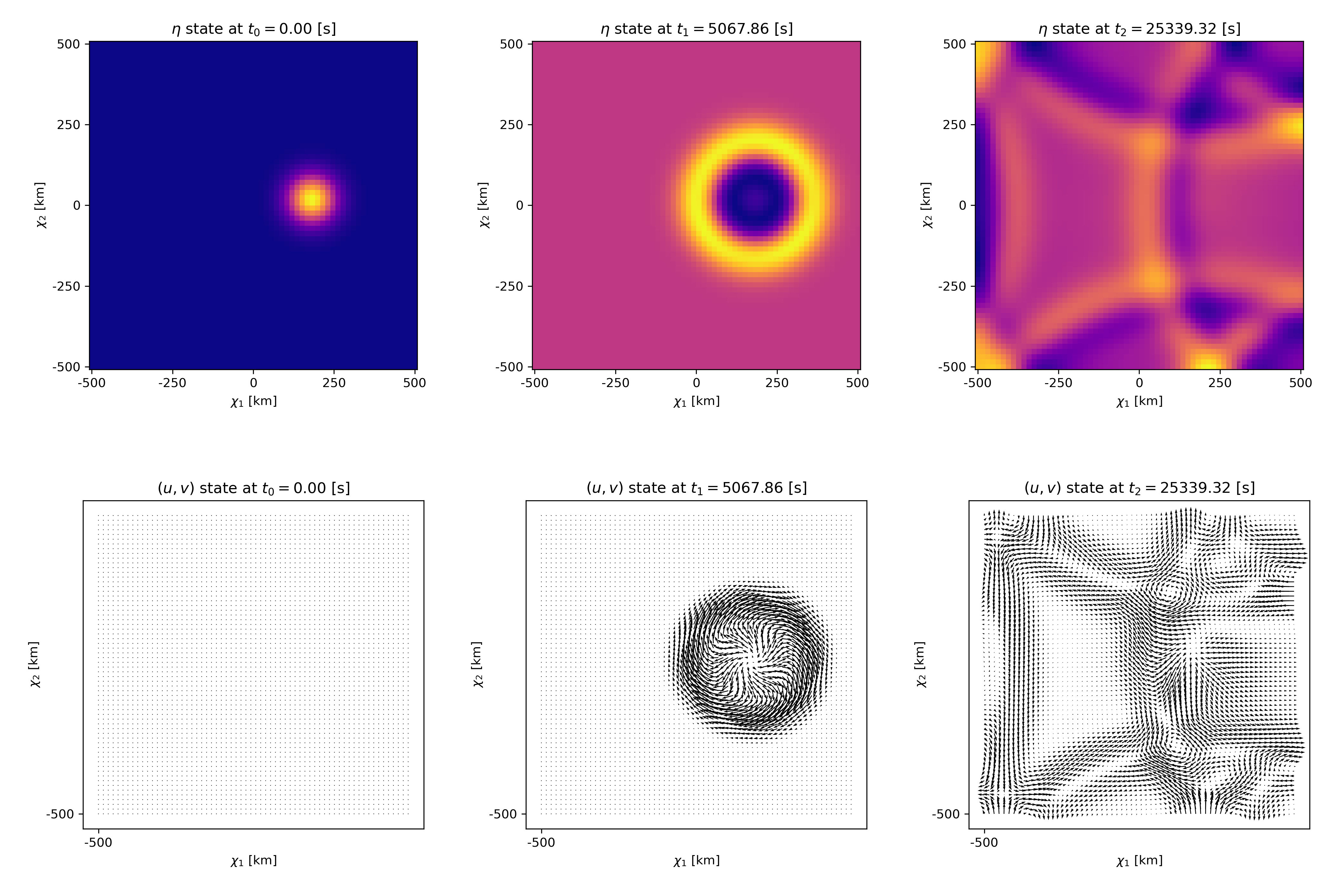}%
    \caption{\emph{Sample data of surface elevations $\eta$:} The leftmost image shows the randomly selected initial condition height, generated according to \Cref{eq:swe_initial}. The subsequent columns display the surface elevations of the shallow water equations at two representatively chosen time points, $t_1$ (middle) and $t_2$ (right). 
    }
    \label{fig:swe_states}
\end{figure}

The Latent Twin network architecture consists of a simple multilayer perceptron autoencoder that compresses each state into a latent vector of dimension 128, and a latent mapping  $m$ that advances latent states across arbitrary time pairs. The architecture is intentionally simple to emphasize the framework rather than the network design—three layers with ReLU activations for the autoencoder, and a single affine linear layer for the latent map. Training follows the empirical risk formulation \Cref{eq:empirical}, balancing reconstruction loss and temporal prediction loss. Optimization is performed using Adam for 1,000 epochs. The 
reconstructions $\tilde x$ are shown in \Cref{fig:swe_autoencoded}.

\begin{figure}
    \centering
  \includegraphics[width=1\linewidth,
    trim=0 375 0 0,   % <left> <bottom> <right> <top>
    clip]{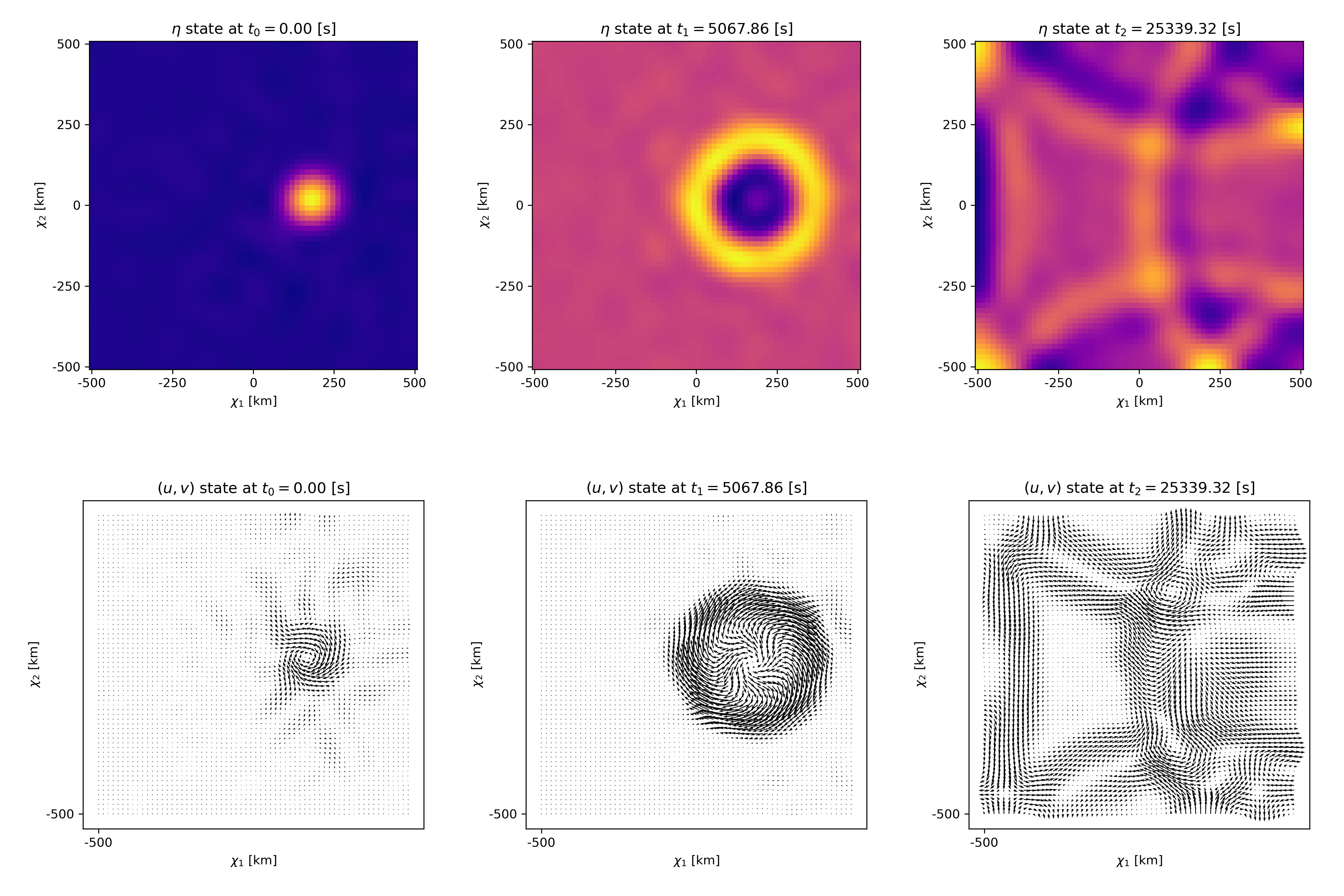}%    
    \caption{\emph{Autoencoded data:} The three images display the autoencoded reconstructions $(d\circ e) (x(t))$ of the SWE' surface elevations $\eta$ at three time points $t_0$ (left), $t_1$ (middle) and $t_2$ (right)}
    \label{fig:swe_autoencoded}
\end{figure}

The trained Latent Twin achieves strong performance across the test set. Representative Latent Twin predictions are shown in \Cref{fig:swe_pred1}, where the third and first image correspond to forward and backward in time predictions at times $t_0$ and $t_2$, respectively, given a randomly chosen state from the test set at time $t_1$. The model attains relative reconstruction errors of approximately $2.3 \times 10^{-2}$ and prediction errors of $4.3 \times 10^{-2}$, demonstrating that the learned latent representation successfully captures both spatial structure and temporal dynamics.

\begin{figure}
    \centering
  \includegraphics[width=1\linewidth,
    trim=0 375 0 0,   % <left> <bottom> <right> <top>
    clip]{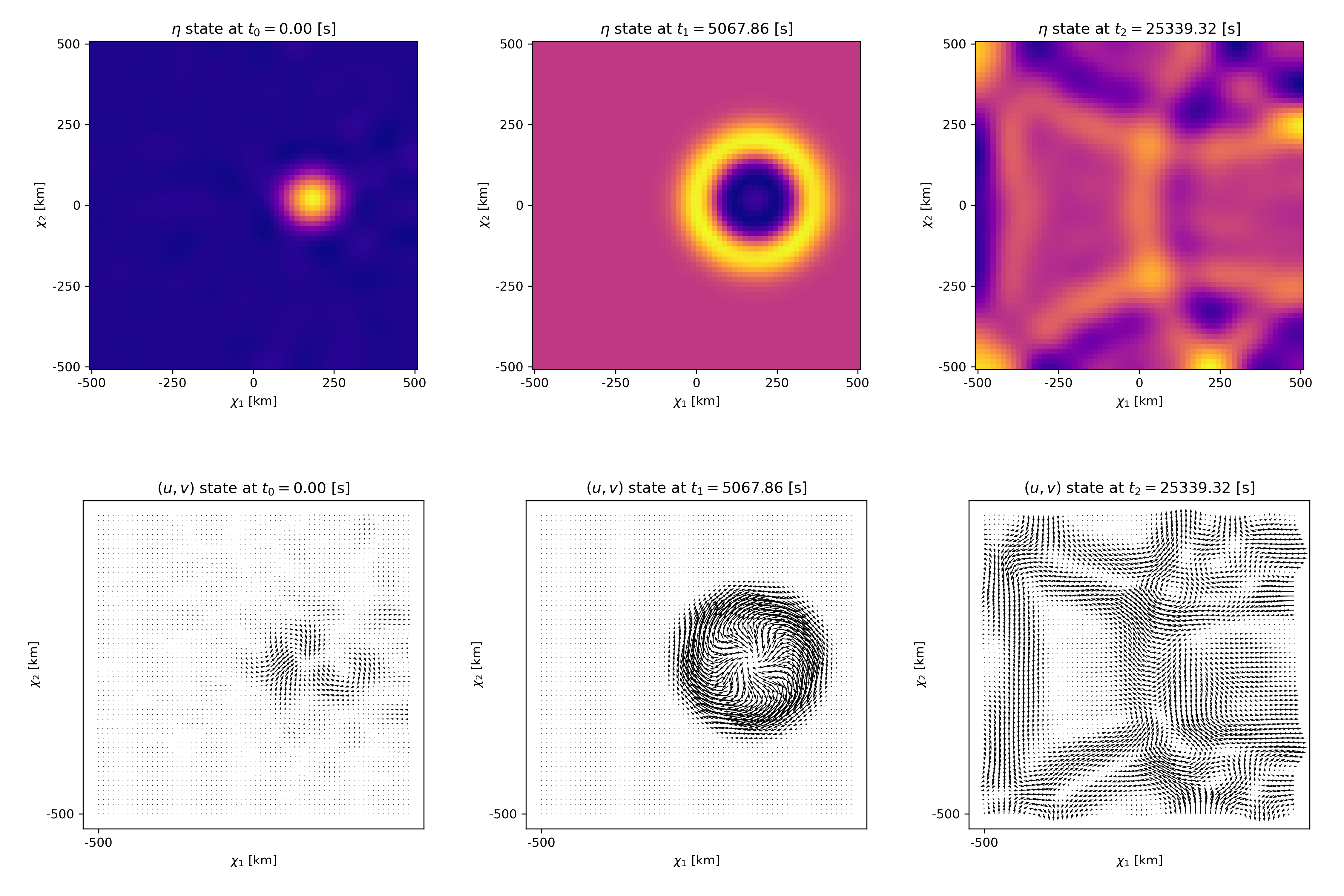}%        
    \caption{\emph{Predictions of surface elevations $\eta$:} The middle image shows the current state at $t_1$, while the first and third image displays the Latent Twin predicted state at $t_0$ and $t_2$, resp., that is a forward and backward in time prediction.}
    \label{fig:swe_pred1}
\end{figure}

Alternatively, fully data-driven dynamics can be generated using operator-learning approaches such as DeepONet \cite{lu2021learning} and the Fourier Neural Operator (FNO) \cite{li2020fourier}. These methods typically focus on learning mappings from PDE settings (e.g., boundary or forcing function) directly to the solution field, whereas Latent Twin learns a surrogate solution operator in the latent-space that propagates between states. To explore this connection, we train a DeepONet to learn the operator from the boundary condition $(\eta_0, u_0, v_0)$ to the full solution $(\eta, u, v)$; see \Cref{app:swe-details} for implementation details. Results are reported in \Cref{fig:deepONet}. On the test case, the DeepONet attains a relative reconstruction error at $t_2$ of $6.2\times 10^{-1}$. See \Cref{fig:SWErelErrorDeepOnet} for a comparison of relative errors between DeepONet and the Latent Twin, where the Latent Twin consistently achieves lower errors. In this example, the Latent Twin not only produces visibly sharper and more accurate reconstructions than the DeepONet (compare \Cref{fig:swe_pred12} and \Cref{fig:deepONet}), but also demonstrates the advantage of embedding the dynamics into a latent operator framework that preserves structure. Operator-learning models naturally provide evaluations at arbitrary points in the domain, while Latent Twins here produce reconstructions at a prescribed resolution tailored to their latent representation. Although the present implementation of Latent Twin is mesh-dependent, the framework itself is not restricted to fixed discretizations and can generalize across meshes, for example by incorporating mesh-location inputs into the latent map. More broadly, compared to established operator-learning methods, Latent Twins offer a flexible, theory-backed approach that combines interpretability with strong predictive accuracy, underscoring their promise as a complementary direction for scientific machine learning. We leave the implementation of a mesh-free Latent Twin as well as an exploration of the connection between the Latent Twin and common Neural Operator approaches as a direction for future work.

\begin{figure}
    \centering
    \includegraphics[width=0.9\linewidth]{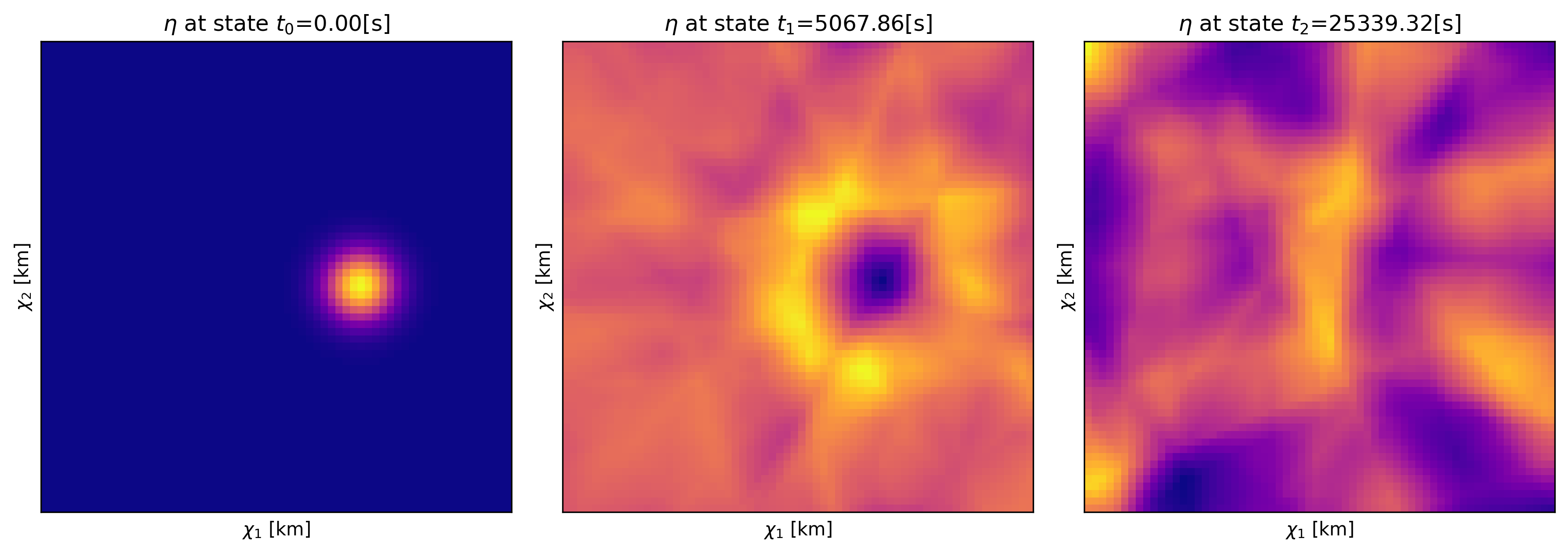}
    \caption{State prediction of surface elevation $\eta$ from initial condition using Deep Operator Network. We plot the initial height $\eta$ on the left, the prediction of $\eta$ at $t_1 = 5067.86$\.s in the center, and the prediction of $\eta$ at $t_2 = 25339.32$\.s on the right}
    \label{fig:deepONet}
\end{figure}

\begin{figure}
    \centering
    \includegraphics[width=0.75\linewidth]{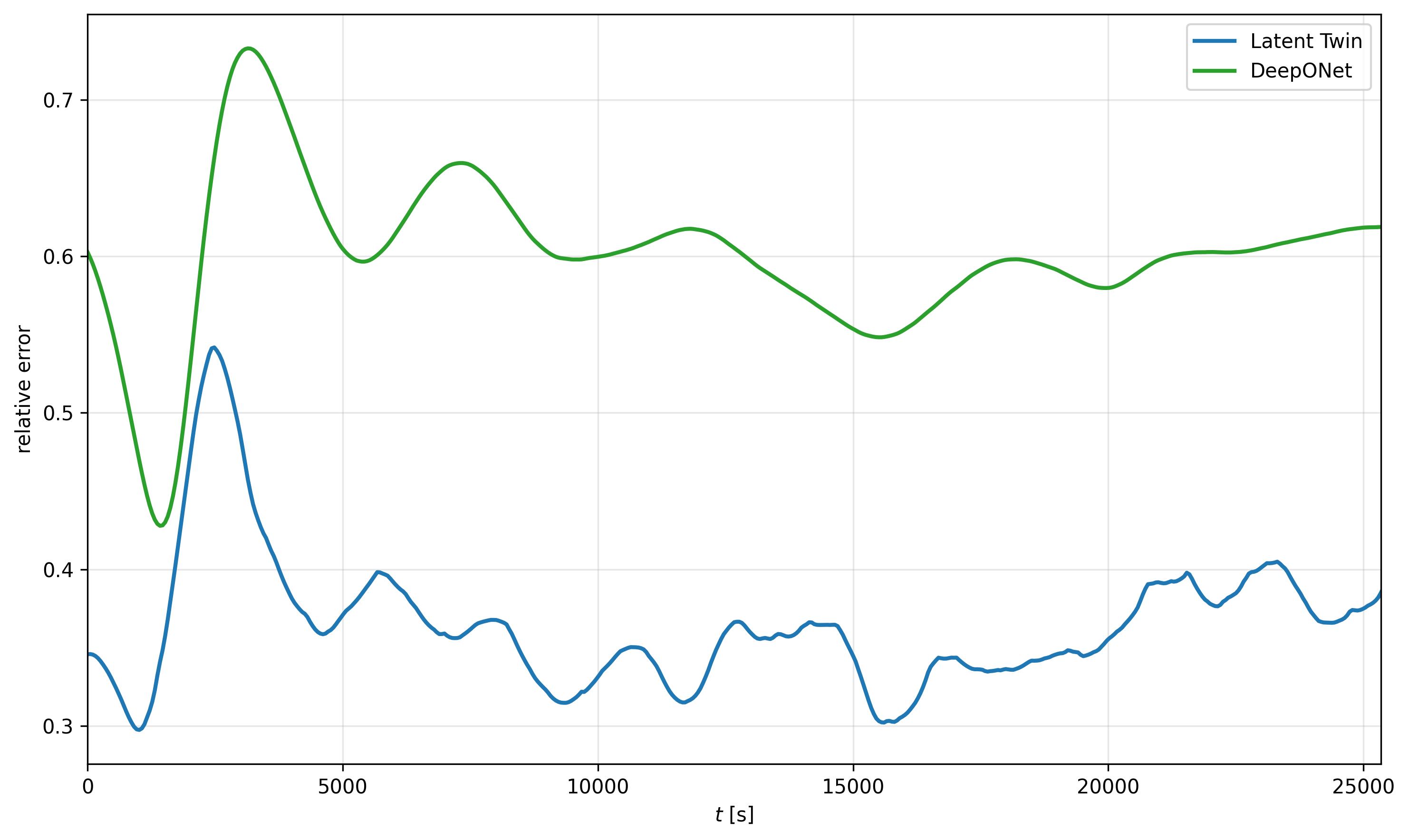}
    \caption{Comparison of relative reconstruction errors of the states across the time interval. The Latent Twin consistently outperforms DeepONet accross all time points.}
    \label{fig:SWErelErrorDeepOnet}
\end{figure}

In realistic scenarios, one rarely has access to ``full'' states. To emulate this, we downsample observations by factors of $2,4,8$ and add Gaussian noise (see \Cref{app:swe-details} for details), see \Cref{fig:swe_noisyobservation}. We then perform latent-space inference: given a noisy observation $y_{\text{obs}}(t_1) = P(x(t_1))+\epsilon$, we solve
\[
    \hat z(t_1) \in \arg\min_{z} \ \| (P \circ d)(z) - y_{\text{obs}}(t_1) \|^2,
\]
to recover a latent representation consistent with the observation. Once $\hat z(t_1)$ is inferred, the Latent Twin can propagate it to predict states at other times. 

\begin{figure}
    \centering
  \includegraphics[width=1\linewidth,
    trim=0 390 0 0,   % <left> <bottom> <right> <top>
    clip]{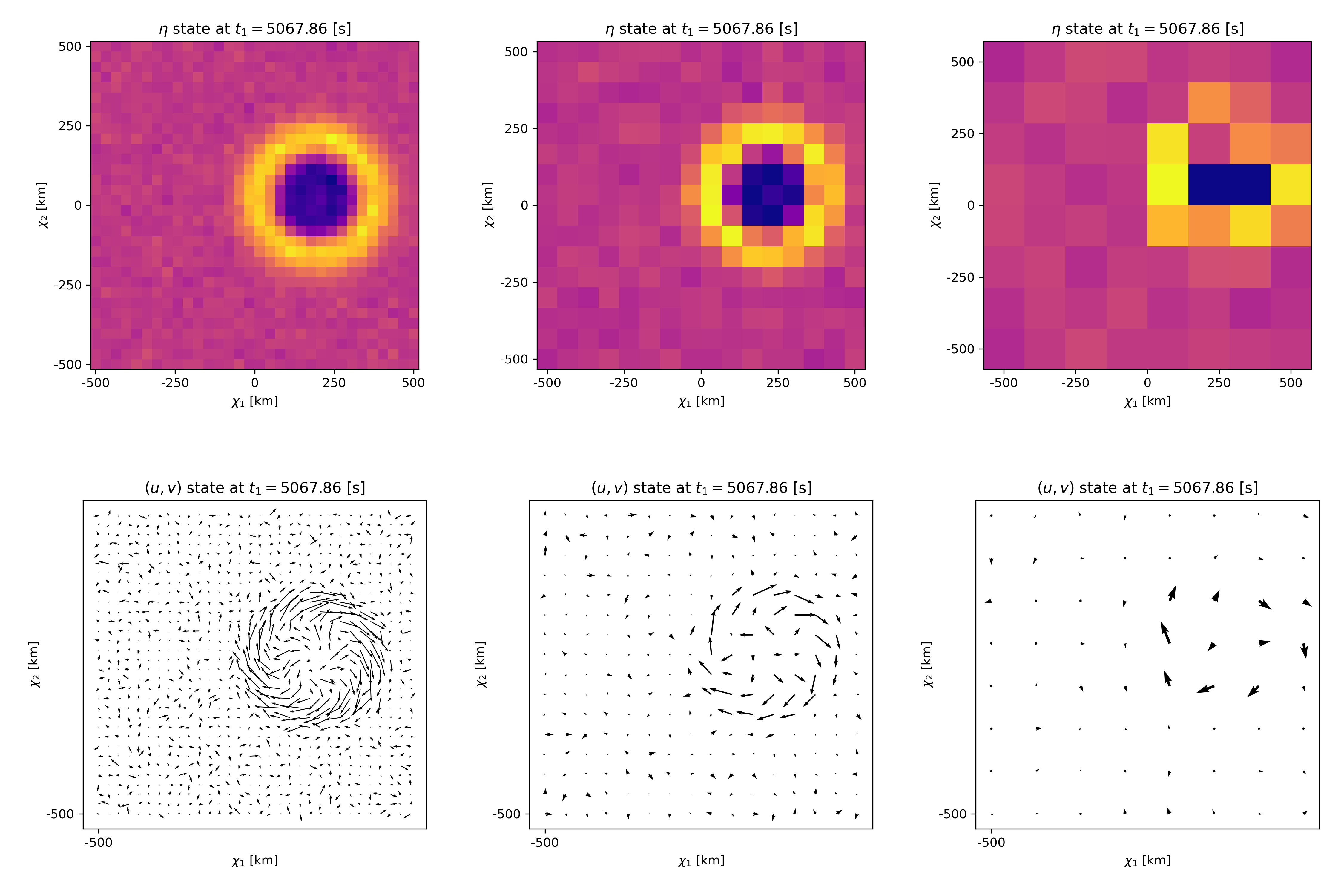}%            
    \caption{\emph{Observations:} From left to right, the figure displays varying levels of observational resolution (only surface elevations $\eta$ shown), ranging from $3 \times 32 \times 32$ to $3 \times 16 \times 16$ and down to $3 \times 8 \times 8$ observations. Each observation is corrupted by additive noise of the same noise level.}
    \label{fig:swe_noisyobservation}
\end{figure}

\Cref{fig:swe_noisy} illustrates the results for the most challenging case with $3 \times 8\times 8$ observations. Despite the severe degradation, the framework reconstructs plausible high-resolution states and predicts their evolution forward and backward in time. This demonstrates the strong potential of Latent Twins for data assimilation and forecasting from sparse, noisy measurements.

\medskip
\noindent

The Latent Twin framework thus provides a compact and flexible surrogate for the SWE: it not only reconstructs and forecasts from clean data but also enables inference and prediction from sparse and noisy observations, making it a promising candidate for real-time geophysical forecasting tasks where classical solvers remain computationally prohibitive.

To place these results in the context of established data assimilation (DA) practice, we also compare against the strong-constraint four-dimensional variational method (4D-Var) \cite{LeDimet_1986,asch2016book,evensen2022data}. In our setup, 4D-Var seeks a state estimate $x_2$ that minimizes a quadratic cost functional balancing a background prior and observation misfit over an assimilation window, subject to the SWE dynamics (see \Cref{app:swe-details} for the precise formulation in \Cref{eq:swe-4dvar} and implementation details). We employ the same observation operator, namely the downsampling operator $P$ used in the latent-twin noisy-observation experiments above, and draw observation noise from the identical distribution to ensure a fair comparison. \Cref{fig:swe_4dvar} illustrates a representative case with a single coarse, noisy observation at $t_1=\num{5067.87}$\,s (resolution $3\times 8\times 8$). Starting from a background at $t_1$ (left), 4D-Var produces an analysis at $t_1$ (middle) that recovers the dominant spatial structure consistent with the observation, which is then forecast to $t_2$ (right).

Comparing \Cref{fig:swe_noisy} and \Cref{fig:swe_4dvar}, the 4D-Var analysis at the observation time is reasonable, but its forecast degrades rapidly once outside the assimilation window, whereas the Latent Twin—trained purely from data—reconstructs coherent states from sparse, noisy observations and propagates them forward with higher fidelity, capturing later-time dynamics more accurately. See \Cref{fig:SWErelErrorLatentTwin4Dvar} for a comparison of the relative reconstruction errors of the states $x$ over the considered time interval. Notably, the Latent Twin used here is fully data-driven and evaluated by a lightweight surrogate, while 4D-Var requires both a forward model for propagation and an adjoint for gradient-based estimation, which introduces substantial computational complexity and implementation effort. This contrast reflects different information budgets rather than a direct competition. 

\begin{figure}
    \centering
    \includegraphics[width=0.75\linewidth]{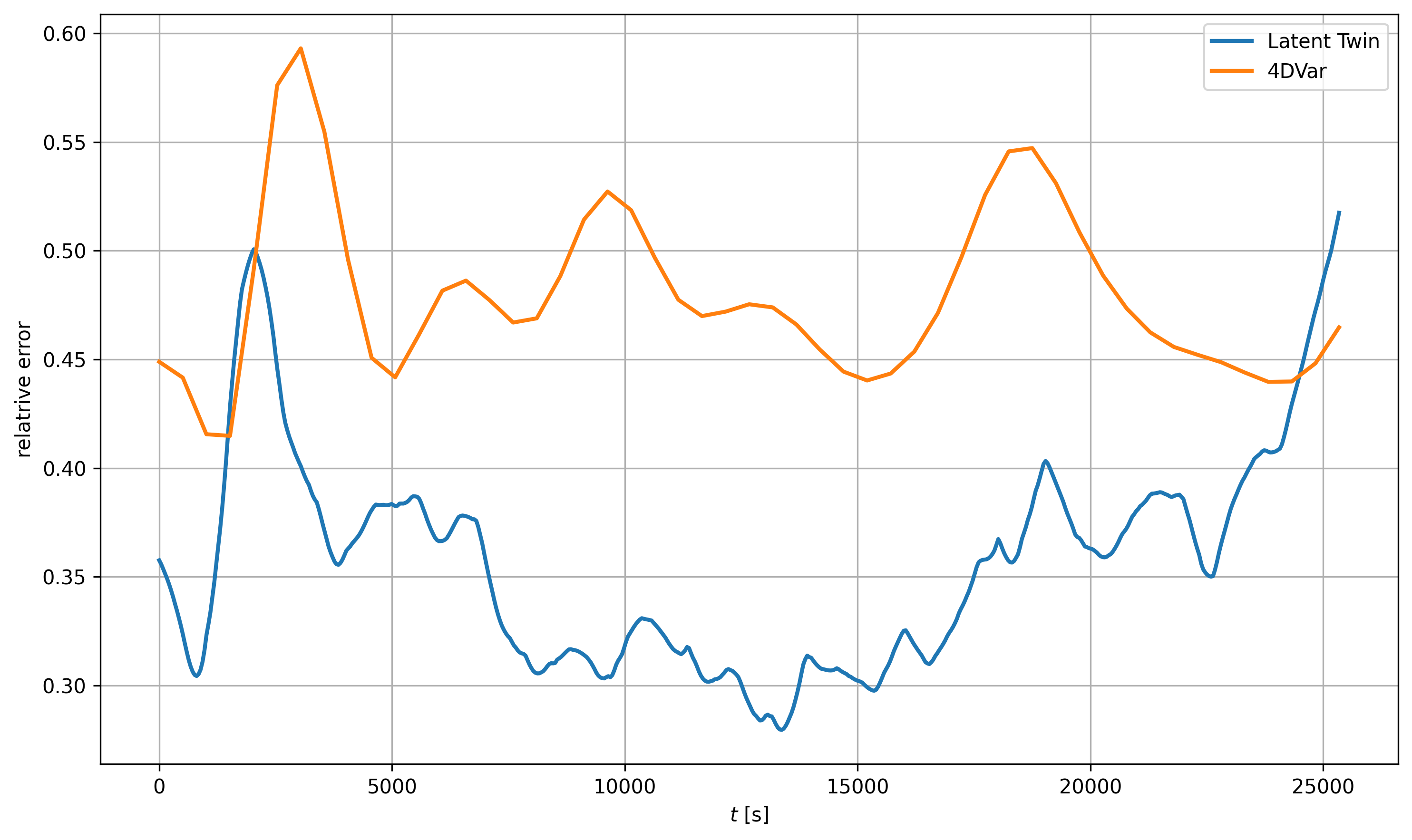}
    \caption{Comparison of relative reconstruction errors of the states across the time interval. The Latent Twin consistently outperforms 4D-Var, with only a few time points where 4D-Var achieves slightly better reconstructions.}
    \label{fig:SWErelErrorLatentTwin4Dvar}
\end{figure}

The Latent Twin is trained offline on a large dataset with time stamps spanning $[0,\num{25339.32}]$\,s and learns a global surrogate of the flow, whereas 4D-Var exploits detailed local model and adjoint information within a single assimilation window. The two methods are therefore complementary: 4D-Var incorporates physics and principled priors through covariance models, while Latent Twins leverage datasets to build fast surrogates that not only reconstruct but also robustly forecast the system beyond the assimilation window.

In summary, Latent Twins demonstrate a stronger ability than 4D-Var to generalize beyond the assimilation window in this experiment, underscoring their promise as compact and flexible surrogates for real-time geophysical prediction.

\begin{figure}
    \centering
  \includegraphics[width=1\linewidth,
    trim=0 375 0 0,   % <left> <bottom> <right> <top>
    clip]{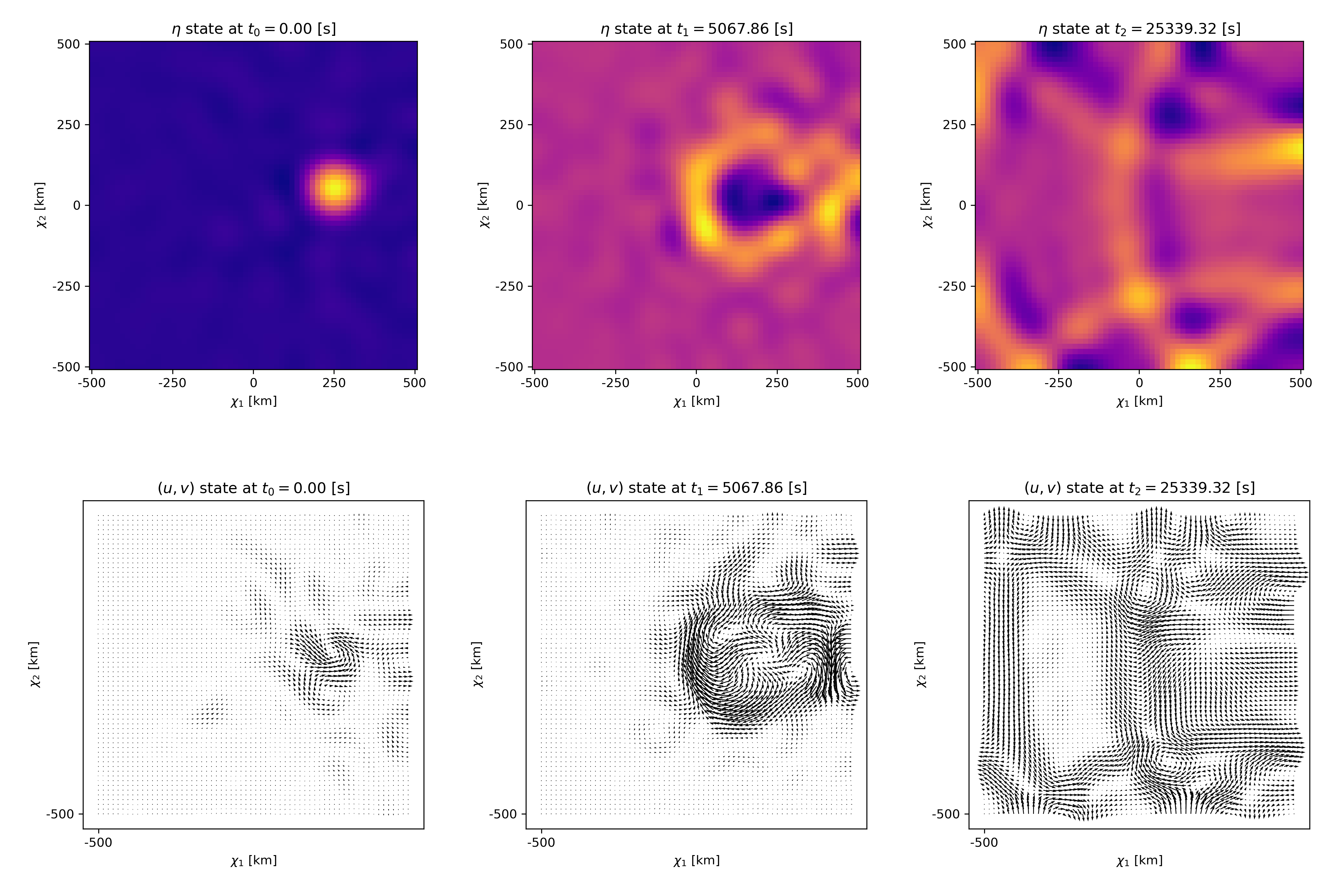}%            
    \caption{\emph{State prediction of surface elevation $\eta$ from noisy, downsampled observations.} Middle image: Latent Twin reconstruction at $t_1$ from a coarse $3 \times 8\times 8$ noisy observation at $t_1$ seen in \Cref{fig:swe_noisyobservation}. Left: reconstructed state at $t_0$. Right: predicted state at $t_2$.}
    \label{fig:swe_noisy}
\end{figure}

\begin{figure}
    \centering
    \includegraphics[width=\linewidth]{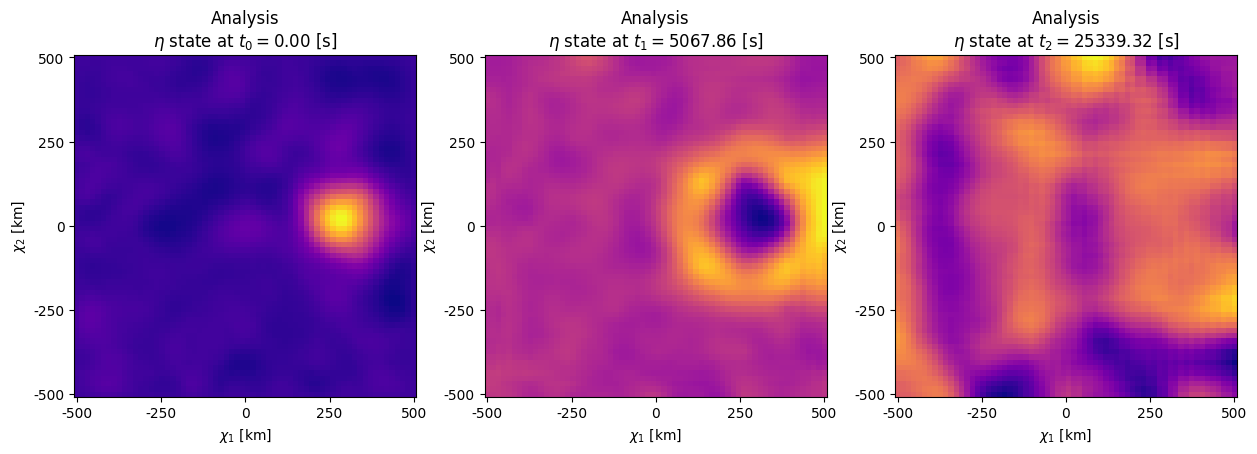}
    \caption{\emph{State prediction of surface elevation $\eta$ from coarse, noisy observations using 4D-Var.} 
    Left: analysis state at $t_0$. 
    Middle: 4D-Var analysis at $t_0$ inferred from a $3 \times 8 \times 8$ noisy observation at $t_1=5067.87$\,s. 
    Right: forecasted state at $t_2$ obtained by propagating the analysis forward with the SWE model.}
    \label{fig:swe_4dvar}
\end{figure}

\subsection{Experiments with Geopotential Height}
The NCEP/NCAR Reanalysis dataset provides a long-term, high-resolution global record of atmospheric conditions by combining observations with numerical weather prediction models. It spans from 1948 to the present and includes variables such as geopotential height, temperature, wind, humidity, and surface pressure \cite{kalnay2018ncep}. Among these, geopotential height is a central diagnostic in atmospheric sciences: it indicates the altitude of a given pressure surface and reflects the large-scale circulation patterns that influence regional and global weather.

Traditional forecasting models for such fields rely on solving the Navier-Stokes equations (often in the form of the primitive equations) within numerical weather prediction dynamical cores \cite{kalnay2018ncep,Muller2015NUMA}. While physically faithful, these solvers become computationally prohibitive at high resolution and long time horizons, especially when ensembles or reanalysis runs are required. This motivates the search for efficient data-driven surrogates. In this context, the Latent Twin framework offers a promising alternative: by learning reduced latent representations and direct evolution operators from historical reanalysis data, it can provide fast and accurate forecasts without the full expense of numerical solvers.

We focus on six-hourly geopotential height fields from the NCEP/NCAR Reanalysis spanning the years 1948--1954. The raw data are provided on a $73 \times 144$ latitude–longitude grid, which we subsample to $64 \times 128$ for computational tractability while retaining large-scale spatial structure. From this period we extract $J=\num{60000}$ paired samples $\{(t_i,x_i),(t_j,x_j)\}$, where $x_i$ denotes the geopotential height field at time $t_i$. To ensure temporal locality while still capturing meaningful dynamics, pairs are restricted by $|i-j|\leq 40$, corresponding to a gap of at most ten days. This setup produces a rich dataset spanning multiple seasons and circulation regimes. Prior to training, both geopotential height fields and their associated time indices are standardized to zero mean and unit variance.

Again, for simplicity we employ a fully connected spatiotemporal autoencoder to compress the high-dimensional geopotential height fields and model their temporal evolution. The encoder $e$ flattens each $64\times 128$ field and maps it to a latent vector of dimension $n_z=128$, capturing the dominant spatial structures. The decoder $d$ mirrors this mapping to reconstruct the original grid from the latent code. Temporal evolution is modeled using the same latent mapping $m$ introduced earlier: given a latent code $z(t_1)$ at time $t_1$, the map produces a transformed latent state $m(z(t_1),t_1,t_2)$, which the decoder then reconstructs as the predicted field at $t_2$. In this way, the model again combines spatial compression with temporal propagation in latent space.

Training is performed for \num{1000} epochs using the Adam optimizer with an initial learning rate of $10^{-3}$ and batch size 32. A \texttt{ReduceLROnPlateau} scheduler decreases the learning rate by a factor of 0.7 if the validation loss stagnates for 10 epochs. Of the $J=\num{60000}$ samples, 80\% are used for training and 20\% for testing. 

\Cref{fig:geopotential_dynamics1} provide comparisons between the reconstruction, forecast, and 4D-Var solution against the true values. In this figure, the first row illustrates the true data, the middle row displays the reconstruction using the latent-twin, and the third row presents both the forecast and the 4D-Var solution. Notably, the third row's first column represents the 4D-Var solution at time $t=\num{801}$, constructed using observations from $t = \num{837}$. Similarly, the second column depicts the forecast for $t=\num{837}$ using $t = \num{801}$ as the initial conditions. The reconstructions provide reliable predictions across the spatial and temporal domain, even under the challenges of noise and ill-posedness. It is noteworthy that while the predictions appear smoothed in the southern region, the latent-twin effectively captures sharp features in the northern region. We specifically highlight two regions in the figure marked by a black box and a red box. The features inside the black box in the truth is similar across the two times. Whereas the feature in the red box evolves, with the low-pressure region shrinking in size at time $t = 837$. The latent-twin captures both these features accurately. Furthermore, we observe that the forecast errors of the Latent Twin increase gradually and almost linearly as the lead time grows (see Figure~\ref{fig:error_trend}). The error trend is close to linear, showing that the model maintains a stable and predictable degradation rather than exhibiting abrupt divergence. Importantly, even at long horizons of up to 10 days, the errors remain at a manageable scale, underscoring the consistency and robustness of the Latent Twin’s predictions under challenging forecasting conditions. This behavior highlights the framework’s ability to provide reliable estimates over extended time windows, where traditional approaches often struggle with compounding instabilities.

\begin{figure}[htbp]
    \centering
    \includegraphics[width=0.95\linewidth, trim={1mm 1mm 1mm 1mm},clip]{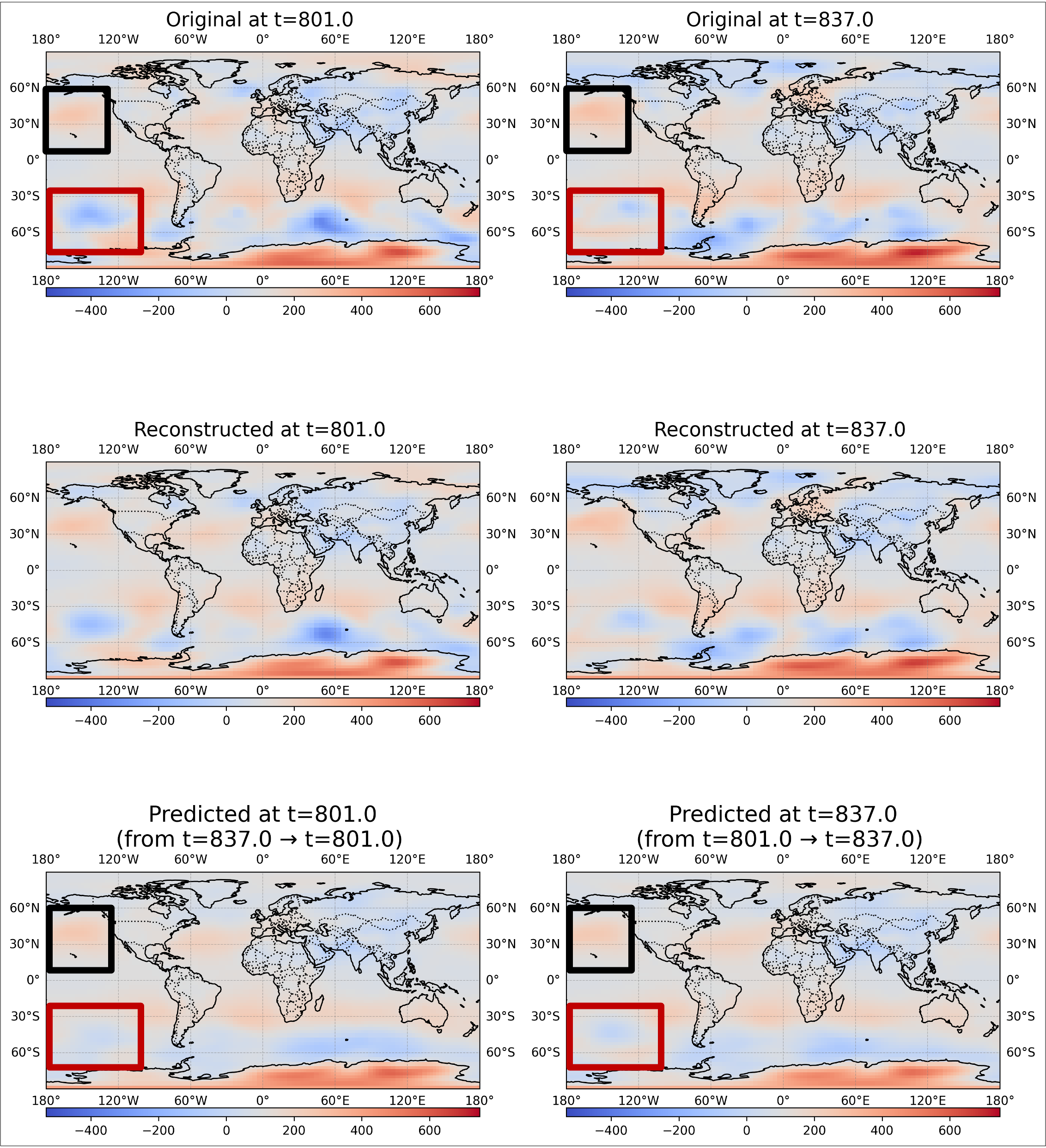}
    \caption{Reconstruction and Predictions with the testing data. The Reconstructions are compared to the truth at $t= 801$ and $t = 837$ with the truth. The predictions from $t=801$ to $t=837$ and vice-versa are compared to the truth. Note that the latent-twin successfully captures features that evolve with time (red box) and those that are relatively unchanged (black box). }
    \label{fig:geopotential_dynamics1}
\end{figure}

\begin{figure}[htbp]
    \centering
    \includegraphics[width=0.45\linewidth]{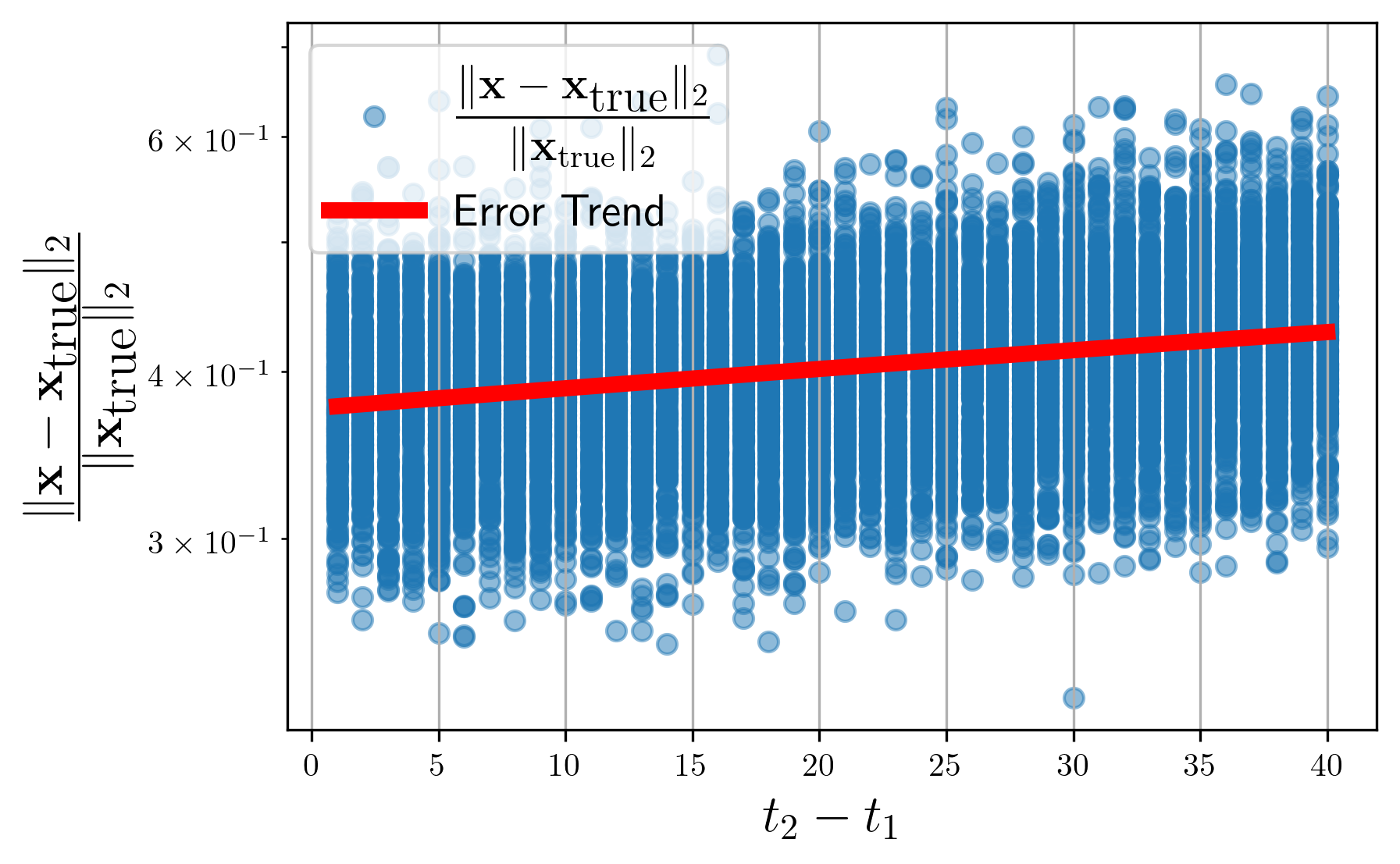}
    \caption{Forecast error distribution across increasing lead times. Errors grow gradually and almost linearly with horizon length, demonstrating that the Latent Twin provides consistent and stable predictions even up to 10 days ahead.}
    \label{fig:error_trend}
\end{figure}

\section{Conclusions \& Outlook} \label{sec:conclusion}
We introduced \emph{Latent Twins}, a representation-driven framework that learns solution operators in a task-adaptive latent space. Theoretically, we established uniform approximation guarantees for ODE/PDE flow-maps and separated spatial discretization from network approximation. Numerically, we showed that Latent Twins (i) capture canonical ODE dynamics in a single-shot across arbitrary time gaps, thus avoiding stepwise error accumulation; (ii) provide efficient PDE surrogates competitive with 4D-Var forecasts on the shallow-water equations; and (iii) reconstruct and forecast geopotential heights from sparse, noisy observations.

Latent twins open a broad spectrum of opportunities that reach far beyond the scope of this paper. One promising direction is to embed classical data assimilation techniques, such as 3D/4D-Var or ensemble filters, directly into the $(e,d,m)$ architecture, enabling fully differentiable pipelines in which background covariances and flow-dependent priors are learned alongside the latent dynamics. The latent map $m$ itself need not remain a generic neural block: it can be endowed with structure that reflects the underlying dynamics. Examples include encoding semigroup properties through exponential maps, enforcing symmetries or conservation laws, or ensuring contractivity for stability—design choices that could dramatically improve robustness, sample efficiency, and maintain consistency with physical models. Extending Latent Twins into the probabilistic realm offers another compelling avenue, by introducing distributions on $(e,d,m)$ or the latent states $z$, yielding Bayesian surrogates, posterior-predictive flow-maps, and score-based priors capable of reconstructing trajectories from sparse or noisy observations. These developments would connect naturally with the rapidly growing operator-learning literature, with Latent Twins serving as an integrative framework that complements existing approaches such as DeepONets and Fourier Neural Operators, and clarifies how representation, decoder expressivity, or time conditioning shape operator identifiability and performance. Latent twins could also be explored in control and inverse design: once $m$ is treated as a differentiable surrogate, it can be embedded into PDE-constrained optimization or closed-loop control, enabling safe and efficient decision-making informed by latent dynamics. Ultimately, advancing both scalable architectures and rigorous theory will be essential for elevating Latent Twins from proof-of-concept models to a foundational tool for large-scale, reliable scientific computing. Taken together, these avenues illustrate that Latent Twins are not merely another surrogate modeling tool, but a unifying paradigm that brings together representation learning and operator approximation with rigorous theory and numerical experimentation into a single coherent framework with wide-ranging potential. Just as Feynman described nature’s tapestry, Latent Twins offer a way to weave data and equations into a common fabric, where each latent thread reflects structure, representation, and dynamics, revealing the deeper organization shared across scientific computing.

\section{Acknowledgements}
This research used resources of the Argonne Leadership Computing Facility, a U.S. Department of Energy (DOE) Office of Science user facility at Argonne National Laboratory and is based on research supported by the U.S. DOE Office of Science-Advanced Scientific Computing Research Program, under Contract No. DE-AC02-06CH11357

\printbibliography

\begin{appendix}
\numberwithin{equation}{section}
\renewcommand{\theequation}{A.\arabic{equation}}

\section{Details on the LSTM comparison}\label{app:lstm}

We provide additional comparisons between LSTM baselines and ground-truth trajectories across all four benchmark systems: \textit{Harmonic Oscillator}, \textit{Lotka--Volterra}, \textit{SIR}, and \textit{Lorenz--63}. Each LSTM is trained in a fixed-window many-to-one configuration (here 10-to-1), mapping $\big(x(t_{k-9}), \dots, x(t_k)\big) \mapsto x(t_{k+1})$, using a history of 10 timesteps. No explicit time encoding is supplied; the network must infer dynamics purely from state transitions. Forecasts over long horizons are obtained by recursively applying the model, which inevitably compounds errors. To ensure a ``capacity-constrained'' baseline, the LSTM hidden dimensions are chosen such that the parameter counts match the low-complexity regime similar to that of the Latent Twin. Specifically, we obtain 582 parameters for all 2D systems and 855 parameters for 3D systems. These counts are still \emph{larger} than those of the corresponding Latent Twin architectures, which use only 246 (2D) and 267 (3D) parameters.

Training data is generated over the same time intervals as in the Latent Twin experiments. However, to expose the LSTM to sufficient variation within its 10-step input window, we increase the timestep $\delta t$ and generate \num{1000} samples. From the resulting trajectory, overlapping 10-step windows are extracted, with an 80/20 split between training and validation. Models are trained for \num{1000} epochs using the Adam optimizer with learning rate $10^{-3}$.

During inference, we perform a full-horizon rollout by recursively applying the trained LSTM, seeded with the first 10 ground-truth states. This setup allows us to assess both short-term accuracy and long-term error growth. The LSTMs achieve good short-term accuracy, particularly in oscillatory systems such as the harmonic oscillator and Lotka--Volterra. However, recursive prediction leads to phase drift, amplitude distortion, and error accumulation over longer horizons—effects that are especially pronounced in chaotic Lorenz--63 dynamics and in the sensitivity of the SIR model.

By contrast, the Latent Twin framework directly approximates the solution operator in a single forward pass. This design avoids recursive stepping and yields more robust long-term predictions with fewer parameters. These comparisons are intended as illustrative rather than competitive: LSTMs and Latent Twins embody fundamentally different modeling philosophies—sequential transition versus operator learning.

\begin{figure}[!htb]
    \centering
    \includegraphics[width=0.48\textwidth]{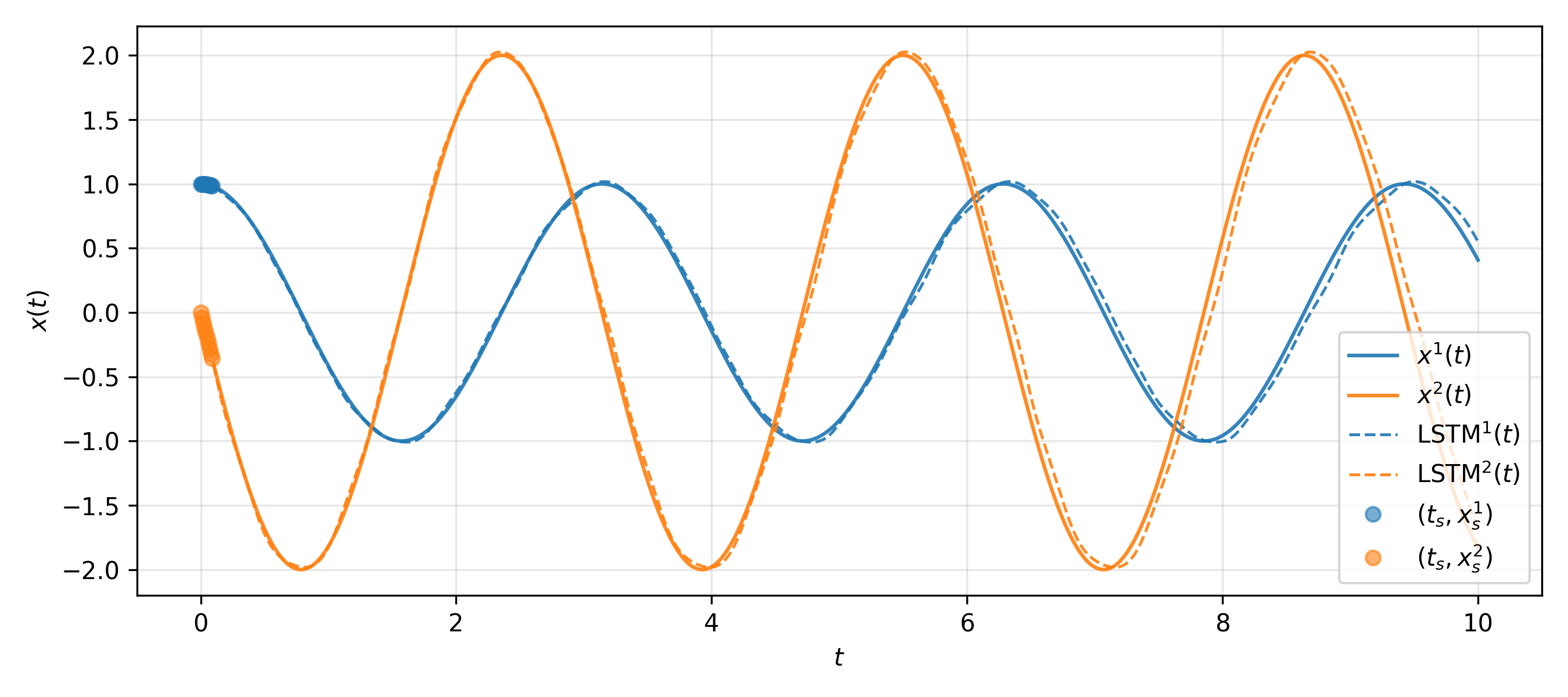}    
    \includegraphics[width=0.48\textwidth]{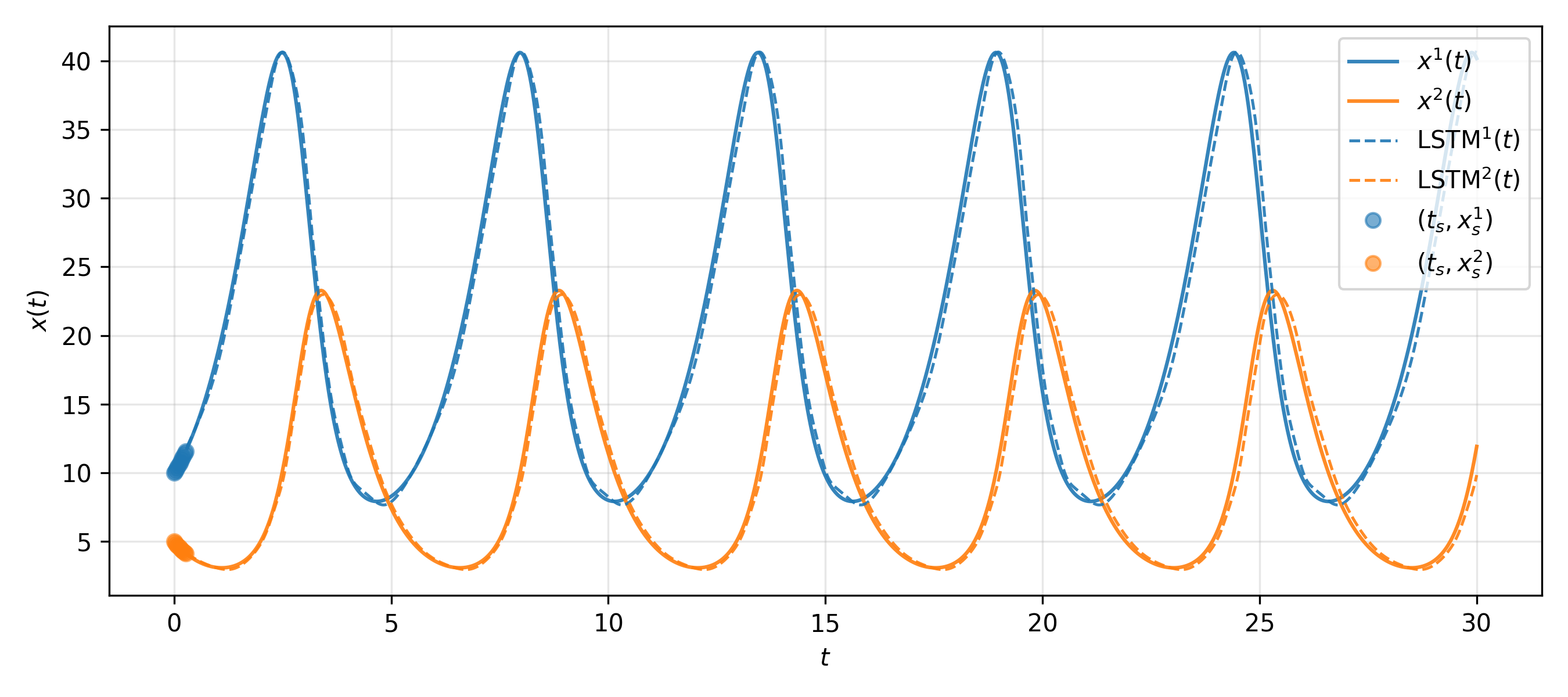}    
    \includegraphics[width=0.48\textwidth]{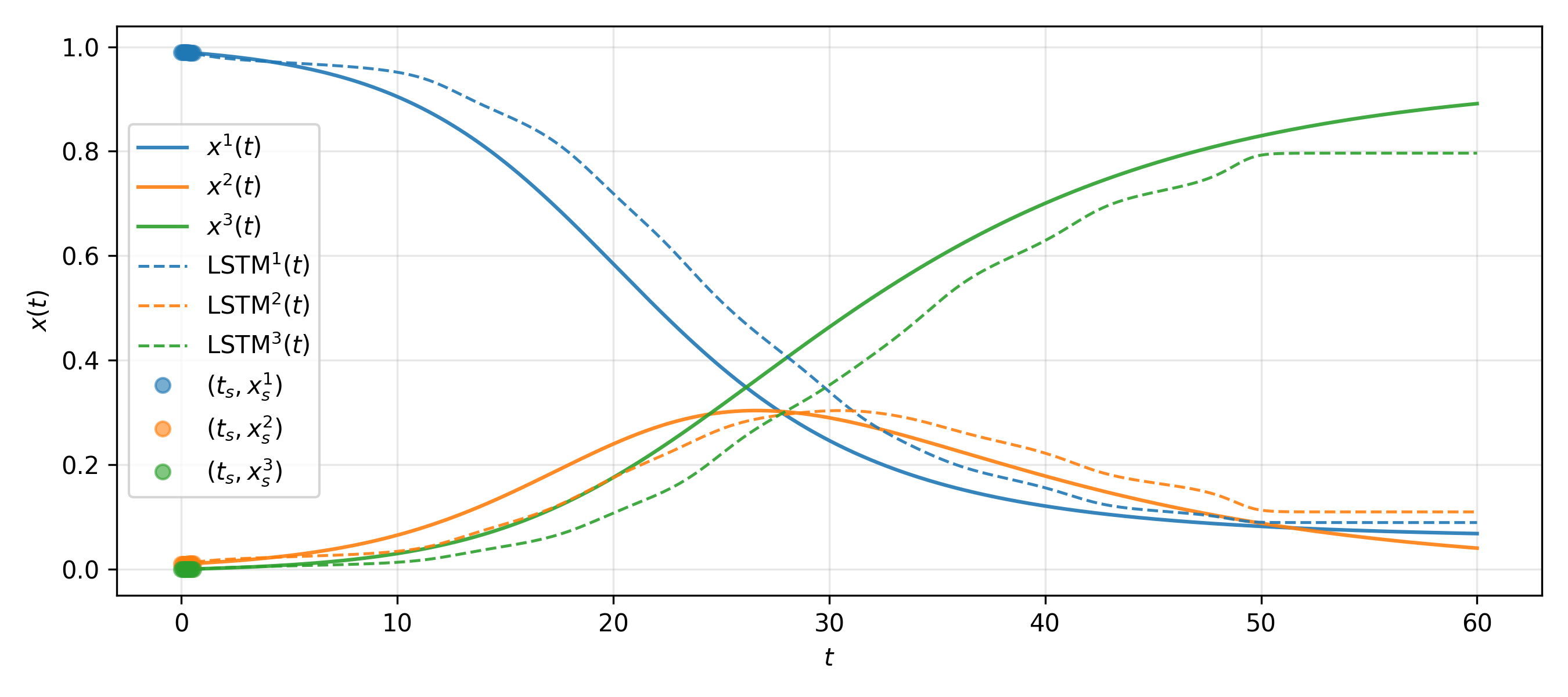}
    \includegraphics[width=0.48\textwidth]{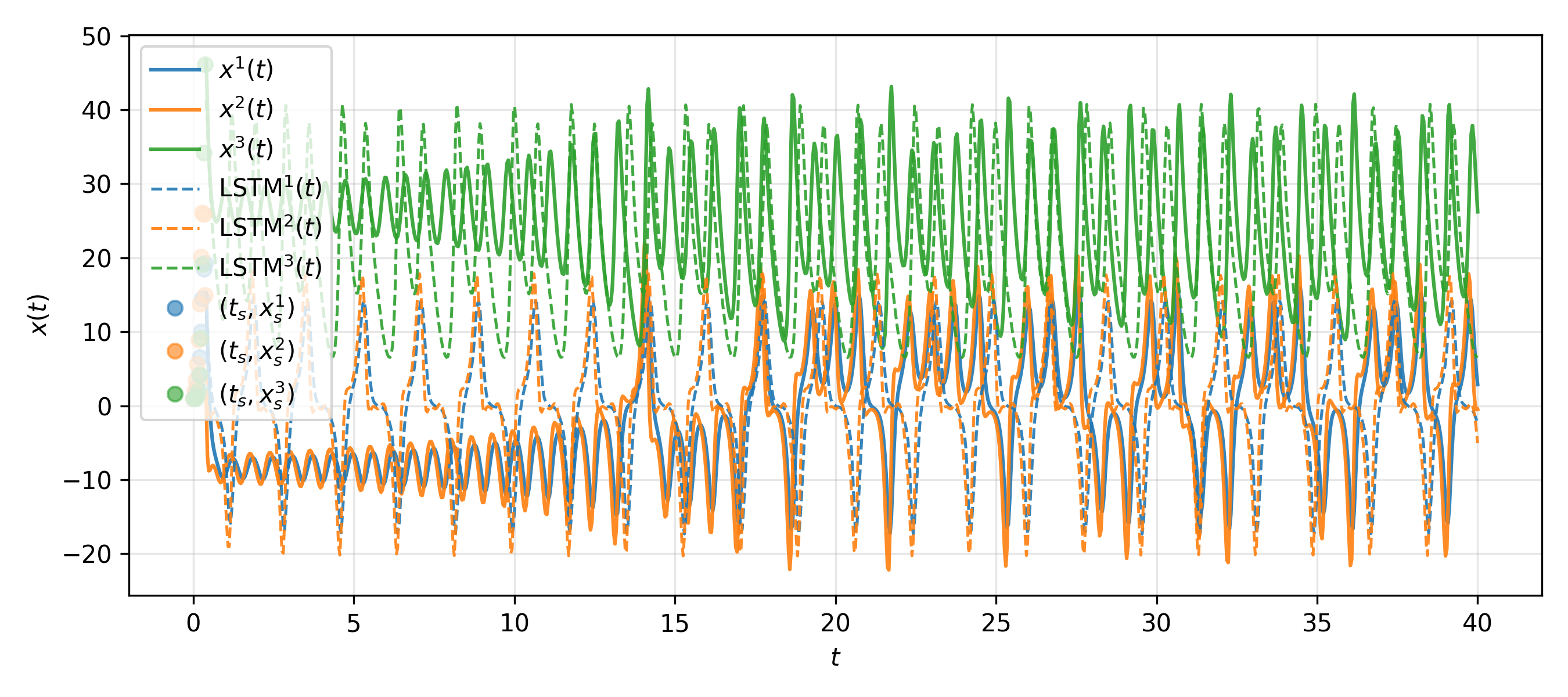}
    \caption{LSTM predictions (dashed) compared to ground truth trajectories (solid) for all systems under a 10-to-1 training regime. While short-term accuracy is reasonable under limited resources, recursive prediction leads to increasing phase and amplitude distortion over time.}
    \label{fig:lstm-dynamics}
\end{figure}

\begin{figure}[!htb]
    \centering
    \includegraphics[width=0.48\textwidth]{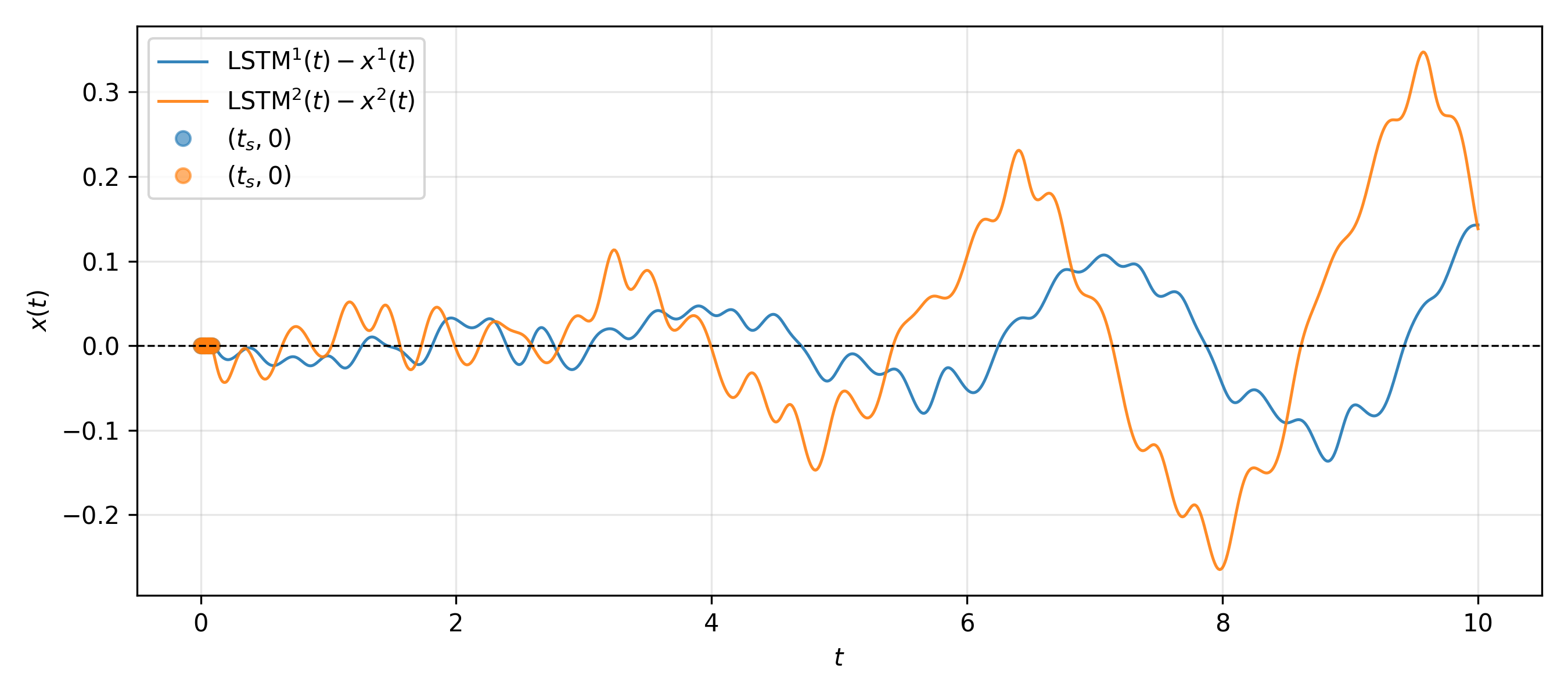}    
    \includegraphics[width=0.48\textwidth]{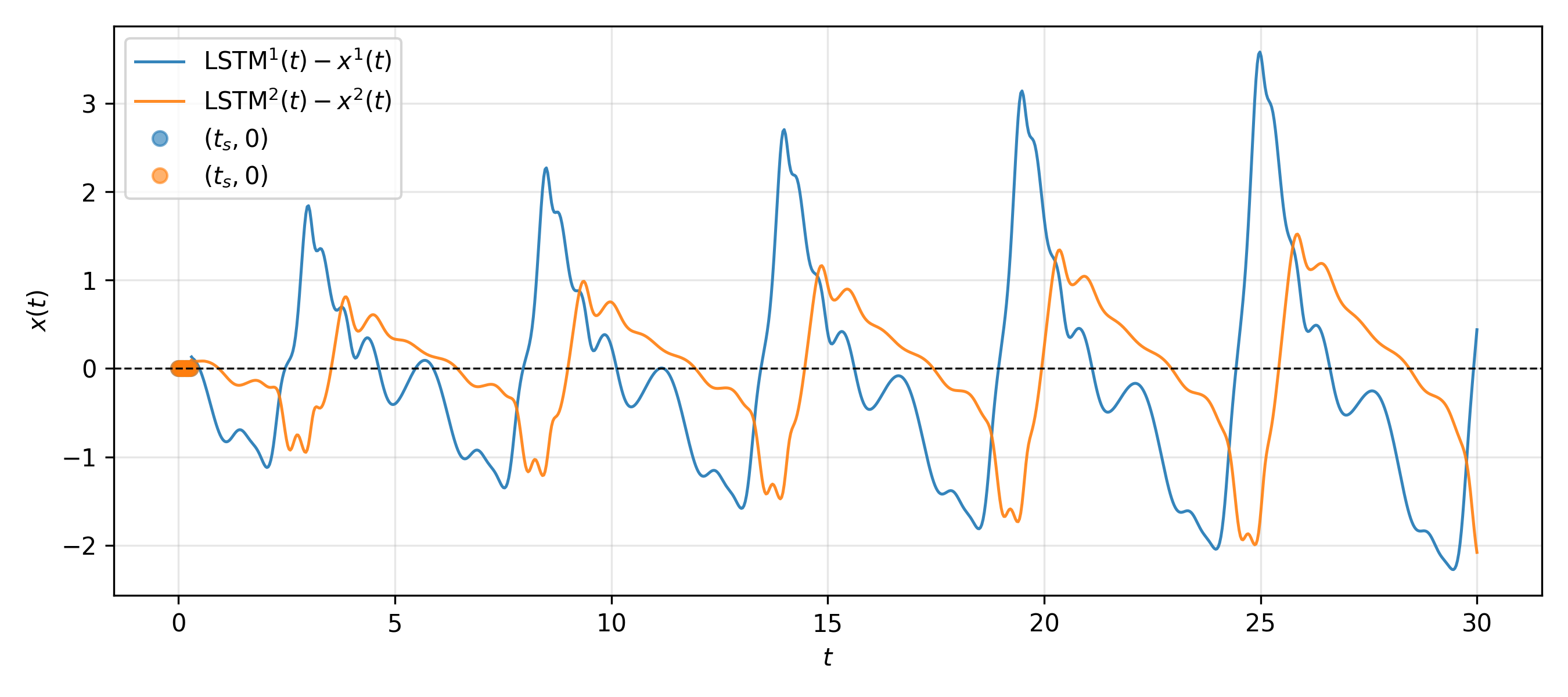}    
    \includegraphics[width=0.48\textwidth]{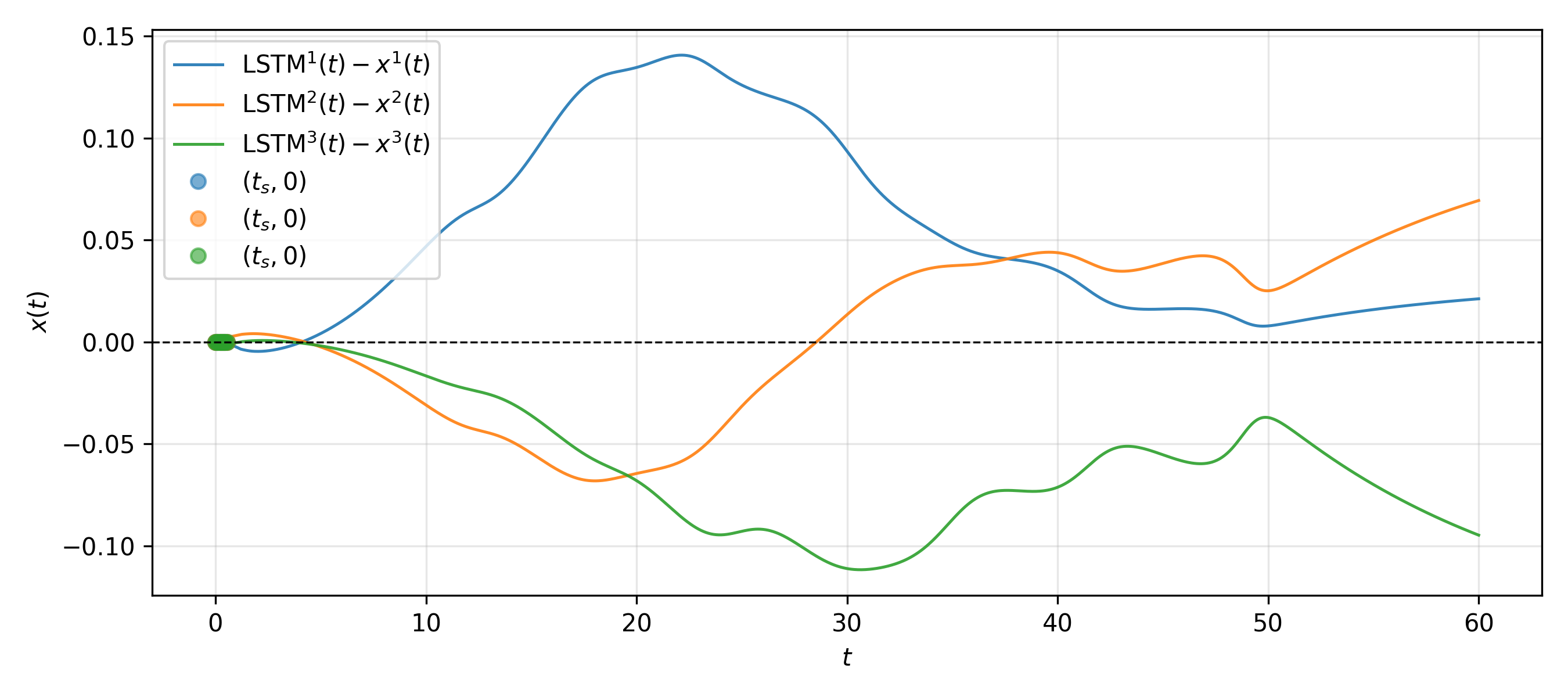}    
    \includegraphics[width=0.48\textwidth]{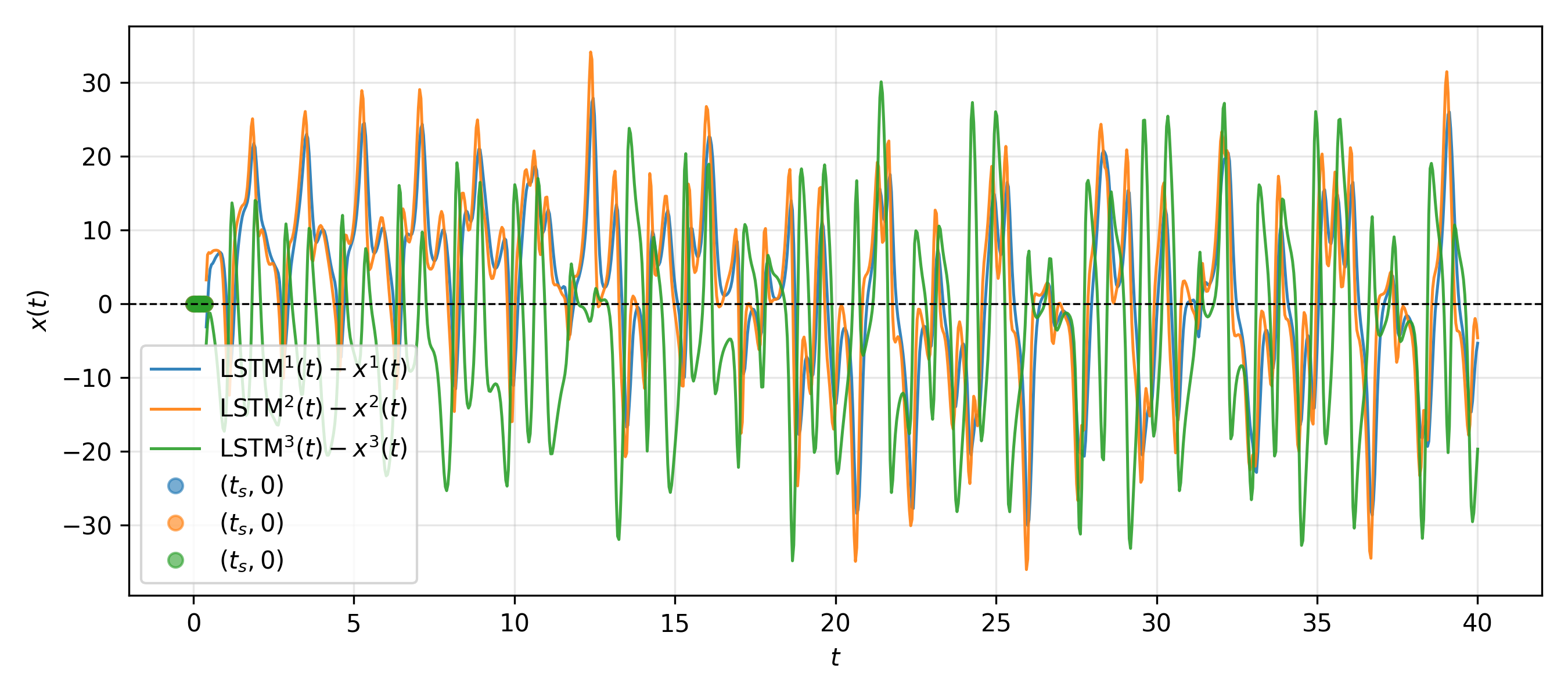}
    \caption{Component-wise prediction errors (LSTM vs.~ground truth) over time. All models are compact (582–855 parameters), but error accumulation increases with horizon.}
    \label{fig:lstm-errors}
\end{figure}    

\section{Details of the Shallow Water Experiments}\label{app:swe-details}
\numberwithin{equation}{section} 
\renewcommand{\theequation}{B.\arabic{equation}}

The shallow water equations (SWE) describe the depth-averaged dynamics of fluid motion and serve as a canonical testbed for geophysical flows. In our experiments we adopt the partially nonlinear formulation
\begin{equation}\label{eq:swe_initial}
\begin{aligned}
    \eta_t &+ \partial_{\chi_1}\big((\eta + H) u\big) + \partial_{\chi_2}\big((\eta + H) v\big) = 0, \\
    u_t &- f v = - g \,\partial_{\chi_1}\eta, \\
    v_t &+ f u = - g \,\partial_{\chi_2}\eta,
\end{aligned}
\end{equation}
where $\eta(\chi_1,\chi_2,t)$ denotes the free surface elevation, $u(\chi_1,\chi_2,t)$ and $v(\chi_1,\chi_2,t)$ are the horizontal velocities, and $h = \eta + H$ is the total column depth. The physical parameters are gravitational acceleration $g = 9.81$~m/s$^2$, mean depth $H = 100$~m, and Coriolis forcing under the $\beta$-plane approximation
\[
    f(\chi_2) = f_0 + \beta \chi_2,
    \quad f_0 = 10^{-4}\,\text{s}^{-1}, 
    \quad \beta = 2\times 10^{-11}\,\text{m}^{-1}\text{s}^{-1}.
\]

The spatial domain is chosen as $\Omega = [-5\!\times\!10^5, 5\!\times\!10^5]^2$ meters squared. All four boundaries are reflective, with the normal velocity set to zero and $\eta$ satisfying homogeneous Neumann conditions. These choices enforce impermeable walls and conserve mass within the domain. Initial conditions are constructed as randomized Gaussian perturbations of the free surface,
\[
    \eta(\chi_1,\chi_2,0) = \exp\!\Big( -\tfrac{(\chi_1-\mu_{\chi_1})^2 + (\chi_2-\mu_{\chi_2})^2}{2\sigma^2} \Big),
\]
with centers $(\mu_{\chi_1},\mu_{\chi_2})$ drawn uniformly from $\Omega$ and standard deviation $\sigma = 5\times 10^4$ m, while the initial velocity field is set to zero. This stochastic initialization produces a wide variety of wave patterns, ensuring that the dataset spans diverse dynamical behaviors.

Numerical integration is carried out using an open-source Python implementation~\cite{braendshoi2019shallow}. The solver employs a finite-difference scheme on a $64\times 64$ Cartesian grid, corresponding to spatial resolution $\Delta x \approx 1.6\times 10^4$ m. The time step $\Delta t \approx 51$~s satisfies the CFL condition, and trajectories are simulated up to $T \approx 3\times 10^4$~s (600 steps). From these trajectories, we construct a dataset by sampling random time pairs $t_1,t_2 \in [0,T]$ and recording the corresponding states
\[
    x(t) = [\eta(\cdot,t), u(\cdot,t), v(\cdot,t)] \in \R^{3\times 64\times 64}.
\]
This procedure yields $J=32{,}768$ paired samples, split into $29{,}491$ for training and $3{,}277$ for testing. All channels are standardized to zero mean and unit variance.

The Latent Twin model comprises a multilayer perceptron autoencoder and a simple latent mapping. The autoencoder takes flattened input vectors in $\R^{12{,}288}$, compresses them into a 128-dimensional latent space via fully connected layers $12{,}288 \to \num{1024} \to 256 \to 128$ with ReLU activations, and reconstructs through a mirrored decoder $128 \to 256 \to \num{1024} \to 12{,}288$. The latent mapping $m_\theta(z,t_1,t_2)$ is implemented as a single affine linear layer conditioned on $(t_1,t_2)$, mapping $\R^{128}\times\R^2 \to \R^{128}$. Training minimizes the empirical risk in \Cref{eq:empirical}, balancing reconstruction and temporal prediction losses. Optimization uses Adam with learning rate $10^{-3}$, batch size 64, and 1,000 epochs, with the rate halved every 250 epochs. The training loss is shown in \Cref{fig:swe_loss_appendix}, where sharper decreases coincide with scheduled learning-rate reductions.

Latent twin experiments were performed on a 2020 MacBook Pro (Apple M1 chip, 16 GB unified memory) using PyTorch with the Metal Performance Shaders (MPS) backend. Training was conducted in single precision. All runs used a fixed random seed (42) for data splitting and weight initialization. The autoencoder has approximately $ 2.63\times 10^7$ parameters and the latent map about $1.7\times 10^4$. The model trains efficiently on modest hardware: each epoch required on average $\sim 32$ seconds, so 1,000 epochs completed in under 9 hours. This demonstrates that the framework is computationally lightweight and does not rely on specialized high-performance resources.

\begin{figure}[ht]
    \centering
    \includegraphics[width=0.55\linewidth]{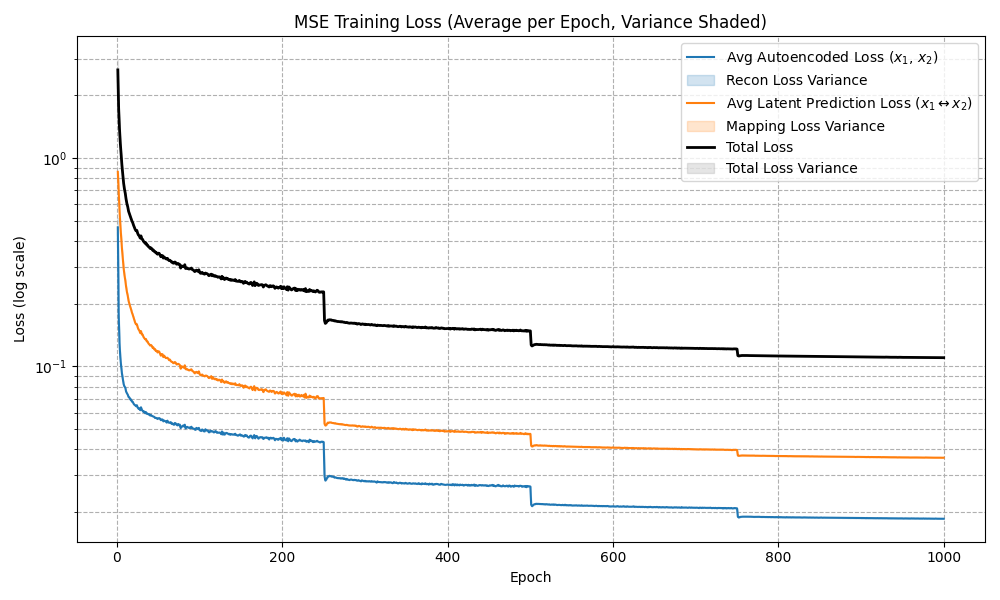}
    \caption{\emph{Training curves for the SWE Latent Twin:} empirical risk decreases steadily, with noticeable drops at each scheduled learning-rate reduction.}
    \label{fig:swe_loss_appendix}
\end{figure}

\begin{figure}
    \centering
  \includegraphics[width=1\linewidth,
    trim=0 375 0 0,   % <left> <bottom> <right> <top>
    clip]{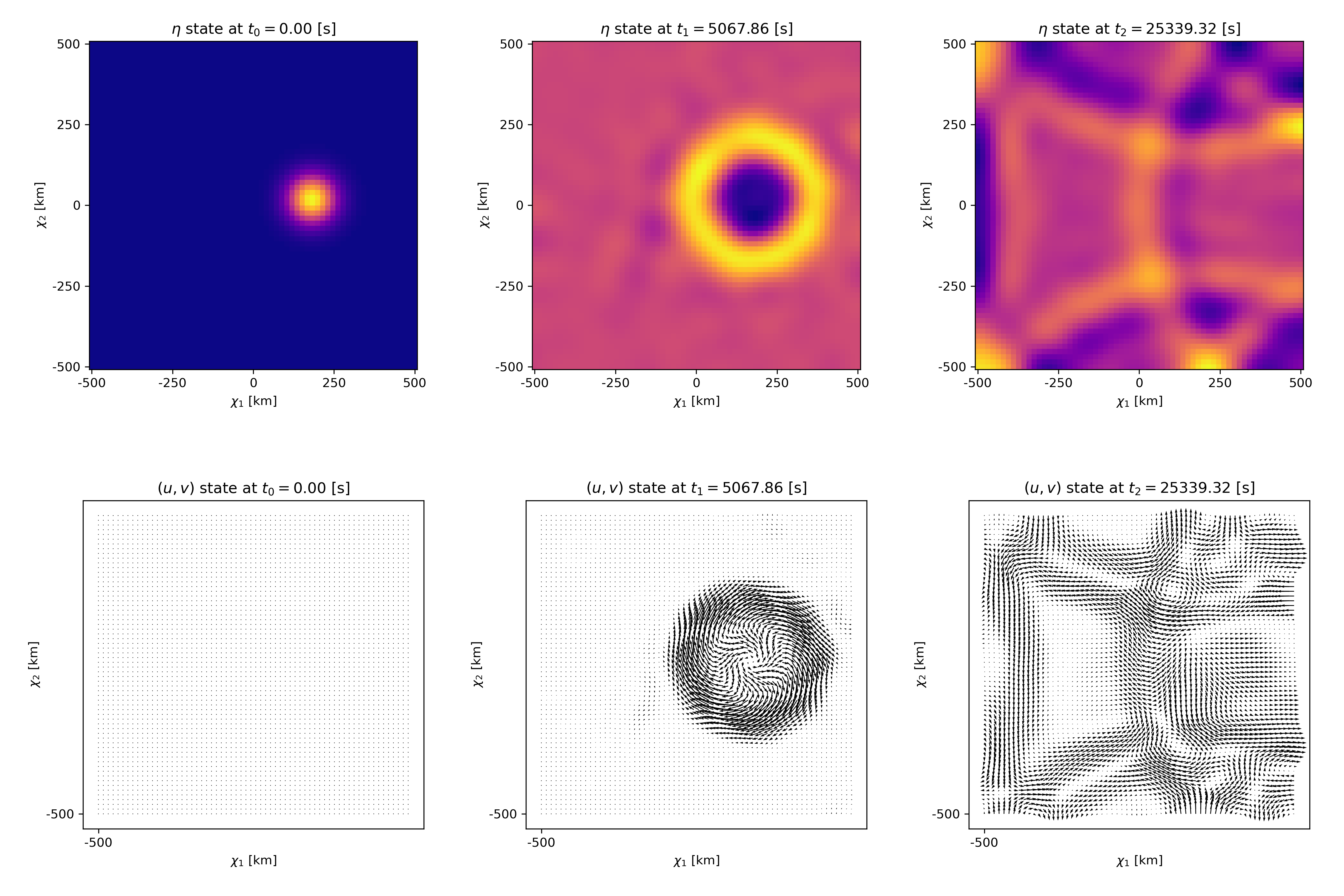}%        
    \caption{\emph{Predictions of surface elevations $\eta$:} The first image shows the current state at $t_0$, while the middle image displays the Latent Twin predicted state at $t1$ while the third image displays Latent Twin predicted state at $t_2$. The forward predictions may be directly compared to the DeepONet predictions shown in \Cref{fig:deepONet}, where the Latent Twin exhibits superior visual accuracy.}
    \label{fig:swe_pred12}
\end{figure}

To emulate realistic observations, we apply spatial downsampling by factors of $2,4,8$ and add Gaussian noise $\epsilon \sim \mathcal{N}(0,0.01I)$, producing observations
\[
    y_{\text{obs}}(t_1) = P(x(t_1)) + \epsilon.
\]
A latent state $\hat z(t_1)$ consistent with $y_{\text{obs}}(t_1)$ is then recovered by solving
\[
    \hat z(t_1) = \arg\min_z \| P(d_\theta(z)) - y_{\text{obs}}(t_1) \|^2,
\]
initialized by encoding an interpolated version of the coarse observation. Once $\hat z(t_1)$ is inferred, the Latent Twin propagates it to arbitrary times $t_2$, enabling reconstruction and prediction from sparse and noisy observations.

\emph{4D-Var.} For completeness, we summarize the implementation of the strong-constraint four-dimensional variational (4D-Var) data assimilation method used in \Cref{sub:swe}. The goal is to estimate the state $\hat x_2$ that minimizes a quadratic cost functional balancing background and observation information, subject to the SWE dynamics. Specifically, the optimization problem reads
\begin{equation}\label{eq:swe-4dvar}
    \hat x_2
    = \argmin_{{x}_0 \in \R^n} 
      \tfrac{1}{2} (x_0 - x_0^b)^\top \Gamma^{-1} (x_0 - x_0^b)
      + \tfrac{1}{2} \sum_{i=1}^{n_t} 
        \big(P(\Phi(t_i,t_0,x_0)) -y^{i}_{\text{obs}}\big)^\top 
        \Omega_i^{-1}
        \big(P(\Phi(t_i,t_0,x_0)) - y^{i}_{\text{obs}} \big).
\end{equation}
Here $x_0^b$ is the background state, $\Gamma$ is the background error covariance, $\Omega_i$ are observation error covariances, $P$is the observation operator (chosen as the same downsampling $P$ used in the noisy-observation experiments), and $\Phi$ is the SWE solution operator. Observation errors were drawn from the same distribution as in the latent-twin experiments to ensure comparability.

The SWE were discretized using second-order centered finite differences in space and integrated in time with the third-order TVD Runge--Kutta (TVD-RK3) scheme of \cite{GottliebShu1998}. The discrete tangent–linear and adjoint models were constructed following the methodology of \cite{Sandu2006}, enabling gradient computation with respect to the initial condition.

The cost functional \Cref{eq:swe-4dvar} was minimized using L-BFGS, with termination once the relative gradient norm dropped below $10^{-3}$. The background covariance $\Gamma$ was chosen as a scaled inverse Laplacian, block-diagonal across $(\eta,u,v)$, to encode spatial correlation structure. 

This 4D-Var setup represents a standard baseline in geophysical data assimilation. It relies only on local model dynamics and a single assimilation window, in contrast to the Latent Twin which leverages a large offline dataset over the full horizon. The comparison in \Cref{fig:swe_4dvar} is therefore illustrative of complementary strengths: 4D-Var integrates physics and covariance priors in a principled optimization framework, while Latent Twins provide fast, global surrogates that remain effective for reconstruction and forecasting beyond the assimilation window. In practice, the two approaches can be combined: a latent-twin forecast can serve as a flow-dependent background for 4D-Var or ensemble filters, while DA provides principled uncertainty quantification for latent-space inferences.

\emph{DeepONet.} For the deep operator setup, we numerically solve the SWE in the same manner as for Latent Twin. We simulate \num{4096} trajectories of the SWE where the initial condition is sampled as in the Latent Twin experiment. We create a train and test dataset by selecting $95\%$ of the trajectories for training and reserving the rest for testing. From these trajectories, we create a dataset by pairing initial conditions together with the discretized evaluation of full solution. That is, our dataset consists of pairs $(x_0, x_k)$ where
\[
x_0 = x(0) = \big[\eta(\,\cdot\,,0),\, u(\,\cdot\,,0),\, v(\,\cdot\,,0)\big] 
   \;\in\; \mathbb{R}^{3\times 64\times 64},
\]
and the trajectory is
\[
x_k = \{x(t_k)\}_{k=0}^{600}
  = \big\{ \big(\eta(\,\cdot\,,t_k),\, u(\,\cdot\,,t_k),\, v(\,\cdot\,,t_k)\big) \big\}_{k=0}^{600} 
  \;\in\; \mathbb{R}^{601 \times 3 \times 64 \times 64},
\]
with $t_k = k\Delta t$.
Due to memory constraints, at training time, we randomly sample $x$ at 10,000 points to approximate the loss between the model and the full solution of the SWE subject to the sampled initial condition. Randomly sampling evaluation points has been proposed as a method of reducing memory requirements while achieving lower generalization error \cite{karumuri2024efficient}. To construct the DeepONet, we use a feed-forward neural network for both the branch and trunk networks. The \emph{branch network} accepts the input vector and compresses it into a 128-dimensional vector with layers $\num{12288}\to 512\to 512\to 256 \to 128$ using a ReLU activation function. The \emph{trunk network} accepts the 3-dimensional input $(\chi_1, \chi_2, t)$ and encodes it into a 128-dimensional vector via layers $3\to 64 \to 128 \to 128$ with ReLU nonlinearities.

In total, the model contains approximately $6\times 10^6$ parameters. We train the DeepONet for 500 epochs with batch sizes of 16. Optimization is performed using Adam with a learning rate of $10^{-3}$. Each batch takes roughly 75 seconds to compute, thus, training for 500 epochs takes roughly 9 hours.

\section{Table of Mathematical Notation}
\label{sec:notation}
\numberwithin{equation}{section} 
\renewcommand{\theequation}{C.\arabic{equation}}

These tables summarizes the key mathematical symbols and notation used throughout this paper.

\begin{center}
\small
\begin{longtable}{p{0.19\textwidth} p{0.42\textwidth} p{0.29\textwidth}}
\toprule
\textbf{Symbol} & \textbf{Description} & \textbf{Context} \\
\midrule
\endfirsthead

\multicolumn{3}{@{}l}{\small\textit{Notation — continued from previous page}}\\
\toprule
\textbf{Symbol} & \textbf{Description} & \textbf{Context} \\
\midrule
\endhead

\midrule
\multicolumn{3}{r}{\small\textit{Continued on next page}} \\
\endfoot

\bottomrule
\endlastfoot
\multicolumn{3}{l}{\textit{\textbf{Spaces, Sets, and Domains}}} \\
$X, Y$ & Normed vector spaces (input / output) & General framework \\
$Z, Z_x, Z_y$ & Latent spaces & Latent Twins \\
$\mathcal{X}$ & Function space for PDEs & PDE theory \\
$\mathcal{K}$ & Compact (forward/backward) invariant set in $\mathcal{X}$ & PDE well-posedness \\
$X_N := P_N(\mathcal{K})$ & Discrete state set after spatial discretization & PDE discretization \\
$[0,T],\, \mathcal{T}$ & Time interval / domain & Dynamics \\
$\mathcal{C}$ & Constraint set & Projected gradient descent \\[0.35em]

\multicolumn{3}{l}{\textit{\textbf{Core Latent Twin Components}}} \\
$F^{\to},\, F^{\gets}$ & True forward / inverse operators of interest & General definition \\
$f^{\to},\, f^{\gets}$ & Latent Twin surrogates of $F^{\to}, F^{\gets}$ & General definition \\
$e_x, e_y$ & Encoders $X\!\to\! Z_x$, $Y\!\to\! Z_y$ & Paired spaces \\
$d_x, d_y$ & Decoders $Z_x\!\to\! X$, $Z_y\!\to\! Y$ & Reconstruction \\
$e, d$ & Single-space encoder/decoder $X\!\leftrightarrow\! Z$ & ODE/PDE dynamics \\
$m^{\to}, m^{\gets}$ & Latent maps $Z_x\!\to\! Z_y$, $Z_y\!\to\! Z_x$ & Paired operators \\
$m(z,t_1,t_2)$ & Latent evolution map between times & Dynamics \\
$m^\ast(z,t_1,t_2)$ & Exact latent flow & Theory (Assumption 3) \\
$\Lambda_\theta = d \circ e$ & Autoencoder (parameters $\theta$) & Reconstruction \\
$\LT[t_1][x_1](t_2)$ & Latent Twin evaluation $(d\!\circ\! m(\cdot,t_1,t_2)\!\circ\! e)(x_1)$ & Dynamics \\
$\LTN$ & Discrete Latent Twin on $\mathbb{R}^{N}$ & PDE discretization \\
$\tildeLTN$ & Latent Twin lifted back to $\mathcal{X}$ via $R_N$ & PDE discretization \\
$\Phi(t_2,t_1,x_1)$ & True flow (solution) operator & Dynamics \\
$\Pi_{\mathcal{C}}$ & Projection operator onto constraint set & Optimization \\[0.35em]

\multicolumn{3}{l}{\textit{\textbf{Time and States}}} \\
$x(t),\, x_1, x_2$ & States at $t, t_1, t_2$ & Dynamics \\
$\tilde{x}, \tilde{z}_x, \tilde{z}_y$ & Reconstructed/predicted states & Figures/experiments \\
$z(t),\, z_1, z_2$ & Latent states & Encoded dynamics \\
$t_1, t_2$ & Start/target times & Dynamics \\
$n_x, n_z, N$ & State dim., latent dim., spatial d.o.f. & Dimensions \\[0.35em]

\multicolumn{3}{l}{\textit{\textbf{Training and Parameters}}} \\
$\theta$ & Network parameters of $(e,d,m)$ & Learning \\
$\mathcal{L}_{\text{loss}}$ & Loss function (reconstruction / prediction) & Risk in \Cref{eq:bayesrisk}, \Cref{eq:empirical} \\
$\mathbb{E}$ & Expectation (Bayes risk) & Training objective \\
$J$ & Number of training pairs & Dataset size \\[0.35em]

\multicolumn{3}{l}{\textit{\textbf{Errors and Constants}}} \\
$\varepsilon_{\mathrm{AE}}$ & Autoencoder reconstruction error & Theory \\
$\varepsilon_{\mathrm{map}}$ & Latent map approximation error & Theory \\
$\varepsilon_{\text{disc}}(N)$ & Spatial discretization error & PDE theory \\
$L_e,\, L_d$ & Lipschitz constants of $e$ and $d$ & Theory \\
$L_G$ & Flow Lipschitz constant (ODE) & Theory \\
$L_{\mathcal{K}}$ & Flow Lipschitz constant on $\mathcal{K}$ (PDE) & Theory \\
$C$ & Bound/stability constant & PDE assumptions \\
$a(t), b$ & Growth bound functions/constants & Well-posedness \\
$\operatorname{id}$ & Identity map & Utilities \\[0.35em]

% \\
% \bottomrule
% \end{longtable}
% \end{center}

% \begin{center}
% \small
% \begin{tabular}{p{0.19\textwidth} p{0.42\textwidth} p{0.29\textwidth}}
% \toprule
% \textbf{Symbol} & \textbf{Description} & \textbf{Context} \\
% \midrule

\multicolumn{3}{l}{\textit{\textbf{ODE Systems}}} \\
$G(x,t)$ & Vector field in $x'(t)=G(x(t),t)$ & ODE definition \\
$\Phi(t_2,t_1,x_1)$ & Flow: solution at $t_2$ given $(t_1,x_1)$ & ODE theory \\[0.35em]

\multicolumn{3}{l}{\textit{\textbf{Benchmark ODE Parameters}}} \\
$\omega_0$ & Frequency parameter & Harmonic oscillator \\
$S, I, R$ & Susceptible, Infected, Recovered populations & SIR model \\
$\alpha, \beta, \gamma, \delta$ & Rate parameters & Lotka-Volterra \\
$\sigma, \rho, \beta$ & Parameters & Lorenz-63 system \\[0.35em]

\multicolumn{3}{l}{\textit{\textbf{Linear ODE Reduction}}} \\
$M$ & System matrix in $x' = Mx$ & Linear dynamics \\
$U \in \mathbb{R}^{n\times r}$ & Orthonormal basis ($U^\top U = I_r$) & Projection / POD \\
$W \in \mathbb{R}^{r\times r}$ & Learned latent generator & Structured $m$ \\
$\exp\!\big((t_2{-}t_1)W\big)$ & Latent semigroup used in $m$ & Physics-informed map \\[0.35em]

\multicolumn{3}{l}{\textit{\textbf{Koopman Operators}}} \\
$\mathscr{K}$ & Koopman operator & Nonlinear dynamics \\
$\mathfrak{K}$ & Linear latent evolution operator & Koopman autoencoders \\
$\Phi(x_t)$ & Discrete-time dynamical system map & Koopman theory \\
$g: \mathbb{R}^n \to \mathbb{C}$ & Observable function & Koopman analysis \\[0.35em]

\multicolumn{3}{l}{\textit{\textbf{PDE Theory and Discretization}}} \\
$u,\, u_0$ & PDE solution and initial state & Function space \\
$\mathcal{L}_{\text{PDE}}(u,t)$ & Abstract PDE operator & Evolution eq. \\
$P_N,\, R_N$ & Discretization / reconstruction ops. & Finite elements / FD \\
$\Phi_N$ & Discrete flow on $\mathbb{R}^{N}$ & Semi-discrete dynamics \\
$\LTN,\, \tildeLTN$ & Discrete and lifted Latent Twins & PDE surrogate \\
$\Delta t, \Delta x$ & Time and spatial discretization steps & Numerical methods \\[0.35em]

\multicolumn{3}{l}{\textit{\textbf{Observations \& Data Assimilation}}} \\
$P$ & Observation/downsampling operator & SWE \& DA experiments \\
$y_{\text{obs}}$ & Observed data $P(x)+\epsilon$ & Noisy/coarse obs. \\
$\epsilon$ & Observation noise & Experiments \\
$x_0^b$ & Background (prior) state & 4D-Var \\
$\Gamma$ & Background covariance & 4D-Var \\
$\Omega_i$ & Observation covariance at time $t_i$ & 4D-Var \\[0.35em]

\multicolumn{3}{l}{\textit{\textbf{Shallow Water Equations (SWE)}}} \\
$\eta(\chi_1,\chi_2,t)$ & Free surface elevation & SWE state \\
$u(\chi_1,\chi_2,t),\, v(\chi_1,\chi_2,t)$ & Horizontal velocities & SWE state \\
$H,\, h=\eta+H$ & Mean depth / total depth & Parameters / state \\
$g$ & Gravitational acceleration & Parameter \\
$f(\chi_2)=f_0+\beta \chi_2$ & Coriolis parameter ($\beta$-plane) & Forcing \\
$f_0, \beta$ & Coriolis constants & SWE parameters \\
$\chi_1,\chi_2$ & Spatial coordinates & Domain \\
$\mu_{\chi_1}, \mu_{\chi_2}, \sigma$ & Gaussian perturbation parameters & Initial conditions \\[0.35em]

\multicolumn{3}{l}{\textit{\textbf{Mathematical Utilities}}} \\
$\exp(hW), \mathrm{e}^{(\cdot)}$ & Matrix exponential & Linear flows \\
$\|\cdot\|,\ \|\cdot\|_{\mathrm{F}}$ & Vector / Frobenius norms & Errors \& matrices \\
$\argmin$ & Argument of the minimum & Optimization \\
$C(K,\mathbb{R}^m)$ & Continuous functions on compact $K$ & Approximation classes \\
$\mathcal{E},\,\mathcal{D},\,\mathcal{M}$ & Function classes for $e,d,m$ & UAT assumptions \\
$\mu$ & Parameter in Poisson equation & Example PDE 
\\
% \bottomrule
\end{longtable}
\end{center}

\end{appendix}

\end{document}